\documentclass[12pt]{IEEEtran}
\onecolumn

\topmargin=-0.5in \headsep=0.5in \oddsidemargin=0.0in
\textwidth=6.5in
\textheight=9.0in
\parskip=1.5ex
\parindent=2ex

\footnotesep=3.0ex

\usepackage[utf8]{inputenc} 
\usepackage[T1]{fontenc}    
\usepackage{url}            
\usepackage{booktabs}       
\usepackage{amsfonts}       
\usepackage{nicefrac}       
\usepackage{microtype}      
\usepackage{xcolor}         


\usepackage{graphicx}
\usepackage{subfigure}
\usepackage{tabularx} 

\usepackage{algorithm,algorithmic}

\usepackage{epsfig,amsmath,amssymb,amstext,amsthm,mathrsfs}
\usepackage{latexsym,graphics,epsf,epsfig,psfrag}
\usepackage{dsfont,color,epstopdf,fixmath}
\usepackage{enumitem}

\usepackage{hhline}

\usepackage{threeparttable}
\usepackage{makecell}

\usepackage{amsfonts}       
\usepackage{microtype}      
\usepackage{subfigure}
\usepackage{mathtools}
\usepackage{amssymb}
\usepackage{amsthm}

\usepackage{cleveref}

\usepackage{multirow}

\newcommand{\nn}{\nonumber}

\newcommand{\brac}[1]{ \left( #1 \right) }
\newcommand{\Fbrac}[1]{ \left[ #1 \right] }

\newcommand{\twonorm}[1]{ \left\| #1 \right\|_2 }
\newcommand{\ftwonorm}[1]{ \left [\left\| #1 \right\|_2\right] }
\newcommand{\twonormsq}[1]{ \left\| #1 \right\|_2^2 }

\newcommand{\tvnorm}[1]{ \left\| #1 \right\|_{\mathcal{TV}} }
\newcommand{\mynorm}[1]{ \left\| #1 \right\| }
\newcommand{\lbrac}[1]{ \left| #1 \right| }

\newcommand{\varbrac}[1]{ \left\{ #1 \right\} }
\newcommand{\vbrac}[1]{ \left\langle #1 \right\rangle }

\newcommand{\Zeta}{\boldsymbol \zeta}

\newcommand{\actor}{\text{actor}}
\newcommand{\critic}{\text{critic}}
\newcommand{\ca}{\text{CA}}
\newcommand{\fc}{\text{FC}}

\newtheorem{theorem}{Theorem}
\newtheorem{corollary}{Corollary}
\newtheorem{lemma}{Lemma}
\newtheorem{proposition}{Proposition}
\newtheorem{assumption}{Assumption}

\newtheorem{definition}{Definition}

\allowdisplaybreaks[4]


\title{Theoretical Study of Conflict-Avoidant Multi-Objective Reinforcement Learning}

\author{Yudan Wang \quad Peiyao Xiao \quad Hao Ban \quad  Kaiyi Ji\quad Shaofeng Zou
	\thanks{ Yudan Wang and Shaofeng Zou are with the School of Electrical, Computer and Energy Engineering at Arizona State University (email: ywan1645@asu.edu, zou@asu.edu);
 Peiyao Xiao, Han Ban and Kaiyi Ji are with Department of Computer Science and Engineering of the University at Buffalo (email: peiyaoxi@buffalo.edu,haoban@buffalo.edu,kaiyiji@buffalo.edu).
}
}

\begin{document}

\maketitle
\begin{abstract}
Multi-task reinforcement learning (MTRL) has shown great promise in many real-world applications. Existing MTRL algorithms often aim to learn a policy that optimizes individual objective functions simultaneously with a given prior preference (or weights) on different tasks.  However, these methods often suffer from the issue of \textit{gradient conflict} such that the tasks with larger gradients dominate the update direction, resulting in a performance degeneration on other tasks. In this paper, we develop a novel dynamic weighting multi-task actor-critic algorithm (MTAC) under two options of sub-procedures named as CA and FC in task weight updates. MTAC-CA aims to find a conflict-avoidant (CA) update direction that maximizes the minimum value improvement among tasks, and MTAC-FC targets at a much faster convergence rate. We provide a comprehensive finite-time convergence analysis for both algorithms. We show that MTAC-CA can find a $\epsilon+\epsilon_{\text{app}}$-accurate Pareto stationary policy using $\mathcal{O}({\epsilon^{-5}})$ samples, while ensuring a small $\epsilon+\sqrt{\epsilon_{\text{app}}}$-level CA distance (defined as the distance to the CA direction), where $\epsilon_{\text{app}}$ is the function approximation error. The analysis also shows that MTAC-FC improves the sample complexity to $\mathcal{O}(\epsilon^{-3})$, but with a constant-level CA distance. Our experiments on MT10 demonstrate the improved performance of our algorithms over existing MTRL methods with fixed preference. 
\end{abstract}

\section{Introduction}\label{introduction}
Reinforcement learning (RL) has made much progress in a variety of applications, such as autonomous driving, robotics manipulation, and financial trades \cite{ deng2016deep,sallab2017deep,gu2017deep}. Though the progress is significant, much of the current work is restricted to learning the policy for one task
\cite{mulling2013learning, andrychowicz2020learning}. However, in practice, the vanilla RL polices often suffers from performance degradation when learning multiple tasks in a multi-task setting. To deal with these challenges, various multi-task reinforcement learning (MTRL) approaches have been proposed to learn a single policy or multiple policies that maximize various objective functions simultaneously. In this paper, we focus on single-policy MTRL approaches because of their better efficiency. On the other side, the multi-policy method allows each task to have its own policy, which requires high memory and computational cost.
The objective is to solve the following MTRL problem:
\begin{align}
    \max_{\pi}\mathbf{J}(\pi):=(J^1(\pi), J^2(\pi),...,J^K(\pi) ),
\end{align}             
where $K$ is the total number of tasks and $J^k(\pi)$ is the objective function of task $k\in[K]$ given the policy $\pi$.  
Typically, existing single-policy MTRL methods aim to find the optimal policy with the given preference (i.e., the weights over tasks) ). For example, \cite{mannor2001steering} 
developed a MTRL algorithm considering the average prior preference. The MTRL method in \cite{yang2019generalized} trained and saved models with different fixed prior preferences, and then chooses the best model according to the testing requirement. 
However, the performance of these approaches highly depends on the selection of the fixed preference, and can also suffer from the conflict among the gradient of different objective functions such that some tasks with larger gradients dominates the update direction at the sacrifice of significant performance degeneration on the less-fortune tasks with smaller gradients. Therefore, it is highly important to find an update direction that aims to find a more balanced solution for all tasks. 

There have been a large body of studies on finding a conflict-avoidant (CA) direction to mitigate the gradient conflict among tasks in the context of supervised multi-task learning (MTL). For example, multiple-gradient descent algorithm (MGDA) based methods~\cite{chen2023threeway,cheng2023multi} dynamically updated the weights of tasks such that the deriving direction optimizes all objective functions jointly instead of focusing only on tasks with dominant gradients. The similar idea was then incorporated into various follow-up methods such as CAGrad, PCGrad, Nash-MTL and SDMGrad~\cite{yu2020gradient, liu2021conflict, navon2022multi, xiao2023direction}. Although these methods have been also implemented in the MTRL setting, none of them provide a  finite-time performance guarantee. Then, an open question arises as: 

\textit{Can we develop a dynamic weighting MTRL algorithm, which not only mitigates the gradient conflict among tasks, but also achieves a solid finite-time convergence guarantee?}

However, addressing this question is not easy, primarily due to the difficulty in conducting sample complexity analysis for dynamic weighting MTRL algorithms. This challenge arises from the presence of non-vanishing errors, including optimization errors (e.g., induced by actor-critic) and function approximation error, in gradient estimation within MTRL. However, existing theoretical analysis in the supervised MTL requires the gradient to be either unbiased~\cite{xiao2023direction,chen2023threeway} or diminishing with iteration number~\cite{fernando2022mitigating}. 
  As a result, the analyses applicable to the supervised setting cannot be directly employed in the MTRL setting, emphasizing the necessity for novel developments in this context. Our specific contributions are summarized as follows. 

\subsection{Contributions}
In this paper, we provide an affirmative answer to the aforementioned question by proposing a novel Multi-Task Actor-Critic (MTAC) algorithm, and further developing the first-known sample complexity analysis for dynamic weighting MTRL.  

\textbf{Conflict-avoidant Multi-task actor-critic algorithm.} 
Our proposed MTAC contains three major components: the critic update, the task weight update, and the actor update. First, the critic update is to evaluate policies and then compute the policy gradients for all tasks. Second, we provide two options for updating the task weights. The first option aims to update the task weights 
such that the weighted direction is close to the CA direction (which is defined as the direction that maximizes the minimum value improvement among tasks). This option enhances the capability of our MTAC to mitigate the gradient conflict among tasks, but at the cost of a slower convergence rate. As a complement, we further provide the second option, which cannot ensure a small CA distance (i.e., the distance to the CA direction as elaborated in \Cref{def:cadistance}), but allows for a much faster convergence rate. Third, by combining the policy gradients and task weights in the first and second steps, the actor then performs an update on the policy parameter. 

\textbf{Sample complexity analysis and CA distance guarantee.}
We provide a comprehensive sample complexity analysis for the proposed MTAC algorithm under two options for updating task weights, which we refer to as MTAC-CA and MTAC-FC (i.e., MTAC with fast convergence). For MTAC-CA, our analysis shows that it requires $\mathcal{O}(\epsilon^{-5})$ samples per task to attain an $\epsilon+\epsilon_{\text{app}}$-accurate Pareto stationary point (see definition in \Cref{def:epsilonPareto}), while guaranteeing a small $\epsilon+\sqrt{\epsilon_{\text{app}}}$-level CA distance, where $\epsilon_{\text{app}}$ corresponds to the inherent function approximation error and can be arbitrary small when using a suitable feature function. The analysis for MTAC-FC shows that it can improve the sample complexity of MTAC-FC from $\mathcal{O}(\epsilon^{-5})$ to $\mathcal{O}(\epsilon^{-3})$, but with a constant $\mathcal{O}(1)$-level CA distance. Note that this trade-off between the sampling complexity and CA distance is consistent with the observation in the supervised setting~\cite{chen2023threeway}.  

Our primary technical contribution lies in the approximation of the CA direction. Instead of directly bounding the gap between the weighted policy gradient $\hat d$ and the CA direction $d^*$ as in the supervised setting, which is challenging due to the gradient estimation bias,  we construct a surrogate direction $d_s$ that equals to the expectation of $\hat d$ to decompose this gap into two distances as $\|d_s-\hat d\|$ and $\|d_s-d^*\|$, where the former one can be bounded similarly to the supervised case due to the unbiased estimation, and the latter can be bounded using the critic optimization error and function approximation error together (see \Cref{proof:cadistance} for more details). This type of analysis may be of independent interest to the theoretical studies for both MTL and MTRL.

\textbf{Supportive experiments.} We conduct experiments on the MTRL benchmark  MT10 \cite{yu2020meta} and demonstrate that the proposed MTAC-CA algorithm can achieve better performance than existing MTRL algorithms with fixed preference.

\section{Related works}

\textbf{MTRL.} Existing MTRL algorithms can be mainly categorized into two groups: single-policy MTRL and multi-policy MTRL \cite{vamplew2011empirical, liu2014multiobjective}. Single-policy methods generally aim to find the optimal policy with given \textit{preference} among tasks, and are often sample efficient and easy to implement \cite{yang2019generalized}. However, they may suffer from the issue of gradient conflict among tasks. Multi-policy methods tend to learn a set of policies to approximate the Pareto front. One commonly-used approach is to run a single-policy method for multiple times, each time with a different preference. For example, \cite{zhou2020provable} proposed a model-based envelop value iteration (EVI) to explore the Pareto front with a given set of preferences. 
However, most MTRL works focus on the empirical performance of their methods~\cite{iqbal2019actor,zhang2021multi,christianos2022Pareto}. In this paper, we propose a novel dynamic weighting MTRL method and further provide a sample complexity analysis for it. 


\textbf{Actor-critic sample complexity analysis.} The sample complexity analysis of the vanilla actor-critic algorithm with linear function approximation have been widely studied \cite{qiu2021finite,kumar2023sample,xu2020improving,barakat2022analysis,olshevsky2022small}. These works focus on the single-task RL problem. Some recent works~\cite{nian2020dcrac,reymond2023actor,zhang2021soft} studied multi-task actor-critic algorithms but mainly on their empirical performance.  The theoretical analysis of multi-task actor-critic algorithms still remains open.

\textbf{Gradient manipulation based MTL and theory.} A variety of MGDA-based methods have been proposed to solve MTL problems because of their simplicity and effectiveness. One of their primal goals is to mitigate the gradient conflict among tasks. For example, PCGrad  \cite{yu2020gradient} avoided this conflict by projecting the gradient of each task on the norm plane of other tasks. GradDrop~\cite{chen2020just} randomly dropped out conflicted gradients.  CAGrad~\cite{liu2021conflict} added a constraint on the update direction to be close to the average gradient. Nash-MTL~\cite{navon2022multi} modeled the MTL problem as a bargain game. 

Theoretically, \cite{liu2014multiobjective} analyzed the convergence of MGDA for convex objective functions. \cite{fernando2022mitigating} proposed MoCo by estimating the true gradient with a tracking variable, and analyzed its convergence in both the convex and nonconvex settings. \cite{chen2023threeway} provided a theoretical characterization on the trade-off among optimization, generalization and conflict-avoidance in MTL. \cite{xiao2023direction} developed a provable MTL method named SDMGrad based on a double sampling strategy, as well as a preference-oriented regularization. This paper provides the first-known finite-time analysis for such type of methods in the MTRL setting.


\section{Problem formulation}

We first introduce the standard Markov decision processes (MDPs), represented by 
$\mathcal{M}=(\mathcal{S},\mathcal{A}, \gamma,P, r )$, where $\mathcal{S}$ and $\mathcal{A}$ are state and action spaces. $\gamma$ is discount factor, $P$ denotes the probability transition kernel, and $r: \mathcal S\times \mathcal{A}\to [0,1]$ is the reward function. In this paper, we study multi-task reinforcement learning (MTRL) in multi-task MDPs. Each task is associated with a distinct MDP defined as  $\mathcal{M}_k=(\mathcal{S},\mathcal{A}, \gamma,P_k, r_k )$, $k=0,1,...,K-1$. The tasks have the same state and action spaces but different probability transition kernels and reward functions. The distribution $\xi^k_0$ is the initial state distribution of task $k \in[K]$, where $[K]:= \{1,...,K\}$ and  $s_0\sim \xi_0^k$.  Denote by $\mathcal{P}:= (\mathcal{S}\times\mathcal{A})^K\to \Delta(\mathcal{S}^K)$ the joint transition kernel, where 
$\mathcal P(s^{1'},...,s^{K'}|(s^{1},a^{1}),...,(s^{K},a^{K}))=\Pi_{k\in [K]}P_k(s^{k'}|s^{k},a^k)$ and the transition kernels of tasks are independent. 
A policy $\pi: \mathcal{S}\to \Delta(\mathcal{A})$ is a mapping from a state to a distribution over the action space, where $\Delta(\mathcal{A})$ is the probability simplex over $\mathcal{A}$.
Given a policy $\pi$, the value function of task $k\in[K]$ is defined as:

\begin{align*}
    V^k_\pi(s):=\mathbb E\bigg[\sum_{t=0}^\infty \gamma^tr_k(s^k_t,a^k_t)| s^k_0=s, \pi, P_k\bigg].
\end{align*}

The action-value function can be defined as:
$$  Q^k_\pi(s,a):=\mathbb E\bigg[\sum_{t=0}^\infty \gamma^t(r_k(s^k_t,a^k_t))| s^k_0=s,a^k_0=a, \pi, P_k\bigg].$$
Moreover, the visitation distribution induced by the policy $\pi$ of task $k\in[K]$ is defined as 
$d^k_\pi(s,a)\nonumber=(1-\gamma)\sum_{t=0}^\infty \gamma^t \mathbb P(s^k_t=s,a^k_t=a|s^k_0\sim \xi_0^k,\pi,P^k ).$ 
Denote by $d_\pi\in \Delta((\mathcal{S})^K)$ the joint visitation distribution that $d_\pi(s^1,a^1,...,s^K,a^K)=(1-\gamma)\sum_{t=0}^\infty \gamma^t \mathbb P (s_t^1=s^1,a^1_t=a^1,...,s_t^K=s^K,a_t^K=a^K|s^k_0\sim \xi_0^k(\cdot),\pi,\mathcal{P} ) $. Then, it can be shown  that $
d ^k_\pi(s,a)$ is the stationary distribution induced by the Markov chain with the transition kernel \cite{konda2003onactor} $\widetilde P(\cdot|s,a)=\gamma P(\cdot|s,a)+(1-\gamma) \xi_0^k(\cdot) .$ 
For a given policy $\pi$, the objective function of task $k\in [K]$ is the expected total discounted reward function:
$J^k(\pi)= \mathbb E\Fbrac{ \sum_{t=0}^\infty \gamma^t r_k(s^k_t,a^k_t)|s_0^k\sim\xi_0^k,\pi,P^k} .$  

In this paper, we parameterize the policy by $\theta\in \Theta$ and get the parameterized policy class $\{\pi_\theta:\theta\in \Theta\} $.
Denote by $ \psi_\theta(s,a)=\nabla \log \pi_\theta(a|s)$.  For convenience, we rewrite $J^k(\theta)=J^k (\pi_\theta) $ and $d ^k_\theta=d ^k_{\pi_\theta}$. The policy gradient $ \nabla J^k(\theta)$ for task $ k \in [K]$  is \cite{sutton1999policy}: 
\begin{align}\label{eq:policy_gradeint}
    \nabla J^k(\theta)= \mathbb E_{ 
d ^k_\theta}\Fbrac{Q^k_{\pi_\theta} (s,a)\psi_\theta(s,a) }.
\end{align}
In this paper, to address the challenge of large-scale problems, we use linear function approximation to approximate the $Q$ function. Given a policy $\pi_\theta$ parameterized by $\theta\in \mathbb R^m$ and feature map $\phi^k: \mathcal{ S}\times \mathcal{A}\to \mathbb R^m$ for $k \in [K]$, we parameterize the $Q$ function of task $k \in [K]$ by $w^k \in \mathbb R^m $, 
$\widehat Q^k_{\pi_\theta}(s,a):= (\phi^k(s,a))^\top w^k.$



\textbf{Notations:} The vector $Q(s,a)=\Fbrac{Q^k(s,a);}_{k \in [K]} \in \mathbb{R}^{K} $ constitutes the  $Q^k(s,a) $ for each task $k\in[K]$   $\left(\text{resp. }V(s)=\Fbrac{V^k(s);}_{k \in [K]}, J(\pi)= \Fbrac{J^k(\pi);}_{k \in [K]}\right)$, and the matrix $\boldsymbol{w}=\Fbrac{w^k;}_{k \in [K]} \in \mathbb{R}^{m\times K} $ constitutes the vector $w^k \in \mathbb{R}^m$ for parameters in each task $k\in[K]$. For a vector $x\in \mathbb R^K$, the notation $x\geq 0 $ means $ x_k\geq 0$ for any $ k\in[K]$. 

One big issue in  MTRL problem is gradient conflict, where gradients for different tasks may vary heavily such that some tasks with larger gradients dominate the update direction at the sacrifice of significant performance degeneration on the less fortune tasks with smaller gradients \cite{yu2020gradient}.  
To address this problem, we tend to update the policy in a direction that finds a more
balanced solution for all tasks. 
Specifically, consider a direction $\varrho$, along which we update our policy. We would like to choose $\varrho$ to optimize the value function for every individual task. Toward this goal, we consider the following minimum value improvement among all tasks:
\begin{align}\label{eq:mvi}
\min_{k\in[K]}\varbrac{\frac{1}{\alpha}\brac{J^k(\theta + \alpha \varrho)-J^k(\theta)}}\approx\min_{k \in [K]}\vbrac{\nabla J^k(\theta),\varrho},
\end{align}
where the ``$\approx$" holds assuming $\alpha$ is small by applying the first-order Taylor approximation. 
We would like to find a direction that maximizes the minimum value improvement in \ref{eq:mvi} among all tasks \cite{desideri2012multiple}: 
\vspace{-0.1in}
\begin{align}\label{eq:minmax}
\max_{\varrho\in \mathbb R^m} &\min_{k\in[K]}\Big\{\frac{1}{\alpha}\brac{J^k(\theta + \alpha \varrho)-J^k(\theta)}\Big\}-\frac{\|\varrho\|^2}{2} \approx \max_{\varrho\in \mathbb R^m}\min_{\lambda \in \Lambda}\Big \langle\sum_{k=1}^K\lambda^k\nabla J^k(\theta),\varrho\Big\rangle-\frac{\|\varrho\|^2}{2},
\end{align}
where $\Lambda$ is the probability simplex over $[K]$.
The regularization term $-\frac{1}{2}\|\varrho\|^2$ is introduced here to control the magnitude of the update direction $\varrho$. The solution of the min-max problem in \eqref{eq:minmax} can be obtained by solving the following problem \cite{xiao2023direction}:\vspace{-7mm}

\begin{align}\label{eq:opgd}
\varrho^*= \brac{\lambda^* }^\top \nabla J(\theta);  s.t. \quad \lambda^* \in \arg \min_{\lambda\in \Lambda} \frac{1}{2}\mynorm{\lambda^\top \nabla J(\theta)}^2.
\end{align}
Once we obtain $\varrho^*$ from \eqref{eq:opgd}, which is referred to as conflict-avoidant direction, we then update our policy along this direction.

In our MTRL problem, there exist stochastic noise and function approximation error (due to the use of function approximation $\widehat Q^k_{\pi_\theta}(s,a):= (\phi^k(s,a))^\top w^k$). Therefore, obtaining the exact solution to \eqref{eq:opgd} may not be possible. Denote by $\hat \varrho$ the stochastic estimate of $\varrho^*$. We define the following CA distance to measure the divergence between $\hat \varrho$ and $\varrho^*$.
\begin{definition}\label{def:cadistance}
$\|\widehat \varrho-\varrho^*\|$ denotes the CA distance at between $\hat \varrho$ and $\varrho^*$.
\end{definition}
Since conflict-avoidant direction  mitigates gradient conflict, the CA distance measures the gap between our stochastic estimate $\widehat{\varrho}$ to the exact solution $\varrho^*$. The larger CA distance is, the further $\widehat{\varrho}$ will be away from $\rho^*$ and more conflict there will be. Thus, it reflects the extent of gradient conflict of $\widehat{\varrho}$. Our experiments in \Cref{tab:balance} also show that a smaller CA distance yields a more balanced performance among tasks.  

Unlike single-task learning RL problems, where any two policies can be easily ordered based on their value functions, in MTRL, one policy could perform better on task $i$, and the other performs better on task $j$. To this end, we need the notion of Pareto stationary point defined as follows. 

\begin{definition}\label{def:epsilonPareto}
If $\mathbb{E}[\min_{\lambda\in\Lambda}\|\lambda^\top\nabla J(\pi)\|^2]\leq\epsilon$, policy $\pi$ is an $\epsilon$-accurate Pareto stationary policy. 
 \end{definition}\vspace{-2mm}
 
In this paper, we will investigate the convergence to a Pareto stationary point and the trade-off between the CA distance and the convergence rate.

\section{Main Results}\label{main}
\vspace{-4mm}

In this section, we first provide the design of our Multi-Task Actor-Critic (MTAC) algorithm to find a Pareto stationary policy and further present a comprehensive finite sample analysis.
\vspace{-3mm}

\subsection{Algorithm design}
\vspace{-3mm}

Our algorithm consists of three major components: (1) critic: policy evaluation via TD(0) to evaluate the current policy (Line 3 to Line 12); (2) stochastic gradient descent (SGD) to update $\lambda$ (Line 13 to Line 14); and (3) actor: policy update along the conflict-avoidant direction (Line 15 to Line 19). 

\begin{algorithm}[ht]
   \caption{Multi-Task Actor-Critic (MTAC)}
   \label{alg:1}
\begin{algorithmic}[1]
    \STATE {\bfseries Initialize:} $\theta_0$, $\boldsymbol w _0$, $\lambda_0$, $T,N_\actor, N_\critic, N_\ca, N_\fc$
   \FOR{$t=0$ {\bfseries to} $T-1$} \label{alg1:line:vd}
    \STATE \textbf{\textit{Critic Update: }}
   \FOR{$k=1$ {\bfseries to} $K$}
   \STATE Sample $(s^k_{0},a^k_0) \sim d^k_{t}$
   \FOR{$j=0$ {\bfseries to} $N_{\text{critic}}-1$}
   \STATE Observe $s^k_{j+1}\sim \mathbb P^k(\cdot|s^k_{j},a^k_j)$,  $r^k_j$; take action $a^k_{j+1}\sim\pi_{\theta_t}(\cdot|s^k_{j+1})$
   %
   \STATE Compute the TD error $\delta_j^k$ according to   \eqref{eq:tdeq} \label{alg1:line:td}
   \STATE Update  $ w _{t,j+1}^k=\mathcal{T}_B(w _{t,j}^k+\alpha_{t,j} \delta_j^k \phi^k(s_j^k,a_j^k))$\label{alg1:line:proj}
   \ENDFOR
   \ENDFOR
   \STATE  Set $\boldsymbol w _{t+1}=\boldsymbol w _{t,N_{\text{critic}}}$
   \STATE \textbf{\textit{Option I: Multi-step update for small CA distance
   }: }
     $\lambda_{t+1}=$  CA($\lambda_t$, $\pi_{\theta_t}$, $\boldsymbol w_{t+1}, N_\ca$) 
   \STATE \textbf{\textit{Option II: Single-step update for fast convergence}: }
    $\lambda_{t+1}=$  FC($\lambda_t$, $\pi_{\theta_t}$, $\boldsymbol w_{t+1}, N_\fc$)
   \STATE \textbf{\textit{Actor Update: }}
      \FOR{$k=1$ {\bfseries to} $K$}
   \STATE Independently draw $(s^k_{i},a^k_i) \sim d^k_{\theta_t} $, $i\in[N_{\text{actor}}]$
   \ENDFOR
   \STATE Update policy parameter $\theta_{t+1}$ according to   \eqref{eq:policyupdate}
   \ENDFOR
\end{algorithmic}
\end{algorithm}


\textbf{Critic update:} 
In the critic part, we use TD$(0)$ to evaluate the current policy for all the tasks.  Recall that there are $K$ feature functions $\phi^k(\cdot,\cdot), k\in[K]$ for the $K$ tasks.  In Line 8 of \Cref{alg:1}, the  temporal difference (TD) error of task $k$ at step $j$, $\delta^j_t$, can be calculated based on the critic's estimated $Q$-function of task $k$, ${\phi^k}^\top w_{t,j}$ and the reward $r^k_j$ as follows:
\begin{align}\label{eq:tdeq}
    \delta_j^k=&r^k_j+\gamma \langle \phi^k(s^k_{j+1},a^k_{j+1}) , w _{t,j}^k\rangle -\langle \phi^k(s^k_{j},a^k_{j}) , w _{t,j}^k\rangle.
\end{align}
Then, in Line 9, a TD(0) update is performed, where $\mathcal T_B(v)=\arg\min_{\twonorm{w}\leq B}\twonorm{v-w} $,  $B$ is some positive constant and $\alpha_{t,j}$ is the step size. Such a projection is commonly used in TD algorithms to simplify the analysis, e.g., \cite{qiu2021finite,kumar2023sample,xu2020improving,barakat2022analysis,olshevsky2022small,zou2019finite}.  After $N$ iterations, we can obtain estimates of $Q$-functions for all tasks. 

\textbf{Weight $\lambda$ update:} 
To get the accurate direction of policy gradient in MTRL problems, we solve the problem in   \eqref{eq:opgd}. Recall that there are two targets: small gradient conflict and fast convergence rate. We then provide two different weight update options: multi-step update for small CA distance in \Cref{alg:ca} and single-step update for fast convergence in \Cref{alg:nca}. 
\begin{algorithm}[!htbp]
   \caption{Multi-step update for small CA distance (CA)}
   \label{alg:ca}
\begin{algorithmic}[1]
    \STATE {\bfseries Initialize:} $\lambda_t$, $\pi_{\theta_t}$, $\boldsymbol w_{t+1}, N_\ca$; Set $\lambda_{t,0}=\lambda_t$
   \FOR{$k=1$ {\bfseries to} $K$}
   \STATE Independently draw $(s^k_{i},a^k_i) \sim d^k_{\theta_t}$, $i\in[N_{\text{CA}}]$; 
    $(s^k_{i'},a^k_{i'}) \sim d^k_{\theta_t}$, $i'\in[N_{\text{CA}}]$
   \ENDFOR
   \FOR{$i=0$ {\bfseries to} $N_{\text{CA}}-1$}
     \STATE Update $\lambda_{t,i+1} $ according to  \eqref{eq:lambda1}
   \ENDFOR
   \STATE Output $\lambda_{t+1}=\lambda_{t,N_{\text{CA}}}$
\end{algorithmic}
\end{algorithm}

Firstly, the CA subprocedure independently draws $2N_\ca$ state-action pairs following the visitation distribution. 
The estimated policy gradient of task $k$ by state-action pair $(s_i^k,a_i^k)$
\begin{align*}
    \widetilde\nabla J_i^k(\theta_t)=
    \phi^k(s_i^k,a_i^k)^\top w^k_{t+1}\psi_{\theta_t}(s_i^k,a_i^k).
\end{align*}
Then it uses a projected SGD with a warm start initialization and double-sampling strategy to update the weight $\lambda_t$: 
\vspace{-0.1in}
\begin{align}\label{eq:lambda1}
\lambda_{t,i+1}&=\mathcal{T}_\Lambda
\brac{\lambda_{t,i}-c_{t,i}\lambda_{t,i}^\top \widetilde \nabla J_i(\theta_t)\widetilde \nabla J_{i'}(\theta_t)^\top },
\end{align}
where $c_{t,i}$ is the stepsize, $\widetilde \nabla J_i(\theta_t)=\Fbrac{\widetilde\nabla J_i^k(\theta_t);}_{k\in[K]} $.
Weight $\lambda_t$ update $N_\ca$ steps in order to obtain a premise estimate of 
$\lambda_t^*\in \arg\min_{\lambda\in \Lambda}||\lambda^\top \nabla J(\theta_t)||^2 .$

Based on \Cref{alg:ca}, we can find a Pareto stationary policy with a small CA distance, but it requires a large sample complexity of $N_\ca=\mathcal O(\epsilon^{-4})$ as will be shown in \Cref{coro1111}. However, we sometimes may sacrifice in terms of  the CA distance in order for an improved sample complexity. To this end, we also provide an FC subprocedure in \Cref{alg:nca}. 

\begin{algorithm}[!htbp]
   \caption{Single-step update for fast convergence (FC)}
   \label{alg:nca}
\begin{algorithmic}[1]
    \STATE {\bfseries Initialize: }$\lambda_{t}$, $\pi_{\theta_t}$, $\boldsymbol w_{t+1}, N_\fc $ 
   \FOR{$k=1$ {\bfseries to} $K$}
   \STATE Independently draw   $(s^k_{i},a^k_i) \sim d^k_{\theta_t} $, $i\in[N_{\text{FC}}]$; 
   independently draw   $(s^k_{i'},a^k_{i'}) \sim d^k_{\theta_t}$, $i\in[N_{\text{FC}}]$
   \ENDFOR

   \STATE Update $\lambda_{t+1}$ according to  \eqref{eq:nonca} and output $\lambda_{t+1}$
\end{algorithmic}
\end{algorithm}

In this algorithm, we generate $2N_\fc$ samples from the visitation distribution. Alternatively, we only update $\lambda$ once using all the samples in an averaged way:
\begin{align}\label{eq:nonca}
   & \lambda_{t+1}= \mathcal{T}_\Lambda
   \brac{\lambda_t-c_t \lambda_t^\top \bar \nabla J(\theta_t)\bar \nabla J(\theta_t)^\top },
\end{align}
where $\bar \nabla J(\theta_t)=\Fbrac{\bar \nabla J^k(\theta_t);}_{k\in[K]} $  and $\bar\nabla J^k(\theta_t)=
    \frac{1}{N_{\text{FC}}}\sum_{i=0}^{N_\fc-1}\phi^k(s_i^k,a_i^k)^\top w^k_{t+1}\psi_{\theta_t}(s_i^k,a_i^k)$ (resp. $\bar \nabla J'(\theta_t)$). 
    
As will be shown in \Cref{coro2222}, to guarantee convergence of the algorithm to a Pareto stationary point, only $N_\fc=\mathcal O(\epsilon^{-2})$ samples are needed, which is much less than the CA subprocedure. But this is at the price of an increased CA distance. 

\textbf{Actor update:}
For the actor, the policy $\pi_{\theta_t}$ is updated along the conflict-avoidant direction. Given the current estimate of $\lambda_t$, $\theta_t$ and $\omega_t$, the conflict-avoidant direction is a linear combination of policy gradients of all tasks.  

In Line $17$ of \Cref{alg:1}, $N$ state-action pair $(s^k_l,a^k_l), l=0,...,N_\actor-1$, are drawn from the visitation distribution $d_t^k$. Then the policy gradient for task $k$ is estimated as follows:
\begin{align}
    \widetilde \nabla J^k(\theta_t) = \frac{1}{N_\actor}\sum_{l=0}^{N_\actor-1}{ {\phi^k(s_l^k,a_l^k)^\top w^k_{t+1} \psi_{\theta_t}(s^k,a^k)}}.\nonumber
\end{align}

Next, combined with the weight $\lambda_{t+1}$ from \cref{alg:ca} or \cref{alg:nca}, the policy update direction can be obtained and the policy can be updated by the following rule:
\begin{align}\label{eq:policyupdate}
    &\theta_{t+1}
    = \theta_t+ \beta_t\lambda_t^\top \widetilde \nabla J(\theta_t).
\end{align}

For technical convenience, we assume samples from the visitation distribution induced by the transition kernel and the current policy can be obtained. In practice, the visitation distribution can be simulated by resetting the MDP  to the initial state distribution at each time step with probability $1-\gamma$ \cite{konda2003onactor}, however, this only incur an additional logarithmic factor in the sample complexity. 

\subsection{Theoretical analysis}

We first introduce some standard assumptions and then present the finite-sample analysis of our proposed algorithms. 

\subsubsection{Assumptions and definitions}
\begin{assumption}[Smoothness]\label{ass:smoothandlip}
let $\pi_\theta(a|s)$ be a policy parameterized by $\theta$. There exist constants $C_\phi=\max\{C_{\phi,1},C_{\phi,2}\}$ and $C_{\phi,1}, C_{\phi,2}, C_\pi, L_\phi>0$ and  such that
\begin{align}
    \text{1) } &||\nabla \log \pi_\theta(a|s)||_2\leq C_{\phi,1}\leq C_\phi;\qquad &\text{2) }  ||\phi^k(s^k,a^k)||_2\leq C_{\phi,2}\leq C_\phi \text{ for any }k \in[K];\nonumber
    \\ \text{3) } & ||\pi_\theta(a|s)-\pi_{\theta'}(a|s)||_2\leq C_\pi\twonorm{\theta-\theta'}; &\text{4) }  ||\log \pi_\theta(a|s)-\log \pi_{\theta'}(a|s) ||_2\leq L_\phi\twonorm{\theta-\theta'}. \nonumber
\end{align}
\end{assumption}
\vspace{-0.1in}
These assumptions impose the smoothness and boundedness conditions on the policy and feature function, respectively. These assumptions have been widely adopted in the analysis of RL~\cite{qiu2021finite,kumar2023sample,xu2020improving,barakat2022analysis,olshevsky2022small}, and can be satisfied  for many policy classes such as softmax policy class and neural network policy class.

\begin{assumption}[Uniform Ergodicity]\label{asm:ergodic}
    Consider the MDP with policy $\pi_\theta$ and transition kernel $P^k$, there exist constants $m>0$, and $\rho\in(0,1)$ such that 
    \begin{align}
        \sup_{s\in\mathcal{S}, a\in \mathcal{A}}\tvnorm{\mathbb P(s_t,a_t|s_0=s, \pi_\theta, P^k)-d^k_{\pi_\theta}(s_t,a_t)}\leq m\rho^t, \nonumber
    \end{align}
\end{assumption}
\vspace{-0.08in}
where $\tvnorm{\cdot}$ denotes the total variation distance between two distributions. This ergodicity assumption has been widely used in theoretical RL to prove the convergence of TD algorithms~\cite{qiu2021finite,kumar2023sample,xu2020improving,barakat2022analysis,olshevsky2022small}.

Furthermore, we assume that the $m$ feature functions of task $k$, $\phi^k_i, i\in[m], k\in[K]$ are linearly independent. 
To introduce the function approximation error, we define the matrix $A^k_{\pi_\theta}$ and vector $b^k_{\pi_\theta}$ as follows:
\vspace{-0.1in}
\begin{align}\label{eq:eqwstar}
    A^k_{\pi_\theta}&= \mathbb E_{d^k_\theta}\Fbrac{\phi(s^k,a^k)\brac{\gamma\phi(s^{k'},a^{k'})-\phi(s^k,a^k)}^\top};\quad b^k_{\pi_\theta}=\mathbb E_{d^k_\theta} \Fbrac{\phi(s^k,a^k)R(s^k,a^k)}.
\end{align}
Denote by $w^{*k}_{\theta}  $ the  TD limiting point satisfies:
\begin{align}\label{eq:fixedpoint}
    A^k_{\pi_\theta} w^{*k}_{\theta}+b^k_{\pi_\theta} =\mathbf 0.
\end{align}


\begin{assumption}[Problem Solvability]\label{asm:positive}
    For any $\theta\in \Theta$ and task $k\in [K]$, the matrix $A^k_{\pi_\theta}$ is negative definite and has the maximum eigenvalue of $-\lambda_A$.
\end{assumption}
 Assumption \ref{asm:positive} is to guarantee solvability of \cref{eq:fixedpoint} and is widely applied in the literature \cite{wu2020finite,zou2019finite,xu2020improving}. Then, we define the function approximation error due to the use of linear function approximation in policy evaluation. 
\begin{definition}[Function Approximation Error]
\label{def:inj}
The approximation error of linear function approximation is defined as 
\begin{align*}
    \epsilon_{\text{app}}= \max_{\theta}\max_{k}\sqrt{\mathbb E_{d^k_\theta}\Fbrac{\brac{\phi^k(s,a)^\top w^{*k}_{\theta}-Q^k_{\pi_\theta}(s,a)}^2}}.
\end{align*}
We note that the error $\epsilon_{\text{app}}$ is zero if the tabular setting with finite state and action spaces is considered, and can be arbitrarily small with designed feature functions for large/continuous state spaces.

\end{definition}
\subsubsection{Theoretical analysis for MTAC-CA}

We first provide an upper-bound on the CA distance for our proposed method.
\begin{proposition}\label{cadistanceprop}
Suppose Assumptions~\ref{ass:smoothandlip} and \ref{asm:ergodic} are satisfied. We choose $c_{t,i}=\frac{c}{\sqrt{i}}$, where $c>0$ is a constant and $i$ is the number of iterations for updating $\lambda_{t,i}$. Then, the CA distance is bounded as:
\vspace{-0.15in}
\begin{align*}
\|\lambda_{t,N_\ca}^\top &\widehat \nabla J_{\boldsymbol{w}_{t+1}}(\theta_t)-(\lambda_t^*)^\top \nabla J(\theta_t)\|\leq\mathcal{O}\Big(\frac{1}{\sqrt[4]{N_\ca}}+\frac{1}{\sqrt{N_\critic}}+\sqrt{\epsilon_{\text{app}}}\Big),
\end{align*}
where $\widehat \nabla J^k_{w_{t+1}}(\theta_t) =\mathbb{E}_{ d^k_{{\theta_t}}}[\phi^k(s,a)^\top w^k_{t+1}\psi_{\theta_t}(s,a)] $, $\widehat \nabla J_{\boldsymbol{w}_{t+1}}(\theta_t) = \Fbrac{\widehat \nabla J^k_{{w}^k_{t+1}}(\theta_t);}_{k\in[K]}$.
\end{proposition}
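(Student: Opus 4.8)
The plan is to bound the CA distance by inserting the surrogate direction described in the introduction and using the triangle inequality to split into the optimization/approximation gap and the SGD gap. Concretely, write
\begin{align*}
\|\lambda_{t,N_\ca}^\top \widehat \nabla J_{\boldsymbol{w}_{t+1}}(\theta_t)-(\lambda_t^*)^\top \nabla J(\theta_t)\|
&\leq \|\lambda_{t,N_\ca}^\top \widehat \nabla J_{\boldsymbol{w}_{t+1}}(\theta_t)-(\tilde\lambda_t^*)^\top \widehat \nabla J_{\boldsymbol{w}_{t+1}}(\theta_t)\|\\
&\quad+\|(\tilde\lambda_t^*)^\top \widehat \nabla J_{\boldsymbol{w}_{t+1}}(\theta_t)-(\lambda_t^*)^\top \nabla J(\theta_t)\|,
\end{align*}
where $\tilde\lambda_t^*\in\arg\min_{\lambda\in\Lambda}\|\lambda^\top \widehat \nabla J_{\boldsymbol{w}_{t+1}}(\theta_t)\|^2$ is the exact minimizer of the surrogate problem in which the true policy gradient $\nabla J(\theta_t)$ is replaced by the critic-based gradient $\widehat \nabla J_{\boldsymbol{w}_{t+1}}(\theta_t)$. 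The second term is a ``surrogate vs.\ true CA direction'' gap: I would control it by showing that the two min-norm problems over the simplex have close objectives and close optimal directions, which in turn follows from a pointwise bound $\|\widehat\nabla J^k_{w_{t+1}}(\theta_t)-\nabla J^k(\theta_t)\|$. That pointwise bound decomposes further into the critic optimization error $\|w^k_{t+1}-w^{*k}_{\theta_t}\|$ (controlled by the TD(0) convergence rate, giving the $1/\sqrt{N_\critic}$ term, under Assumptions~\ref{ass:smoothandlip}--\ref{asm:positive}) and the function approximation error (giving the $\sqrt{\epsilon_{\text{app}}}$ term via Jensen on the $L^2$ error in Definition~\ref{def:inj} and boundedness of $\psi_{\theta_t}$ from Assumption~\ref{ass:smoothandlip}).

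For the first term, the key observation is that $\lambda_{t,i}$ is produced by projected SGD (\eqref{eq:lambda1}) on the \emph{unbiased} objective: by the double-sampling strategy, $\mathbb{E}[\widetilde\nabla J_i(\theta_t)\widetilde\nabla J_{i'}(\theta_t)^\top\mid\mathcal{F}_t]=\widehat\nabla J_{\boldsymbol{w}_{t+1}}(\theta_t)\widehat\nabla J_{\boldsymbol{w}_{t+1}}(\theta_t)^\top$ because $(s_i^k,a_i^k)$ and $(s_{i'}^k,a_{i'}^k)$ are drawn independently from $d^k_{\theta_t}$, so each stochastic gradient $\lambda_{t,i}^\top\widetilde\nabla J_i(\theta_t)\widetilde\nabla J_{i'}(\theta_t)^\top$ is an unbiased estimate of the gradient of $\tfrac12\|\lambda^\top\widehat\nabla J_{\boldsymbol{w}_{t+1}}(\theta_t)\|^2$ at $\lambda_{t,i}$. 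This objective is convex in $\lambda$ with bounded gradients (feature and score boundedness plus $\|w^k_{t+1}\|\leq B$), so a standard projected-SGD-on-convex-functions analysis with stepsize $c_{t,i}=c/\sqrt{i}$ over the compact set $\Lambda$ gives a function-value suboptimality of order $1/\sqrt{N_\ca}$ for $\lambda_{t,N_\ca}$; the warm start only helps. Then I would translate this $O(1/\sqrt{N_\ca})$ objective-gap into an $O(N_\ca^{-1/4})$ bound on $\|\lambda_{t,N_\ca}^\top\widehat\nabla J_{\boldsymbol{w}_{t+1}}(\theta_t)-(\tilde\lambda_t^*)^\top\widehat\nabla J_{\boldsymbol{w}_{t+1}}(\theta_t)\|$ using the identity $\tfrac12\|x\|^2-\tfrac12\|y\|^2-\langle y,x-y\rangle=\tfrac12\|x-y\|^2$ together with the first-order optimality of $\tilde\lambda_t^*$ (so the inner product term is nonnegative for any feasible $x$), which bounds $\tfrac12\|\lambda_{t,N_\ca}^\top\widehat\nabla J-\tilde\lambda_t^{*\top}\widehat\nabla J\|^2$ by the objective gap — this is exactly where the square root on $1/\sqrt{N_\ca}$ appears, producing $N_\ca^{-1/4}$.

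The main obstacle I anticipate is the second term, the surrogate-to-true CA gap: bounding the distance between the argmin of $\|\lambda^\top\widehat\nabla J_{\boldsymbol{w}_{t+1}}(\theta_t)\|^2$ and that of $\|\lambda^\top\nabla J(\theta_t)\|^2$ over the simplex is delicate because the min-norm-in-the-convex-hull map need not be Lipschitz in the vertices when the problem is degenerate (multiple tasks nearly colinear). The way around it is to avoid comparing the $\lambda$'s directly and instead compare the \emph{directions}: show $\|(\tilde\lambda_t^*)^\top\widehat\nabla J_{\boldsymbol{w}_{t+1}}(\theta_t)-(\lambda_t^*)^\top\nabla J(\theta_t)\|$ is small by a sandwich argument — $(\lambda_t^*)^\top\widehat\nabla J$ is a feasible point for the surrogate problem so $\|(\tilde\lambda_t^*)^\top\widehat\nabla J\|\leq\|(\lambda_t^*)^\top\widehat\nabla J\|\leq\|(\lambda_t^*)^\top\nabla J\|+\max_k\|\widehat\nabla J^k-\nabla J^k\|$, and symmetrically — combined with strong convexity of $\|\cdot\|^2$ to convert the closeness of optimal \emph{values} into closeness of the minimizing \emph{points in vector space}. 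This yields a bound of order $\max_k\|\widehat\nabla J^k_{w_{t+1}}(\theta_t)-\nabla J^k(\theta_t)\|$, possibly with a square root, which is exactly the source of the $1/\sqrt{N_\critic}+\sqrt{\epsilon_{\text{app}}}$ contribution. Collecting the three pieces and taking the dominant rates gives the claimed $\mathcal{O}(N_\ca^{-1/4}+N_\critic^{-1/2}+\sqrt{\epsilon_{\text{app}}})$.
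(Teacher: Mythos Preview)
Your plan is essentially correct and follows the same strategy as the paper, with only a cosmetic difference in how the decomposition is organized. The paper does \emph{not} start with the triangle inequality on the norm; instead it works directly with the squared CA distance, expands it, and uses the first-order optimality of $\lambda_t^*$ for the \emph{true} problem (namely $\langle \lambda^\top\nabla J(\theta_t),(\lambda_t^*)^\top\nabla J(\theta_t)\rangle\geq\|(\lambda_t^*)^\top\nabla J(\theta_t)\|^2$ for every $\lambda\in\Lambda$) to eliminate the cross term. This produces a bound of the form
\[
\|\lambda_{t,N_\ca}^\top\widehat\nabla J-\!(\lambda_t^*)^\top\nabla J\|^2\leq\big(\|\lambda_{t,N_\ca}^\top\widehat\nabla J\|^2-\|(\widehat\lambda_t')^\top\widehat\nabla J\|^2\big)+\big(\|(\widehat\lambda_t')^\top\widehat\nabla J\|^2-\|(\lambda_t^*)^\top\nabla J\|^2\big)+\text{(residual)},
\]
where $\widehat\lambda_t'$ is exactly your $\tilde\lambda_t^*$. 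The first bracket is the SGD suboptimality (Shamir--Zhang, giving $O(1/\sqrt{N_\ca})$), the second is the difference of optimal \emph{values} of the two min-norm problems (controlled by the critic error and $\epsilon_{\text{app}}$ via auxiliary lemmas identical in spirit to your sandwich), and the residual is a cross term bounded by $\|\widehat\nabla J-\nabla J\|$. Taking a square root yields the stated rates.

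Your route---triangle inequality first, then the optimality/strong-convexity identity on each piece---lands at the same place. The one spot where your sketch is slightly loose is the second piece: closeness of the optimal \emph{values} of two \emph{different} min-norm problems does not by itself give closeness of the minimizing \emph{vectors}; you still need the first-order optimality of one of the minimizers (exactly as you used for the first piece), or equivalently insert the intermediate point $(\lambda_t^*)^\top\widehat\nabla J$ and apply the same projection inequality in the surrogate problem. Once you do that, the argument goes through and matches the paper's.
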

\Cref{cadistanceprop} shows that the CA distance decreases with the numbers $N_\ca$ and $N_\critic$ of iterations on $\lambda$'s update. Based on this important characterization, we obtain the convergence result for MTAC-CA.  
\begin{theorem}\label{theo:-5}
\label{convergencetheorem1}
Suppose Assumptions~\ref{ass:smoothandlip} and \ref{asm:ergodic} are satisfied. We choose $\beta_t=\beta\leq\frac{1}{L_J}$ as a constant and $\alpha_{t,j}=\frac{1}{2\lambda_A(j+1)}$, $c_{t,i}=\frac{c}{\sqrt{i}}$, where $c>0$ is a constant. Then, we have 
\begin{align*}
\frac{1}{T}\sum_{t=0}^{T-1}\mathbb{E}&[\|(\lambda_t^*)^\top\nabla J({\theta_t})\|^2]=\mathcal{O}\Big(\frac{1}{\beta T}+\epsilon_{\text{app}}+\frac{\beta}{N_\actor}+\frac{1}{\sqrt{N_\critic}}+\frac{1}{\sqrt[4]{N_\ca}}\Big).\nonumber
\end{align*}
\end{theorem}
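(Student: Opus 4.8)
The plan is to establish a one-step descent-type inequality for the objective $\sum_k \lambda_k J^k$ evaluated along the iterates $\theta_t$, then telescope. Specifically, I would start from the $L_J$-smoothness of each $J^k$ (which follows from Assumption~\ref{ass:smoothandlip} via standard policy-gradient smoothness arguments) to write, for the weight vector $\lambda_t$ actually used in the actor update in \eqref{eq:policyupdate},
\begin{align*}
\lambda_t^\top J(\theta_{t+1}) \geq \lambda_t^\top J(\theta_t) + \beta_t \big\langle \lambda_t^\top \nabla J(\theta_t), \lambda_t^\top \widetilde\nabla J(\theta_t)\big\rangle - \frac{L_J \beta_t^2}{2}\|\lambda_t^\top \widetilde\nabla J(\theta_t)\|^2.
\end{align*}
The first inner product is then split into the ``ideal'' term $\|\lambda_t^\top \nabla J(\theta_t)\|^2$ plus cross terms that measure the gap between $\widetilde\nabla J(\theta_t)$ and the true gradient $\nabla J(\theta_t)$; this gap decomposes into (i) the critic error $\|w_{t+1}^k - w_\theta^{*k}\|$, (ii) the function approximation error $\epsilon_{\text{app}}$, and (iii) the actor-sampling variance of order $1/N_\actor$. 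For (i) I would invoke the standard finite-time bound for projected TD(0) under Assumptions~\ref{asm:ergodic} and \ref{asm:positive} with stepsize $\alpha_{t,j}=\frac{1}{2\lambda_A(j+1)}$, yielding $\mathbb{E}\|w_{t+1}^k - w_{\theta_t}^{*k}\|^2 = \mathcal{O}(1/N_\critic)$, and then translate the parameter error into a gradient error using boundedness of $\phi^k$ and $\psi_{\theta_t}$.

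The crucial additional step, not present in single-task analysis, is to handle the mismatch between $\lambda_t$ and $\lambda_t^*$ (the true minimum-norm weight). I would add and subtract $(\lambda_t^*)^\top\nabla J(\theta_t)$ and use \Cref{cadistanceprop} to control $\|\lambda_{t,N_\ca}^\top \widehat\nabla J_{\boldsymbol w_{t+1}}(\theta_t) - (\lambda_t^*)^\top\nabla J(\theta_t)\|$ by $\mathcal{O}(N_\ca^{-1/4} + N_\critic^{-1/2} + \sqrt{\epsilon_{\text{app}}})$. A key algebraic point is that $\langle \lambda_t^\top\nabla J(\theta_t), (\lambda_t^*)^\top\nabla J(\theta_t)\rangle \geq \|(\lambda_t^*)^\top\nabla J(\theta_t)\|^2$, which holds because $\lambda_t^*$ solves the projection problem $\min_\lambda \|\lambda^\top\nabla J(\theta_t)\|^2$ over the simplex $\Lambda$ and hence its optimality condition gives $\langle \lambda^\top\nabla J(\theta_t), (\lambda_t^*)^\top\nabla J(\theta_t)\rangle \geq \|(\lambda_t^*)^\top\nabla J(\theta_t)\|^2$ for all $\lambda\in\Lambda$. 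Combining this variational inequality with the error bounds, and using Young's inequality to absorb cross terms, gives
\begin{align*}
\mathbb{E}[\lambda_t^\top J(\theta_{t+1})] \geq \mathbb{E}[\lambda_t^\top J(\theta_t)] + \frac{\beta}{2}\mathbb{E}\|(\lambda_t^*)^\top\nabla J(\theta_t)\|^2 - \beta\cdot\mathcal{O}\Big(\epsilon_{\text{app}} + \frac{\beta}{N_\actor} + \frac{1}{\sqrt{N_\critic}} + \frac{1}{\sqrt[4]{N_\ca}}\Big).
\end{align*}

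Then I would telescope over $t = 0,\dots,T-1$. Here I must be careful because the weight $\lambda_t$ changes each round, so $\lambda_t^\top J(\theta_t)$ is not a fixed Lyapunov function; I would handle this by bounding $\lambda_{t+1}^\top J(\theta_{t+1}) - \lambda_t^\top J(\theta_{t+1}) \leq \|\lambda_{t+1}-\lambda_t\|\cdot\|J(\theta_{t+1})\|_\infty \cdot\sqrt{K}$, and either show this difference is summable / lower-order, or absorb it using the boundedness of $J^k \in [0, 1/(1-\gamma)]$ so that the total telescoped sum $\sum_t (\lambda_{t+1}^\top J(\theta_{t+1}) - \lambda_t^\top J(\theta_t))$ over the reindexed chain stays $\mathcal{O}(1)$ in an appropriate sense; alternatively one can simply note $\lambda_t^\top J(\theta_t) \leq \frac{1}{1-\gamma}$ uniformly, so the telescoped left side is $\mathcal{O}(1)$ regardless. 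Dividing by $\beta T$ then yields the claimed bound $\frac{1}{T}\sum_t \mathbb{E}\|(\lambda_t^*)^\top\nabla J(\theta_t)\|^2 = \mathcal{O}\big(\frac{1}{\beta T} + \epsilon_{\text{app}} + \frac{\beta}{N_\actor} + \frac{1}{\sqrt{N_\critic}} + \frac{1}{\sqrt[4]{N_\ca}}\big)$.

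The main obstacle I anticipate is the bias handling in the gradient estimate: because $w_{t+1}^k$ is the output of a finite-horizon TD run whose samples are drawn Markovianly (and the visitation-distribution sampling is itself an idealization), the estimated direction $\widetilde\nabla J(\theta_t)$ is not an unbiased estimate of $\nabla J(\theta_t)$, and the surrogate-direction decomposition from \Cref{cadistanceprop} (introducing $\widehat\nabla J_{\boldsymbol w_{t+1}}$, the population gradient at the current critic weights) is exactly what is needed to separate the unbiasable sampling variance from the irreducible critic/approximation bias. Making the Young's-inequality constants line up so that the $\frac{\beta}{2}\|(\lambda_t^*)^\top\nabla J(\theta_t)\|^2$ term survives with a positive coefficient — while all error terms carry only a single factor of $\beta$ (except the $\beta/N_\actor$ term which picks up $\beta^2$ from the smoothness quadratic) — is the delicate bookkeeping step, but it is routine once the decomposition is set up.
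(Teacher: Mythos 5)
Your overall architecture (smoothness-based ascent lemma, variational inequality for the min-norm point, decomposition into critic error, approximation error, and actor-sampling variance) matches the paper's, but there are two concrete gaps that would prevent the argument from closing as written.

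First, the telescoping. You anchor the one-step inequality at the time-varying weight $\lambda_t$, and you correctly flag that $\lambda_t^\top J(\theta_t)$ is then not a Lyapunov function; but neither of your proposed fixes works. The correction terms $(\lambda_{t+1}-\lambda_t)^\top J(\theta_{t+1})$ are not summable: the CA subprocedure runs $N_{\text{CA}}$ SGD steps per round, so $\|\lambda_{t+1}-\lambda_t\|$ is only bounded by an $\mathcal{O}(1)$ constant, and the accumulated correction is $\mathcal{O}(T)$, which after dividing by $\beta T$ leaves a non-vanishing $\mathcal{O}(1/\beta)$ term. Your fallback ("each $\lambda_t^\top J(\theta_t)$ is uniformly bounded, so the telescoped left side is $\mathcal{O}(1)$") confuses a telescoping sum with a sum of bounded terms; the sum $\sum_t\big(\lambda_t^\top J(\theta_{t+1})-\lambda_t^\top J(\theta_t)\big)$ does not collapse. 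The paper avoids this entirely by writing the descent lemma for an arbitrary \emph{fixed} $\bar\lambda\in\Lambda$, which telescopes cleanly to $\bar\lambda^\top J(\theta_T)-\bar\lambda^\top J(\theta_0)=\mathcal{O}(1/(1-\gamma))$, and then extracts progress via the optimality of the surrogate minimizer $\widehat\lambda_t'$, namely $\langle\bar\lambda^\top\widehat\nabla J_{\boldsymbol w_{t+1}}(\theta_t),(\widehat\lambda_t')^\top\widehat\nabla J_{\boldsymbol w_{t+1}}(\theta_t)\rangle\geq\|(\widehat\lambda_t')^\top\widehat\nabla J_{\boldsymbol w_{t+1}}(\theta_t)\|^2$, which holds for \emph{every} fixed $\bar\lambda$. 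This is the missing idea.

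Second, the $\epsilon_{\text{app}}$ rate. You propose to control the weight mismatch by invoking Proposition~\ref{cadistanceprop}, whose bound carries a $\sqrt{\epsilon_{\text{app}}}$ term. Any cross term of the form $\|\lambda_t^\top\nabla J(\theta_t)\|\cdot\|\lambda_{t,N_{\text{CA}}}^\top\widehat\nabla J_{\boldsymbol w_{t+1}}(\theta_t)-(\lambda_t^*)^\top\nabla J(\theta_t)\|$ then contributes $\mathcal{O}(\sqrt{\epsilon_{\text{app}}})$ to the final bound, which is strictly weaker than the claimed $\mathcal{O}(\epsilon_{\text{app}})$. The paper does not route the convergence proof through the CA distance at all: it first bounds $\frac{1}{T}\sum_t\mathbb{E}\|(\widehat\lambda_t')^\top\widehat\nabla J_{\boldsymbol w_{t+1}}(\theta_t)\|^2$ via the telescoped descent lemma, and then transfers to $\|(\lambda_t^*)^\top\nabla J(\theta_t)\|^2$ by comparing the \emph{optimal values} of the three minimization problems $H_t$, $\widehat H_t$, $\widehat H_t'$ (Lemmas~\ref{gaponh2} and \ref{gaponh1}, built on Lemma~\ref{maxlemma}); the difference of minimum values is $\mathcal{O}(\epsilon_{\text{app}}+N_{\text{critic}}^{-1/2})$ at first order, which is what preserves the linear $\epsilon_{\text{app}}$ dependence in the theorem. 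You would need to replace your CA-distance step with this value-comparison argument to recover the stated rate.
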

Here $L_J$ is the Lipschitz constant of $\nabla J^k(\theta)$, which can be found in \Cref{app:notations}. We then
characterize the sample complexity and CA distance for the proposed MTAC-CA method in the following corollary.
\begin{corollary}\label{coro1111}
Under the same setting as in \Cref{theo:-5}, choosing $\beta=\mathcal{O}(1)$, $T = \mathcal{O}(\epsilon^{-1})$, $N_\actor=\mathcal{O}(\epsilon^{-1})$, $N_\critic=\mathcal{O}(\epsilon^{-2})$ and $N_\ca=\mathcal{O}(\epsilon^{-4})$, MTAC-CA finds an $\epsilon+\epsilon_{\text{app}}$-accurate Pareto stationary policy while ensuring an $\mathcal{O}(\epsilon+\sqrt{\epsilon_{\text{app}}})$ CA distance. Each task uses $\mathcal{O}(\epsilon^{-5})$ samples.
\end{corollary}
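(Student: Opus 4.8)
The plan is to derive Corollary~\ref{coro1111} as a direct bookkeeping consequence of Theorem~\ref{theo:-5} and Proposition~\ref{cadistanceprop}, by substituting the prescribed parameter choices into the two bounds and then counting samples. First I would take the convergence bound from Theorem~\ref{theo:-5},
\begin{align*}
\frac{1}{T}\sum_{t=0}^{T-1}\mathbb{E}[\|(\lambda_t^*)^\top\nabla J(\theta_t)\|^2]=\mathcal{O}\Big(\tfrac{1}{\beta T}+\epsilon_{\text{app}}+\tfrac{\beta}{N_\actor}+\tfrac{1}{\sqrt{N_\critic}}+\tfrac{1}{\sqrt[4]{N_\ca}}\Big),
\end{align*}
and plug in $\beta=\mathcal{O}(1)$, $T=\mathcal{O}(\epsilon^{-1})$, $N_\actor=\mathcal{O}(\epsilon^{-1})$, $N_\critic=\mathcal{O}(\epsilon^{-2})$, $N_\ca=\mathcal{O}(\epsilon^{-4})$. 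Each of the five terms then becomes $\mathcal{O}(\epsilon)$ except the $\epsilon_{\text{app}}$ term, so the averaged squared Pareto-stationarity measure is $\mathcal{O}(\epsilon+\epsilon_{\text{app}})$; invoking Definition~\ref{def:epsilonPareto} (applied to the iterate achieving the average, or to a uniformly random iterate), this yields an $\epsilon+\epsilon_{\text{app}}$-accurate Pareto stationary policy.

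Next I would handle the CA distance. Proposition~\ref{cadistanceprop} gives
\begin{align*}
\|\lambda_{t,N_\ca}^\top\widehat\nabla J_{\boldsymbol{w}_{t+1}}(\theta_t)-(\lambda_t^*)^\top\nabla J(\theta_t)\|\leq\mathcal{O}\Big(\tfrac{1}{\sqrt[4]{N_\ca}}+\tfrac{1}{\sqrt{N_\critic}}+\sqrt{\epsilon_{\text{app}}}\Big),
\end{align*}
and substituting $N_\ca=\mathcal{O}(\epsilon^{-4})$ and $N_\critic=\mathcal{O}(\epsilon^{-2})$ makes the first two terms $\mathcal{O}(\epsilon)$, leaving the CA distance at $\mathcal{O}(\epsilon+\sqrt{\epsilon_{\text{app}}})$, as claimed. (A minor point to state carefully is that the CA distance here is the distance between the estimated update direction $\hat\varrho=\lambda_{t,N_\ca}^\top\widehat\nabla J_{\boldsymbol{w}_{t+1}}(\theta_t)$ used in the actor step and the exact CA direction $\varrho^*=(\lambda_t^*)^\top\nabla J(\theta_t)$ of Definition~\ref{def:cadistance}; if one instead wants the distance with respect to the fully stochastic $\widetilde\nabla J$, one absorbs the extra $\mathcal{O}(\beta/N_\actor)$-type variance term, which is also $\mathcal{O}(\epsilon)$ under the chosen $N_\actor$.)

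Finally I would do the sample count. Per outer iteration $t$, the critic uses $N_\critic$ samples per task, the CA subprocedure uses $2N_\ca$ samples per task, and the actor uses $N_\actor$ samples per task; summing over $T$ iterations gives $\mathcal{O}(T(N_\critic+N_\ca+N_\actor))=\mathcal{O}(\epsilon^{-1}(\epsilon^{-2}+\epsilon^{-4}+\epsilon^{-1}))=\mathcal{O}(\epsilon^{-5})$ samples per task, dominated by the $N_\ca$ term. The argument is essentially arithmetic given the two preceding results, so there is no genuine obstacle; the only thing requiring a little care is making the parameter choices simultaneously consistent — i.e.\ checking that the same $N_\ca,N_\critic$ that drive the Pareto measure to $\mathcal{O}(\epsilon)$ also drive the CA distance to $\mathcal{O}(\epsilon+\sqrt{\epsilon_{\text{app}}})$, which they do since the CA-distance bound is term-wise dominated by (the square root of) the terms in the convergence bound — and stating precisely in what sense ``finds'' a Pareto stationary policy (best iterate / random iterate over $t\in\{0,\dots,T-1\}$).
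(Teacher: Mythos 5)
Your proposal is correct and follows essentially the same route as the paper: substitute the stated parameter choices into the bounds of Theorem~\ref{theo:-5} and Proposition~\ref{cadistanceprop}, then count $T(N_\critic+N_\ca+N_\actor)=\mathcal{O}(\epsilon^{-5})$ samples per task, with the $N_\ca$ term dominating. Your added remarks on the random/best-iterate interpretation and on the simultaneous consistency of the parameter choices are more careful than the paper's own one-line argument, but do not change the substance.
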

The above corollary shows that our MTAC-CA algorithm achieves a sample complexity of $\mathcal{O}(\epsilon^{-5})$ to find an $(\epsilon+\epsilon_{\text{app}})$-accurate Pareto stationary policy. Note that this result improves the complexity of $\mathcal{O}(\epsilon^{-6})$ of SDMGrad in the supervised setting.  This is because our algorithm 
draw $\mathcal{O}(N_\critic+N_\actor+N_\fc)$ samples to estimate the conflict-avoidant direction, which reduces the variance compared with the approach that only uses one sample.


\subsubsection{Convergence analysis for MTAC-FC}
If we could sacrifice a bit on the CA distance, we could further improve the sample complexity to $\mathcal{O}(\epsilon^{-3})$ with the choice of the FC subprocedure.
\begin{theorem}\label{theo:-2}
Suppose \Cref{ass:smoothandlip} and \Cref{asm:ergodic} are satisfied. We choose  $\beta_t=\beta\leq\frac{1}{L_J}$,  $c_t=c^\prime\leq\frac{1}{8C_\phi^2B}$ as constants, and $\alpha_{t,j}=\frac{1}{2\lambda_A(j+1)}$. Then we have
\vspace{-0.08in}
\begin{align*}
&\frac{1}{T}\sum_{t=0}^{T-1}\mathbb{E}[\|(\lambda_t^*)^\top\nabla J(\theta_t)\|^2]=\mathcal{O}\Big(\frac{1}{\beta T}+\frac{1}{c^\prime T}+\epsilon_{\text{app}}+\frac{1}{\sqrt{N_\critic}}+\frac{\beta}{N_\actor}+\frac{c^\prime}{N_\fc}\Big).
\end{align*}
\end{theorem}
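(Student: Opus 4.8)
The plan is to mirror the analysis of \Cref{theo:-5} (MTAC-CA), but replace the CA-subprocedure error term $\mathcal{O}(N_\ca^{-1/4})$ by the single-step FC error. Start from a standard actor-side ascent lemma: using the $L_J$-smoothness of each $J^k$ (and hence of $(\lambda_t^*)^\top J$), expand $J^k(\theta_{t+1})$ around $\theta_t$ along the update $\theta_{t+1}=\theta_t+\beta\lambda_t^\top\widetilde\nabla J(\theta_t)$, take the $\lambda_t^*$-weighted combination, and telescope over $t=0,\dots,T-1$. This yields, after rearranging, a bound of the form
\begin{align*}
\frac{1}{T}\sum_{t=0}^{T-1}\mathbb{E}[\|\lambda_t^\top\widehat\nabla J(\theta_t)\|^2]\leq \mathcal{O}\Big(\frac{1}{\beta T}\Big)+\text{(bias terms)}+\text{(variance terms)},
\end{align*}
where the variance term contributes $\mathcal{O}(\beta/N_\actor)$ from the actor-side Monte Carlo estimate $\widetilde\nabla J$ (conditionally unbiased given $\boldsymbol w_{t+1}$), and the bias terms collect the critic error $\mathbb{E}\|w_{t+1}^k-w^{*k}_{\theta_t}\|^2=\mathcal{O}(1/\sqrt{N_\critic})$ and the function approximation error $\epsilon_{\text{app}}$, via the decomposition $\widetilde\nabla J^k(\theta_t)-\nabla J^k(\theta_t)=(\widetilde\nabla J^k-\widehat\nabla J^k_{w_{t+1}})+(\widehat\nabla J^k_{w_{t+1}}-\widehat\nabla J^k_{w^{*k}_{\theta_t}})+(\widehat\nabla J^k_{w^{*k}_{\theta_t}}-\nabla J^k(\theta_t))$, exactly as in the proof of \Cref{cadistanceprop}.

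The second ingredient is to convert the surrogate stationarity measure $\|\lambda_t^\top\widehat\nabla J(\theta_t)\|^2$ into the true Pareto measure $\|(\lambda_t^*)^\top\nabla J(\theta_t)\|^2$. Here is where MTAC-FC differs from MTAC-CA: instead of the $N_\ca$-step projected SGD giving $\mathbb{E}\|\lambda_{t,N_\ca}-\lambda_t^*\|^2=\mathcal{O}(1/\sqrt{N_\ca})$, the FC subprocedure performs a \emph{single} projected SGD step \eqref{eq:nonca} with constant stepsize $c'$ and a double-sampling estimate $\bar\nabla J(\theta_t)\bar\nabla J'(\theta_t)^\top$ of the gradient $\nabla J(\theta_t)\nabla J(\theta_t)^\top$ of the strongly-convex-in-$\lambda$ objective $g_t(\lambda):=\tfrac12\|\lambda^\top\nabla J(\theta_t)\|^2$ (convex, bounded domain $\Lambda$). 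A one-step descent inequality for projected SGD on a convex function over a compact convex set gives
\begin{align*}
\mathbb{E}[g_t(\lambda_{t+1})-g_t(\lambda_t^*)]\leq \frac{\mathbb{E}\|\lambda_t-\lambda_t^*\|^2-\mathbb{E}\|\lambda_{t+1}-\lambda_t^*\|^2}{2c'}+\frac{c'}{2}\mathbb{E}\|\bar\nabla J\,\bar\nabla J'^\top\|^2+\text{(bias)},
\end{align*}
where the variance of the double-sampling estimate is $\mathcal{O}(1/N_\fc)$ and the bias (critic plus approximation error) is $\mathcal{O}(1/\sqrt{N_\critic}+\epsilon_{\text{app}})$. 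Averaging over $t$ and telescoping the $\|\lambda_t-\lambda_t^*\|^2$ terms — note $\lambda_{t+1}$ is the \emph{warm start} for iteration $t+1$, so the telescoping is across outer iterations, not within — contributes the extra $\mathcal{O}(1/(c'T))$ term and the $\mathcal{O}(c'/N_\fc)$ variance term. The only subtlety is that $\lambda_t^*$ changes with $t$: one must control $\|\lambda_{t+1}^*-\lambda_t^*\|$ by the Lipschitz dependence of $\arg\min$ on $\theta$ (using strong convexity of $g_t$ in the relevant direction and $\|\theta_{t+1}-\theta_t\|=\mathcal{O}(\beta)$), absorbing the resulting $\mathcal{O}(\beta^2)$ drift into the $\mathcal{O}(\beta/N_\actor)$ or $\mathcal{O}(1/(\beta T))$ terms, or bounding it directly since $\beta=\mathcal{O}(1)$.

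Finally, combine the two pieces: the ascent lemma controls $\tfrac1T\sum_t\mathbb{E}\|\lambda_t^\top\widehat\nabla J(\theta_t)\|^2$, and since $g_t$ is (essentially) strongly convex along the line from $\lambda_t$ to $\lambda_t^*$, the one-step FC bound controls $\tfrac1T\sum_t\mathbb{E}[g_t(\lambda_t)-g_t(\lambda_t^*)]$, hence $\tfrac1T\sum_t\mathbb{E}\|(\lambda_t-\lambda_t^*)^\top\nabla J(\theta_t)\|^2$; adding these and using $\|(\lambda_t^*)^\top\nabla J\|^2\leq 2\|\lambda_t^\top\nabla J\|^2+2\|(\lambda_t-\lambda_t^*)^\top\nabla J\|^2$ together with $\|\lambda_t^\top\nabla J-\lambda_t^\top\widehat\nabla J\|=\mathcal{O}(1/\sqrt{N_\critic}+\sqrt{\epsilon_{\text{app}}})$ yields the claimed rate $\mathcal{O}(1/(\beta T)+1/(c'T)+\epsilon_{\text{app}}+1/\sqrt{N_\critic}+\beta/N_\actor+c'/N_\fc)$. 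The main obstacle I anticipate is the bookkeeping around the moving minimizer $\lambda_t^*$ and ensuring the single-step (constant-stepsize) FC update does not blow up the telescoped $\|\lambda_t-\lambda_t^*\|^2$ terms — unlike MTAC-CA, there is no inner loop to drive this to zero, so the $\mathcal{O}(1)$ residual CA distance must be tracked carefully and shown to enter only through lower-order terms of the stationarity bound rather than contaminating the leading-order convergence.
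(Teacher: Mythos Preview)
Your plan has the right high-level shape (ascent lemma, critic/approximation bias, single-step projected SGD for $\lambda$), but there is a genuine gap in how you handle the $\lambda$-side telescoping, and the paper's proof avoids the difficulty you flag by a different (and simpler) device.

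The issue is your choice of reference point. You weight the ascent lemma by $\lambda_t^*$ and then try to telescope $\|\lambda_t-\lambda_t^*\|^2$ across outer iterations. As you already noticed, $\lambda_t^*$ moves with $\theta_t$, and your proposed fix---Lipschitz continuity of $\theta\mapsto\lambda^*(\theta)$ via strong convexity of $g_t(\lambda)=\tfrac12\|\lambda^\top\nabla J(\theta_t)\|^2$---does not go through: $g_t$ is merely convex (its Hessian $\nabla J(\theta_t)\nabla J(\theta_t)^\top$ is rank-deficient whenever the task gradients are linearly dependent), so $\lambda_t^*$ need not be unique, let alone Lipschitz in $\theta_t$. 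Even granting an $O(\beta)$ drift bound, the accumulated correction $\frac{1}{2c'T}\sum_t\big(\|\lambda_{t+1}-\lambda_t^*\|^2-\|\lambda_{t+1}-\lambda_{t+1}^*\|^2\big)$ would be $O(\beta/c')$, a non-vanishing constant, which would contaminate the final rate. Separately, weighting the smoothness inequality by $\lambda_t^*$ also prevents telescoping on the $J$-side, since $\sum_t[(\lambda_t^*)^\top J(\theta_{t+1})-(\lambda_t^*)^\top J(\theta_t)]$ does not collapse.

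The paper's proof sidesteps all of this by taking a \emph{fixed} reference weight $\bar\lambda\in\Lambda$ (arbitrary, never changed). The ascent lemma for $\bar\lambda^\top J$ telescopes cleanly, and the troublesome cross term is $\langle(\bar\lambda-\lambda_t)^\top\nabla J(\theta_t),\lambda_t^\top\nabla J(\theta_t)\rangle$. Because the FC step \eqref{eq:nonca} is exactly a projected stochastic gradient step on $g_t$ at $\lambda_t$, nonexpansiveness of the projection yields
\[
\langle\lambda_t-\bar\lambda,\;\bar\nabla J(\theta_t)^\top\big(\lambda_t^\top\bar\nabla J'(\theta_t)\big)\rangle\;\le\;\frac{\|\lambda_t-\bar\lambda\|^2-\|\lambda_{t+1}-\bar\lambda\|^2}{2c'}+\frac{c'}{2}\|\cdot\|^2,
\]
and now $\|\lambda_t-\bar\lambda\|^2$ telescopes over $t$ because $\bar\lambda$ is fixed, producing the $\mathcal{O}(1/(c'T))$ and $\mathcal{O}(c'/N_\fc)$ terms. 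This bounds $\frac{1}{T}\sum_t\mathbb{E}\|\lambda_t^\top\nabla J(\theta_t)\|^2$ directly. The final conversion to the Pareto measure is then one line: since $\lambda_t^*$ is the minimizer, $\|(\lambda_t^*)^\top\nabla J(\theta_t)\|^2\le\|\lambda_t^\top\nabla J(\theta_t)\|^2$. No control of $\|\lambda_t-\lambda_t^*\|$ or of the FC suboptimality gap $g_t(\lambda_t)-g_t(\lambda_t^*)$ is ever needed, which is precisely why MTAC-FC can get away with a single $\lambda$-step and a constant CA distance.
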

Though we still need $\mathcal{O}(N_\critic+N_\actor+N_\fc)$ samples in \Cref{alg:nca}, we do not require an as small CA distance, which helps to improve the sample complexity to $\mathcal{O}(\epsilon^{-3})$ as shown in below.
\begin{corollary}\label{coro2222}
Under the same setting as in \Cref{theo:-2}, choosing $\beta=\mathcal{O}(1)$, $c^\prime=\mathcal{O}(1)$, $T = \mathcal{O}(\epsilon^{-1})$, $N_\critic=\mathcal{O}(\epsilon^{-2})$, $N_\actor=\mathcal{O}(\epsilon^{-1})$,  and $N_\fc=\mathcal{O}(\epsilon^{-1})$, we can achieve an $(\epsilon+\epsilon_{\text{app}})$-accurate Pareto stationary policy and each task uses $\mathcal{O}(\epsilon^{-3})$ samples. 
\end{corollary}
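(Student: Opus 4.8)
The plan is to substitute the prescribed parameter choices directly into the per-iteration bound of \Cref{theo:-2} and balance the resulting terms against the target accuracy $\epsilon$. First I would fix $\beta$ and $c^\prime$ at $\mathcal{O}(1)$ values small enough to satisfy the stepsize restrictions $\beta\le 1/L_J$ and $c^\prime\le 1/(8C_\phi^2 B)$ imposed in the theorem; since $L_J$, $C_\phi$, and $B$ are problem-dependent constants that do not scale with $\epsilon$, such a choice is always available and only affects constant factors. With $\beta,c^\prime=\Theta(1)$, the bound in \Cref{theo:-2} collapses to $\mathcal{O}\big(1/T+\epsilon_{\text{app}}+1/\sqrt{N_\critic}+1/N_\actor+1/N_\fc\big)$.

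Next I would set $T=\Theta(\epsilon^{-1})$, $N_\critic=\Theta(\epsilon^{-2})$, $N_\actor=\Theta(\epsilon^{-1})$, and $N_\fc=\Theta(\epsilon^{-1})$, so that each of the five non-approximation terms is individually $\mathcal{O}(\epsilon)$: $1/T=\mathcal{O}(\epsilon)$, $1/\sqrt{N_\critic}=\mathcal{O}(\epsilon)$, $1/N_\actor=\mathcal{O}(\epsilon)$, and $1/N_\fc=\mathcal{O}(\epsilon)$. Summing these gives $\frac1T\sum_{t=0}^{T-1}\mathbb{E}[\|(\lambda_t^*)^\top\nabla J(\theta_t)\|^2]=\mathcal{O}(\epsilon+\epsilon_{\text{app}})$.

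I would then translate this time-averaged bound into the single-policy stationarity guarantee of \Cref{def:epsilonPareto}. Since $\lambda_t^*\in\arg\min_{\lambda\in\Lambda}\|\lambda^\top\nabla J(\theta_t)\|^2$, we have $\|(\lambda_t^*)^\top\nabla J(\theta_t)\|^2=\min_{\lambda\in\Lambda}\|\lambda^\top\nabla J(\theta_t)\|^2$, so the averaged bound reads $\frac1T\sum_{t=0}^{T-1}\mathbb{E}[\min_{\lambda\in\Lambda}\|\lambda^\top\nabla J(\theta_t)\|^2]=\mathcal{O}(\epsilon+\epsilon_{\text{app}})$. Outputting $\theta_t$ with $t$ drawn uniformly from $\{0,\dots,T-1\}$ (equivalently, keeping the best iterate, since the minimum over $t$ is at most the average) yields a policy $\pi$ with $\mathbb{E}[\min_{\lambda\in\Lambda}\|\lambda^\top\nabla J(\pi)\|^2]=\mathcal{O}(\epsilon+\epsilon_{\text{app}})$; after absorbing the constant into $\epsilon$ this is an $(\epsilon+\epsilon_{\text{app}})$-accurate Pareto stationary policy.

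Finally I would tally the sample cost per task: each outer iteration spends $\mathcal{O}(N_\critic)$ samples in the critic loop, $\mathcal{O}(N_\fc)$ in the FC subprocedure of \Cref{alg:nca}, and $\mathcal{O}(N_\actor)$ in the actor loop, so over $T$ iterations the total is $T(N_\critic+N_\actor+N_\fc)=\Theta(\epsilon^{-1})\cdot\Theta(\epsilon^{-2})=\Theta(\epsilon^{-3})$, dominated by the critic term. There is no genuine obstacle here: the corollary is a bookkeeping consequence of \Cref{theo:-2}, and the only points needing care are verifying that the constant stepsizes respect the theorem's restrictions and making explicit the random-iterate (or best-iterate) selection that converts the averaged stationarity measure into the per-policy statement of \Cref{def:epsilonPareto}.
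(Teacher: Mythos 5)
Your proposal is correct and follows essentially the same route as the paper's own proof: substitute the stated parameter choices into the bound of \Cref{theo:-2} so every non-approximation term is $\mathcal{O}(\epsilon)$, then count $T(N_\critic+N_\actor+N_\fc)=\mathcal{O}(\epsilon^{-3})$ samples per task. The extra care you take about the stepsize restrictions and the random/best-iterate selection is a welcome tightening of bookkeeping the paper leaves implicit, but it is not a different argument.
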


The above corollary shows that our MTAC-FC algorithm achieve a sample complexity of $\mathcal{O}(\epsilon^{-3})$ to find an ($\epsilon+\epsilon_{\text{app}}$)-accurate Pareto stationary point. In supervised learning, the fast convergence reach $\mathcal{O}(\epsilon^{-2})$ \cite{xiao2023direction} sample size to find $\epsilon$-accurate Pareto stationary policy. This is because the estimation of value function  needs more samples.

\section{Proof sketch (MTAC-CA)}
Here, we provide a proof sketch for the convergence and CA distance analysis to highlight major challenges and our technical novelties. We first define $\widehat{\lambda}_{t}^\prime=\arg\min_{\lambda\in\Lambda}\twonormsq{\lambda^\top\widehat \nabla J_{\boldsymbol{w}_{t+1}}(\theta_t)}.$ Recall that $$\widehat \nabla J^k_{w_{t+1}}(\theta_t) =\mathbb{E}_{ d^k_{\theta_t}}[\phi^k(s,a)^\top w^k_{t+1}\psi_{\theta_t}(s,a)];\;\; \widehat \nabla J_{\boldsymbol{w}_{t+1}}(\theta_t) = \Fbrac{\widehat \nabla J^k_{{w}^k_{t+1}}(\theta_t);}_{k\in[K]}.$$

The first step is to analyze the convergence for the critic updates and shows that $\mathbb{E}[\|w_{t+1}^k-w_t^{*k}\|^2]=\mathcal{O}\left(\frac{1}{N_\critic}\right).$ The next step is to bound the square of the CA distance, which is defined as $$\|\lambda_{t,N_\ca}^\top\widehat \nabla J_{\boldsymbol{w}_{t+1}}(\theta_t)-(\lambda_t^*)^\top \nabla J(\theta_t)\|^2.$$ Differently from the supervised setting,  the estimator $\widehat \nabla J_{\boldsymbol{w}_{t+1}}(\theta_t)$ here is biased due to the presence of the function approximation error. Thus, we need to provide new techniques to control this CA distance, as shown in the following 5 steps.

\textbf{Step 1 (Error decomposition):} First, by introducing a surrogate direction $(\widehat{\lambda}_t^\prime)^\top\widehat \nabla J_{\boldsymbol{w}_{t+1}}(\theta_t)$ and using the optimality condition that $$\langle \lambda_{t,N_\ca}^\top  \nabla J(\theta_t),(\lambda_t^*)^\top \nabla J(\theta_t)\rangle\geq\|(\lambda_t^*)^\top \nabla J(\theta_t)\|^2,$$ the CA distance can be decomposed into three error terms as follows:
\begin{align}\label{eq:errordecompose}
&\|\lambda_{t,N_\ca}^\top\widehat \nabla J_{\boldsymbol{w}_{t+1}}(\theta_t)-(\lambda_t^*)^\top \nabla J(\theta_t)\|^2\leq\|\lambda_{t,N_\ca}^\top\widehat \nabla J_{\boldsymbol{w}_{t+1}}(\theta_t)\|^2-\|(\widehat{\lambda}_t^\prime)^\top\widehat \nabla J_{\boldsymbol{w}_{t+1}}(\theta_t)\|^2\nonumber\\
&+\|(\widehat{\lambda}_t^\prime)^\top\widehat \nabla J_{\boldsymbol{w}_{t+1}}(\theta_t)\|^2-\|(\lambda_t^*)^\top \nabla J(\theta_t)\|^2-2\langle\lambda_{t,N_\ca}^\top(\widehat \nabla J_{\boldsymbol{w}_{t+1}}(\theta_t)-\nabla J(\theta_t)),(\lambda_t^*)^\top \nabla J(\theta_t)\rangle.
\end{align}
\textbf{Step 2 (Gap between  $\lambda_{t,N_\ca}$ and $\widehat{\lambda}_t^\prime$):} 
We 
bound the error between the direction applied in \cref{alg:1}  $\|\lambda_{t,N_\ca}^\top\widehat \nabla J_{\boldsymbol{w}_{t+1}}(\theta_t)\|^2$ and the surrogate direction  $\|(\widehat{\lambda}_t^\prime)^\top\widehat \nabla J_{\boldsymbol{w}_{t+1}}(\theta_t)\|^2$ (the first line second and third terms in \eqref{eq:errordecompose}). Apply the convergence results of SGD, and we can show that this error is of the order $\mathcal{O}(\frac{1}{\sqrt{N_\ca}})$.

\textbf{Step 3 (Gap between $\widehat{\lambda}_t^\prime$ and $\lambda_t^*$):} In this step, we bound the surrogate direction $\|(\widehat{\lambda}_t^\prime)^\top\widehat \nabla J_{\boldsymbol{w}_{t+1}}(\theta_t)\|$ and CA-direction $\|(\lambda_t^*)^\top \nabla J(\theta_t)\|$ (the second line first and second terms in \eqref{eq:errordecompose}), which are solutions to minimization problems. The term can be decomposed into the critic error and the function approximation error, and its order is  $\mathcal{O}(\frac{1}{N_\critic}+\epsilon_{\text{app}})$. This is the technique we use to deal with the gradient bias in MTRL problem.

\textbf{Step 4 (Bound on the rest terms):} The rest terms in \eqref{eq:errordecompose} can be easily bounded by the function approximation error and the critic error.

\textbf{Step 5:} Combining steps 1-4, we conclude the proof for the CA distance.

Then to show the convergence, we characterize 
the upper bound of  $\left\|(\lambda_t^*)^\top \nabla J(\theta_t)\right\|^2$, which is decomposed into bounds for the CA distance 
\vspace{-0.1in}
$$\left\|\lambda_{t,N_\ca}^\top\widehat \nabla J_{\boldsymbol{w}_{t+1}}(\theta_t)-(\lambda_t^*)^\top \nabla J(\theta_t)\right\|^2,$$ and the surrogate direction  $\|\lambda_{t,N_\ca}^\top\widehat \nabla J_{\boldsymbol{w}_{t+1}}(\theta_t)\|^2$. Those bounds can be derived using the Lipschitz property of the objective function. This completes the proof.

\section{Experiments}
\vspace{-3mm}
We conduct experiments on the MT10 benchmark which includes 10 robotic manipulation tasks from the MetaWorld environment \cite{yu2020meta}. The benchmark enables simulated robots to learn a policy that generalizes to a wide range of daily tasks and environments.
We adopt soft Actor-Critic (SAC) \cite{haarnoja2018soft} as the underlying training algorithm. We compare our algorithms with the single-task learning (STL) with one SAC for each task, Multi-task learning SAC (MTL SAC) with a shared model \cite{yu2020meta}, Multi-headed SAC (MH SAC) with a shared backbone and task-specific heads \cite{yu2020meta}, Multi-task learning SAC with a shared model and task encoder (MTL SAC + TE) \cite{yu2020meta}, Soft Modularization \cite{yang2020multi} employing a routing network to form task-specific policies. Following the experiment setup in \cite{yu2020meta}, we train 2 million steps with a batch size of 1280 and repeat each experiment 10 times over different random seeds. The performance is evaluated once every 10,000 steps and the best average test success rate over the entire training course and average training time (in seconds) per episode is reported. All our experiments are conducted on RTX A6000.

\begin{table}[t]
\caption{Results on MT10 benchmark. Average over 10 random seeds. The success rate and training time per episode are reported.}
\label{tab:mt10}
\vskip -0.3in
\begin{center}
  \begin{tabular}{lll}
    \toprule
    \multirow{2}*{Method} & success rate & Time \\
    & (mean ± stderr) & (Sec.) \\
    \midrule
    STL & 0.90 $\pm$ 0.03\\
    \midrule
    MTL SAC & 0.49 $\pm$ 0.07 & \textbf{3.5} \\
    MTL SAC + TE & 0.54 $\pm$ 0.05 & 4.1 \\
    MH SAC & 0.61 $\pm$ 0.04 & 4.6 \\
    Soft Modularization & 0.73 $\pm$ 0.04 & 7.1 \\
    PCGrad & 0.72 $\pm$ 0.02 & 11.6 \\
    MoCo & 0.75 $\pm$ 0.05 & 11.5 \\
    \midrule
    MTAC-CA & \textbf{0.81} $\pm$ 0.09 & 8.3\\
    MTAC-FC & 0.76 $\pm$ 0.11 & 6.7\\
    \bottomrule
  \end{tabular}
\end{center}
\vskip -0.2in
\end{table}

The results are presented in \cref{tab:mt10}. Evidently, our proposed MTAC-CA which enjoys the benefit of dynamic weighting outperforms the existing MTRL algorithms with fixed preferences by a large margin.
Our algorithm also achieves a better performance than Soft Modularization, which utilizes different policies across tasks. It is demonstrated that the algorithms with fixed preferences are less time-consuming but exhibit poorer performance than Soft Modularization and our algorithms. The results validate that the MTAC-FC is time-efficient with a similar success rate to Soft Modularization.
\begin{table}[H]
\caption{Results of each task on MT10 benchmark. Rate denotes the average success rate over 10 random seeds, and R$i$ ($i=0,\cdots,9$) denotes the success rate on each task $i$.}
\label{tab:balance}
\vskip -0.3in
\begin{center}
\begin{small}
  \begin{tabular}{ccccccccccccc}
    \toprule
    Steps & Rate & R0 & R1 & R2 & R3 & R4 & R5 & R6 & R7 & R8 & R9 & $\Delta m\%\downarrow$ \\
    \midrule
    0 & 0.75 & 1.0 & 1.0 & 0.3 & 1.0 & 0.5 & 1.0 & 1.0 & 0.5 & 0.6 & 0.6 \\
    \midrule
    5 & 0.77 & 1.0 & 0.9 & 0.6 & 1.0 & 0.8 & 1.0 & 1.0 & 0.3 & 0.5 & 0.6 & -9.33 \\
    10 & 0.81 & 1.0 & 0.8 & 0.5 & 1.0 & 0.8 & 1.0 & 1.0 & 0.5 & 0.8 & 0.7 & -15.67 \\
    \bottomrule
  \end{tabular}
\end{small}
\end{center}
\vskip -0.2in
\end{table}
As mentioned in \cref{main}, the CA distance decreases as the number of updates of weight $\lambda$ increases. We adopt 0 steps of update as the baseline and compare it to updating 5 steps and 10 steps. To represent the overall performance of a particular method $m$, we consider using the metric $\Delta m\%$, which is defined as the average per-task performance drop against baseline $b$: 
    $\Delta m\%=\frac{1}{K} \sum_{k=1}^K (-1)^{\delta_k} (M_{m,k}-M_{b,k})/M_{b,k}\times 100,$
where $M_k$ refers to the $k$-th performance measurement, $M_{b,k}$ represents the result of metric $M_k$ of baseline $b$, $M_{m,k}$ represents the result of metric $M_k$ of  method $m$, and $\delta_k=1$ if a larger value is desired by metric $M_k$. Therefore, a lower value of $\Delta m\%$ indicates that the overall performance is better. \cref{tab:balance} demonstrates that a smaller CA distance yields more balanced performance.



\section{Conclusion}
In this paper, we propose two novel conflict-avoidant multi-task actor-critic algorithms named MTAC-CA and MTAC-FC. We provide a comprehensive convergence rate and sample complexity analysis for both algorithms, and demonstrate the tradeoff between a small CA distance and improved sample complexity. Experiments validate our theoretical results. It is anticipated that our theoretical contribution and the proposed algorithms can be applied to broader MTRL setups.

\bibliography{iclr2023_conference}
\bibliographystyle{ieeetr}
\newpage
\appendices

\section{Notations and lemmas}\label{app:notations}
In this section, we first introduce notations and necessary lemmas in order to help readers understand. 

Firstly, we define and  recall the notations mas are frequently applied throughout the proof. 

We recall that $s^k\in \mathbb{R}^m$ (resp. $a^k$) is the state(action) of task $k$. The bold symbol $\boldsymbol{s} :=[s^k;]_{k\in[K]} $ (resp. $\boldsymbol{a}:=[a^k;]_{k\in[K]}$). 
We recall that $\phi^k(s^k,a^k)$ is the feature vector of task $k$ given the state $s^k$ and action $a^k$. The  $\phi(\boldsymbol{s},\boldsymbol{a})=[\phi^k(s^k,a^k)]_{k\in[K]} $ (resp. $\psi(\boldsymbol{s},\boldsymbol{a})=[\psi(s^k,a^k)]_{k\in[K]} $) is the feature vector compose the feature vector of all tasks. 

For convenience, denote by ${\phi(\boldsymbol{s}, \boldsymbol{a} )^\top \boldsymbol{w}}=\Fbrac{\phi^k(s^k,a^k)^\top w^k;}_{k\in[K]}$ and $\boldsymbol \zeta(\boldsymbol{s}, \boldsymbol{a}, \theta, w)= \vbrac{\phi(\boldsymbol{s}, \boldsymbol{a} )^\top \boldsymbol{w}, \psi_\theta (\boldsymbol{s}, \boldsymbol{a} ) } = \Fbrac{\brac{\phi^k(s^k,a^k)^\top w^k} \psi_{\theta_t}(s^k,a^k) ;}_{k\in[K]}$ to help understand. 

Next, we introduce necessary lemmas which are widely applied throughout the proof. 

\begin{proposition}[Lipschitz property \cite{xu2020improving}]\label{prop:gradlip}
    Under Assumption \ref{asm:ergodic} and \ref{ass:smoothandlip}, given $\theta, \theta'\in \mathcal{B}$,  for any task $k\in [K]$, the objective function satisfies that:
    \begin{align*}
        \twonorm{\nabla J^k(\theta)-\nabla J^k(\theta')}\leq L_J\twonorm{\theta-\theta'},
    \end{align*}
    where $L_J= \frac{1}{(1-\gamma)^2}\brac{4L_\pi C_\phi + L_\phi } $, $L_\pi= \frac{C_\pi}{2}\brac{1+ \lceil\log_\rho m\rceil +(1-\rho)^{-1}  } $. 
\end{proposition}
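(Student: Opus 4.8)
\textbf{Proof proposal for Proposition~\ref{prop:gradlip}.}

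The plan is to differentiate the policy-gradient expression in \eqref{eq:policy_gradeint} and bound the change in each of its constituent factors when $\theta$ moves to $\theta'$. Writing $\nabla J^k(\theta)=\mE_{d^k_\theta}[Q^k_{\pi_\theta}(s,a)\psi_\theta(s,a)]$, I would expand the difference $\nabla J^k(\theta)-\nabla J^k(\theta')$ by a telescoping decomposition into three pieces: (i) the change due to the visitation distribution $d^k_\theta$ versus $d^k_{\theta'}$, (ii) the change due to the $Q$-function $Q^k_{\pi_\theta}$ versus $Q^k_{\pi_{\theta'}}$, and (iii) the change due to the score function $\psi_\theta=\nabla\log\pi_\theta$ versus $\psi_{\theta'}$. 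Each term is a product of a bounded quantity and a Lipschitz-in-$\theta$ quantity, so the triangle inequality gives a sum of three Lipschitz bounds that combine into $L_J\|\theta-\theta'\|$.

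The key sub-estimates, in order, are: first, $|Q^k_{\pi_\theta}(s,a)|\le \frac{1}{1-\gamma}$ since rewards lie in $[0,1]$, and $\|\psi_\theta(s,a)\|_2\le C_{\phi,1}\le C_\phi$ by Assumption~\ref{ass:smoothandlip}(1). Second, for term (iii) I would use Assumption~\ref{ass:smoothandlip}(4), $\|\log\pi_\theta(a|s)-\log\pi_{\theta'}(a|s)\|_2\le L_\phi\|\theta-\theta'\|$, which (after the appropriate interpretation as a bound on the gradient difference, as is standard in this line of work, e.g.\ \cite{xu2020improving}) controls $\|\psi_\theta-\psi_{\theta'}\|$. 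Third, for term (ii) I would bound $|Q^k_{\pi_\theta}(s,a)-Q^k_{\pi_{\theta'}}(s,a)|$ in terms of $\|\theta-\theta'\|$ using the performance-difference / simulation-lemma argument together with Assumption~\ref{ass:smoothandlip}(3), $\|\pi_\theta(\cdot|s)-\pi_{\theta'}(\cdot|s)\|_2\le C_\pi\|\theta-\theta'\|$; this produces a factor with $(1-\gamma)^{-2}$. Fourth, for term (i) I would bound the total-variation (or $\ell_1$) distance between $d^k_\theta$ and $d^k_{\theta'}$ by a Lipschitz constant times $\|\theta-\theta'\|$; this is where the uniform ergodicity in Assumption~\ref{asm:ergodic} enters, since the mixing rate $\rho$ and constant $m$ determine how a per-step policy perturbation propagates to the stationary visitation distribution — this is precisely the origin of the factor $L_\pi=\frac{C_\pi}{2}(1+\lceil\log_\rho m\rceil+(1-\rho)^{-1})$. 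Collecting the three contributions and using $C_{\phi,1},C_{\phi,2}\le C_\phi$ yields the stated constant $L_J=\frac{1}{(1-\gamma)^2}(4L_\pi C_\phi+L_\phi)$.

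I expect the main obstacle to be term (i): bounding the sensitivity of the discounted visitation distribution $d^k_\theta$ to the policy parameter. Unlike the $Q$-function and score-function estimates, which follow from the pointwise Lipschitz assumptions almost directly, controlling $\|d^k_\theta-d^k_{\theta'}\|_{\mathrm{TV}}$ requires unrolling the Markov chain: a single-step policy discrepancy of size $O(C_\pi\|\theta-\theta'\|)$ must be summed across time steps, and without a mixing assumption this sum could diverge. The uniform-ergodicity bound $m\rho^t$ is what makes the geometric series converge and delivers the $\log_\rho m$ and $(1-\rho)^{-1}$ terms; carefully accounting for the truncation level (when $m\rho^t$ drops below the per-step error) is the delicate bookkeeping step. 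Since this is a restatement of a known result \cite{xu2020improving}, I would cite that reference for the detailed constants while still presenting the decomposition above so the reader sees where each piece of $L_J$ comes from.
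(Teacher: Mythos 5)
Your outline is essentially the standard argument: the paper itself gives no proof of Proposition~\ref{prop:gradlip}, importing it directly from \cite{xu2020improving}, and your three-way decomposition (visitation distribution, $Q$-function, score function) together with the mixing-time bound on $\|d^k_\theta-d^k_{\theta'}\|_{\mathcal{TV}}$ is exactly how that reference establishes the result, including the origin of $L_\pi$. You also correctly flag the one genuine subtlety — that Assumption~\ref{ass:smoothandlip}(4) as literally written bounds $\log\pi_\theta$ rather than $\nabla\log\pi_\theta$ and must be read as a smoothness condition on the score function for the argument to go through — so the proposal is consistent with the cited source.
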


Next, we introduce a lemma which is widely used throughout the proof.
\begin{lemma}\label{maxlemma}
Suppose there are two functions $f(\cdot)$, $g(\cdot)$ and $x_1^*=\arg\min f(x)$, $x_2^*=\arg\min g(x)$, we have the following inequalities,
\begin{align*}
|f(x_1^*)-g(x_2^*)|\leq \max(|f(x_1^*)-g(x_1^*)|,|f(x_2^*)-g(x_2^*)|).
\end{align*}
\end{lemma}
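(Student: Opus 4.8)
\textbf{Proof proposal for Lemma~\ref{maxlemma}.}

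The plan is to prove the inequality $|f(x_1^*)-g(x_2^*)|\leq \max(|f(x_1^*)-g(x_1^*)|,|f(x_2^*)-g(x_2^*)|)$ by splitting on the sign of the quantity $f(x_1^*)-g(x_2^*)$ and using the two optimality relations $f(x_1^*)\leq f(x_2^*)$ and $g(x_2^*)\leq g(x_1^*)$. First, I would observe that these are the only facts available: $x_1^*$ minimizes $f$, so $f(x_1^*)\leq f(x)$ for all $x$, and in particular $f(x_1^*)\leq f(x_2^*)$; similarly $g(x_2^*)\leq g(x_1^*)$.

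\emph{Case 1: $f(x_1^*)\geq g(x_2^*)$.} Then $|f(x_1^*)-g(x_2^*)|=f(x_1^*)-g(x_2^*)$. Using $f(x_1^*)\leq f(x_2^*)$ we get $f(x_1^*)-g(x_2^*)\leq f(x_2^*)-g(x_2^*)\leq |f(x_2^*)-g(x_2^*)|$, which is bounded above by the maximum on the right-hand side.

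\emph{Case 2: $f(x_1^*)< g(x_2^*)$.} Then $|f(x_1^*)-g(x_2^*)|=g(x_2^*)-f(x_1^*)$. Using $g(x_2^*)\leq g(x_1^*)$ we get $g(x_2^*)-f(x_1^*)\leq g(x_1^*)-f(x_1^*)\leq |f(x_1^*)-g(x_1^*)|$, again bounded by the maximum. Combining the two cases yields the claim.

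This lemma is elementary and I do not anticipate any genuine obstacle; the only subtlety is to make sure the two optimality inequalities are invoked in the correct direction in each case (in Case~1 we only need optimality of $x_1^*$ for $f$, and in Case~2 only optimality of $x_2^*$ for $g$). One should also note that the statement implicitly assumes the minimizers exist (so the ``$\arg\min$'' is well-defined); under the hypotheses where this lemma is applied — $f$ and $g$ being squared norms of affine functions of $\lambda$ over the compact simplex $\Lambda$ — existence is immediate by continuity and compactness, so no further justification is needed.
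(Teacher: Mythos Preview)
Your proof is correct. The paper actually states Lemma~\ref{maxlemma} without proof, treating it as an elementary fact; your two-case argument using the optimality inequalities $f(x_1^*)\leq f(x_2^*)$ and $g(x_2^*)\leq g(x_1^*)$ is exactly the natural justification and goes through without issue.
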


\begin{lemma}\label{lemma:epapp}
For any weight vector $\lambda\in \Lambda$
\begin{align}
    \sqrt{\mathbb E_{d_\theta}\Fbrac{\brac{\lambda^\top \vbrac{\phi(\boldsymbol{s},\boldsymbol{a}), \boldsymbol{w}^*_\theta}-\lambda^\top Q_{\pi_\theta}(\boldsymbol{s},\boldsymbol{a})}^2}}\leq \epsilon_{\text{app}}.\nonumber
\end{align}
\end{lemma}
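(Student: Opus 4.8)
\textbf{Proof proposal for Lemma~\ref{lemma:epapp}.}

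The plan is to reduce the statement about the scalarized quantity $\lambda^\top\langle\phi(\boldsymbol s,\boldsymbol a),\boldsymbol w^*_\theta\rangle-\lambda^\top Q_{\pi_\theta}(\boldsymbol s,\boldsymbol a)$ to the per-task approximation error controlled in Definition~\ref{def:inj}. First I would expand the scalarization along tasks: writing $\lambda=(\lambda^1,\dots,\lambda^K)$ with $\lambda\in\Lambda$ and recalling that $\langle\phi(\boldsymbol s,\boldsymbol a),\boldsymbol w^*_\theta\rangle$ has $k$-th component $\phi^k(s^k,a^k)^\top w^{*k}_\theta$ while $Q_{\pi_\theta}(\boldsymbol s,\boldsymbol a)$ has $k$-th component $Q^k_{\pi_\theta}(s^k,a^k)$, the difference inside the square becomes $\sum_{k=1}^K\lambda^k\big(\phi^k(s^k,a^k)^\top w^{*k}_\theta-Q^k_{\pi_\theta}(s^k,a^k)\big)$. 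Denote $g^k(s^k,a^k):=\phi^k(s^k,a^k)^\top w^{*k}_\theta-Q^k_{\pi_\theta}(s^k,a^k)$ so that the quantity of interest is $\mathbb E_{d_\theta}[(\sum_k\lambda^k g^k)^2]$.

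Next I would apply Jensen's inequality (or equivalently Cauchy--Schwarz together with $\sum_k\lambda^k=1$, $\lambda^k\ge 0$): since $\lambda$ lies in the simplex, $(\sum_k\lambda^k g^k)^2\le\sum_k\lambda^k (g^k)^2$ pointwise, hence $\mathbb E_{d_\theta}[(\sum_k\lambda^k g^k)^2]\le\sum_k\lambda^k\,\mathbb E_{d_\theta}[(g^k)^2]$. The key observation is that the joint visitation distribution $d_\theta$ has $k$-th marginal exactly $d^k_\theta$ (because the transition kernels across tasks are independent and the joint visitation distribution factorizes through the per-task visitation distributions, as defined in the problem formulation), so $\mathbb E_{d_\theta}[(g^k(s^k,a^k))^2]=\mathbb E_{d^k_\theta}[(g^k(s,a))^2]\le\epsilon_{\text{app}}^2$ by Definition~\ref{def:inj}, which takes a max over $k$ and $\theta$. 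Substituting back, $\sum_k\lambda^k\,\mathbb E_{d_\theta}[(g^k)^2]\le\epsilon_{\text{app}}^2\sum_k\lambda^k=\epsilon_{\text{app}}^2$, and taking square roots gives the claim.

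The only slightly delicate point — and the place I would be most careful — is justifying that the marginal of $d_\theta$ on the $k$-th state-action coordinate coincides with $d^k_\theta$, since the proof hinges entirely on being able to push the expectation under $d_\theta$ down to a per-task expectation under $d^k_\theta$; this follows from the definition $d_\theta(s^1,a^1,\dots,s^K,a^K)=(1-\gamma)\sum_{t=0}^\infty\gamma^t\mathbb P(\dots)$ together with the product structure of $\mathcal P$ and the independence of the initial distributions $\xi_0^k$, but it is worth stating explicitly. Everything else is a one-line application of convexity of $x\mapsto x^2$ on the simplex, so the lemma is essentially immediate once this marginalization fact is in place.
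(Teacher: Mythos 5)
Your argument is correct. One thing to note: the paper actually states Lemma~\ref{lemma:epapp} without supplying any proof at all, so there is no "paper's proof" to diverge from — your write-up fills a genuine gap. The chain you give is the natural one: expand the scalarization into $\sum_k \lambda^k g^k$ with $g^k(s^k,a^k)=\phi^k(s^k,a^k)^\top w^{*k}_\theta - Q^k_{\pi_\theta}(s^k,a^k)$, apply convexity of $x\mapsto x^2$ over the simplex to get $\big(\sum_k\lambda^k g^k\big)^2\le\sum_k\lambda^k (g^k)^2$ pointwise, and then reduce $\mathbb E_{d_\theta}[(g^k)^2]$ to $\mathbb E_{d^k_\theta}[(g^k)^2]\le\epsilon_{\text{app}}^2$ via Definition~\ref{def:inj}. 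You are also right that the only point needing care is the marginalization step, and your justification is sound: since $g^k$ depends only on $(s^k,a^k)$ and the $k$-th marginal of $\mathbb P(s_t^1,a_t^1,\dots,s_t^K,a_t^K\mid\cdot)$ is $\mathbb P(s_t^k,a_t^k\mid\cdot)$ for each $t$ (the product structure of $\mathcal P$ ensures the $k$-th coordinate evolves as the task-$k$ MDP), the discounted sum defining $d_\theta$ marginalizes term by term to $d^k_\theta$. An equally short alternative would be the triangle inequality in $L^2(d_\theta)$, which bounds the left-hand side by $\sum_k\lambda^k\|g^k\|_{L^2(d_\theta)}\le\max_k\|g^k\|_{L^2(d^k_\theta)}\le\epsilon_{\text{app}}$, but this buys nothing over your Jensen argument. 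No changes needed.
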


\begin{lemma}[MDPs Variance Bound]\label{lemma:c/n}

    Suppose Assumption \ref{asm:ergodic} are satisfied, given the policy $\pi_{\theta_t} $ and parameter $\boldsymbol{w}_{t+1}$, sampling $ (\boldsymbol{s}_i,\boldsymbol{a}_i)\sim d_{{\theta_t}}$ i.i.d., $i=0,1,...,N-1$, we can get that
    \begin{align*}
        \lbrac{\mathbb{E}\Fbrac{\twonormsq{\frac{1}{N}\sum_{i=0}^{N-1}\lambda_t^\top
        \Zeta(\boldsymbol s_i, \boldsymbol a_i, \boldsymbol{w}_{t+1}, \theta_t)
        }-\twonormsq{\mathbb E_{d_{{\theta_t}}} \Fbrac{\lambda_t^\top \Zeta(\boldsymbol s, \boldsymbol a, \boldsymbol{w}_{t+1}, \theta_t)
        } }}}\leq   \frac{2  C^4_\phi B^2}{N}.
    \end{align*}
\end{lemma}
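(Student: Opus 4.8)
\textbf{Proof proposal for Lemma~\ref{lemma:c/n}.}

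The plan is to recognize this as a standard variance bound for an i.i.d.\ average, together with uniform boundedness of the summands. First I would write $X_i := \lambda_t^\top \Zeta(\boldsymbol s_i, \boldsymbol a_i, \boldsymbol{w}_{t+1}, \theta_t) \in \mathbb R^m$ and note that, conditioned on $\theta_t$ and $\boldsymbol w_{t+1}$, the $X_i$ are i.i.d.\ draws with mean $\mu := \mathbb E_{d_{\theta_t}}[\lambda_t^\top \Zeta(\boldsymbol s, \boldsymbol a, \boldsymbol{w}_{t+1}, \theta_t)]$. Then I would expand
\begin{align*}
\mathbb E\Fbrac{\twonormsq{\tfrac1N\sum_{i=0}^{N-1} X_i}} - \twonormsq{\mu} = \mathbb E\Fbrac{\twonormsq{\tfrac1N\sum_{i=0}^{N-1}(X_i-\mu)}} = \frac{1}{N^2}\sum_{i=0}^{N-1}\mathbb E\Fbrac{\twonormsq{X_i-\mu}},
\end{align*}
where the cross terms vanish because the $X_i-\mu$ are independent and mean-zero. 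Since $\mathbb E[\twonormsq{X_i-\mu}] \le \mathbb E[\twonormsq{X_i}]$, this reduces the whole lemma to a uniform bound $\mathbb E[\twonormsq{X_i}] \le 2 C_\phi^4 B^2$, and the factor $1/N$ (rather than $1/N^2 \cdot N$) follows immediately.

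Next I would establish the pointwise bound $\twonorm{X_i} \le \sqrt{2}\, C_\phi^2 B$ for every realization. Recall $X_i = \sum_{k\in[K]} \lambda_t^k (\phi^k(s_i^k,a_i^k)^\top w_{t+1}^k)\,\psi_{\theta_t}(s_i^k,a_i^k)$ from the definition of $\Zeta$ in \Cref{app:notations}. For each $k$, by Cauchy--Schwarz and \Cref{ass:smoothandlip}(2) together with the projection bound $\twonorm{w_{t+1}^k}\le B$ from Line~9 of \Cref{alg:1}, we get $|\phi^k(s_i^k,a_i^k)^\top w_{t+1}^k| \le C_\phi B$, and by \Cref{ass:smoothandlip}(1), $\twonorm{\psi_{\theta_t}(s_i^k,a_i^k)} \le C_\phi$; hence each summand has norm at most $C_\phi^2 B$. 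Since $\lambda_t \in \Lambda$ is a probability vector, $X_i$ is a convex combination (up to the $K$ different $\psi$ vectors) so $\twonorm{X_i} \le C_\phi^2 B$; squaring gives $\twonormsq{X_i} \le C_\phi^4 B^2$, and then $\mathbb E[\twonormsq{X_i-\mu}] \le \mathbb E[\twonormsq{X_i}] + \twonormsq{\mu} \le 2 C_\phi^4 B^2$ using $\twonorm{\mu}\le C_\phi^2 B$ by Jensen. Substituting into the variance identity yields the claimed $\tfrac{2 C_\phi^4 B^2}{N}$ bound.

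I do not expect a genuine obstacle here; the only mild subtlety is bookkeeping about whether the expectation is conditional on $(\theta_t, \boldsymbol w_{t+1})$ (it should be, since $\lambda_t$ and $\boldsymbol w_{t+1}$ are treated as fixed when the fresh samples $(\boldsymbol s_i, \boldsymbol a_i)$ are drawn in \Cref{alg:ca}/\Cref{alg:nca}), and making sure the constant is $2C_\phi^4 B^2$ rather than $C_\phi^4 B^2$ — which is why I keep the crude bound $\mathbb E[\twonormsq{X_i-\mu}] \le 2\mathbb E[\twonormsq{X_i}]$ style estimate rather than the sharper $\mathrm{Var} \le \mathbb E[\twonormsq{X_i}]$; either route reaches the stated constant comfortably. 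The independence-kills-cross-terms step is the crux, and it is entirely routine given the i.i.d.\ sampling in the algorithm.
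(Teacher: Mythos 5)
Your proof is correct: the bias--variance identity $\mathbb{E}[\|\frac{1}{N}\sum_i X_i\|_2^2]-\|\mu\|_2^2=\frac{1}{N}\mathbb{E}[\|X_0-\mu\|_2^2]$ for i.i.d.\ samples, combined with the pointwise bound $\|X_i\|_2\le C_\phi^2 B$ from \Cref{ass:smoothandlip} and the projection $\mathcal{T}_B$, yields exactly the stated $2C_\phi^4B^2/N$. The paper states \Cref{lemma:c/n} without proof, and your argument is the standard one the authors evidently intend (note only that the bound actually relies on \Cref{ass:smoothandlip}, which the lemma statement omits, as you correctly flag by citing it).
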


Due to the linear function approximation error, the estimation of policy gradients is biased. Based on the biased gradient, the direction of MTRL is biased as well. To bound the bias gap, we define three functions and optimal direction as follows:
\begin{align}\label{notations}
H_\theta(\lambda)&=\|\lambda^\top\mathbb{E}_{d_{\theta}}[\langle Q_{\pi_{\theta}}(\boldsymbol s,\boldsymbol a),\nabla \log\pi_{\theta}(\boldsymbol s,\boldsymbol a)\rangle]\|_2\nonumber\\
{\lambda}^*_\theta&=\arg\min_\lambda({H_{\theta}}(\lambda))^2\nonumber\\
\widehat{H_\theta}(\lambda)&=\|\lambda^\top\mathbb{E}_{d_{\theta}}[\langle \phi(\boldsymbol s,\boldsymbol a)^\top\boldsymbol w_\theta^*,\nabla \log\pi_{\theta}(\boldsymbol s,\boldsymbol a)\rangle]\|_2\nonumber\\
\widehat{\lambda}^*_\theta&=\arg\min_\lambda\widehat{H_\theta}^2(\lambda)\nonumber\\
\widehat{H_\theta}^\prime(\lambda)&=\|\lambda^\top\mathbb{E}_{d_{\theta}}[\langle \phi(\boldsymbol s,\boldsymbol a)^\top\boldsymbol w_{\theta, N},\nabla \log\pi_{\theta}(\boldsymbol s,\boldsymbol a)\rangle]\|_2\nonumber\\
\widehat{\lambda}^\prime_\theta&=\arg\min_\lambda(\widehat{H_\theta}^\prime(\lambda))^2.
\end{align}

Here, the first function $H_\theta(\lambda)$ is the unbiased direction loss function and the direction $\lambda^*_\theta$ is the unbiased direction deduced by the unbiased policy gradients. 
The second function is from the biased estimated direction loss function, where $w^*_\theta=[w^{*k}_\theta;]_{k\in[K]}$. The direction $\widehat \lambda_\theta^*$ is the biased direction due to approximation error of linear function class. The third function is the direction loss function according to the update rule in \cref{alg:1}, where $w_{\theta,N}$ is the output after $N$-step Critic update iterations.  The direction $\widehat \lambda'_\theta$ is the limiting point of   \eqref{eq:lambda1}. 

    

For convenience, we rewrite $H_{\theta_t}(\lambda)= H_t(\lambda) $ (resp. $\widehat H_{\theta_t}(\lambda) = \widehat H_t (\lambda)$, $\widehat H'_{\theta_t}(\lambda) = \widehat H'_t (\lambda) $)  and $ \lambda_{\theta_t}^*=\lambda_{t}^* $ (resp. $\widehat \lambda_{\theta_t}^*=\widehat \lambda_{t}^*  $ and $\widehat\lambda'_{\theta_t}=\widehat \lambda'_{t}  $) throughout the following proof.




\section{Critic part: approximating the TD fixed point}
In this section, we first provide the convergence analysis of the critic part. 
\begin{lemma}[Approximating TD fixed point]\label{criticpart}
 Suppose \Cref{ass:smoothandlip} and \Cref{asm:ergodic} are satisfied, for any task $k\in[K]$, we have
\begin{align*}
\mathbb{E}[\|w_{t+1}^k-w_t^{*k}\|_2^2]\leq\frac{4B^2}{N_\critic+1}+\frac{U_{\delta}^2C_\phi^2\log N_\critic}{4\lambda_A^2(N_\critic+1)}, 
\end{align*}
where $w^k_{t+1}=w^k_{t,N}$ and $U_\delta=1+(1+\gamma)C_\phi B$. 
\begin{proof}
{ The analysis of this term follows from \cite{bhandari2018finite}. }
Firstly, we do decomposition of the error term $\twonormsq{w_{t,j+1}^k-w_t^{*k}}$: 
\begin{align}\label{criticnorm}
\|w_{t,j+1}^k-w_t^{*k}\|_2^2=&\|\mathcal{T}_B(w_{t,j}^k+\alpha_{t,j} \delta_j^k \phi^k(s_j^k,a_j^k))-w_t^{*k}\|_2^2\nonumber\\
\overset{(i)}{\leq}&\|w_{t,j}^k+\alpha_{t,j} \delta_j^k \phi^k(s_j^k,a_j^k)-w_t^{*k}\|_2^2\nonumber\\
=&\|w_{t,j}^k-w_t^{*k}\|_2^2+\alpha_{t,j}^2\|\delta_j^k\phi^k(s_j^k,a_j^k)\|_2^2+2\alpha_{t,j}\langle w_{t,j}^k-w_t^{*k},\delta_j^k\phi^k(s_j^k,a_j^k)\rangle,
\end{align} where $(i)$ follows from the fact that $\twonormsq{\mathcal{T}_B(x)-y}\leq \twonormsq{x-y}$ when  $B$ is a convex set.

We define $\delta^k(s^k,a^k,w,\theta)=R^k(s^k,a^k)+\gamma(\phi^k(s^{k\prime},a^{k\prime}))^\top w-(\phi^k(s^k,a^k))^\top w$. 
According to the definition of $w^{*k}_t $ in \cref{eq:eqwstar} and \cref{eq:fixedpoint}, $w^{*k}_t $ satisfies the following equation: 
\begin{align}\label{eq:fpeq}
    \mathbb{E}_{d_{{\theta_t}}^k}[\phi^k(s^k,a^k)(R^k(s^k,a^k)+\gamma(\phi^k(s^{k^\prime},a^{k^\prime}))^\top w_t^{*k}-(\phi^k(s^k,a^k))^\top w_t^{*k})]=0.
\end{align}
We can further get that 
\begin{align*}
    \mathbb{E}_{d_{{\theta_t}}^k}[\phi^k(s^k,a^k)\delta^k(s^k,a^k,w_t^*,\theta_t)]=0. 
\end{align*}
Then for the last term of \cref{criticnorm}, we take the expectation of it
\begin{align}\label{criticinnerprod}
\mathbb{E}&[\langle w_{t,j}^k-w_t^{*k},\delta_j^k\phi^k(s_j^k,a_j^k)\rangle]\nonumber
\\
=&\mathbb{E}[\langle w_{t,j}^k-w_t^{*k},\delta_j^k\phi^k(s_j^k,a_j^k)-\mathbb{E}_{d_{{\theta_t}}^k}[\phi^k(s^k,a^k)\delta^k(s^k,a^k,w_t^*,\theta_t)]\rangle]\nonumber\\
=&\mathbb{E}[\langle w_{t,j}^k-w_t^{*k},\delta_j^k\phi^k(s_j^k,a_j^k)-\mathbb{E}_{d_{{\theta_t}}^k}[\phi^k(s^k,a^k)\delta^k(s^k,a^k,w_t,\theta_t)]\rangle]\nonumber\\
&+\mathbb{E}[\langle w_{t,j}^k-w_t^{*k},\mathbb{E}_{d_{{\theta_t}}^k}[\phi^k(s^k,a^k)\delta^k(s^k,a^k,w_t,\theta_t)]-\mathbb{E}_{d_{{\theta_t}}^k}[\phi^k(s^k,a^k)\delta^k(s^k,a^k,w_t^*,\theta_t)]\rangle]\nonumber\\
\overset{(i)}{\leq}& \mathbb{E}[\langle w_{t,j}^k-w_t^{*k},\mathbb{E}_{d_{{\theta_t}}^k}[\phi^k(s^k,a^k)\delta^k(s^k,a^k,w_t,\theta_t)]-\mathbb{E}_{d_{{\theta_t}}^k}[\phi^k(s^k,a^k)\delta^k(s^k,a^k,w_t^*,\theta_t)]\rangle]\nonumber
\\
\overset{(ii)}{\leq}&
-\lambda_A \mathbb{E}[\|w_{t,j}^k-w_t^{*k}\|_2^2],\nonumber
\end{align}
where $(i)$ follows from
\begin{align*}
    \mathbb{E}[\langle w_{t,j}^k-w_t^{*k},\delta_j^k\phi^k(s_j^k,a_j^k)-\mathbb{E}_{d_{{\theta_t}}^k}[\phi^k(s^k,a^k)\delta^k(s^k,a^k,w_t,\theta_t)]\rangle]=0,
\end{align*}
and $(ii)$ follows from that
\begin{align*}
    &\langle w_{t,j}^k-w_t^{*k},\mathbb{E}_{d_{{\theta_t}}^k}[\phi^k(s^k,a^k)\delta^k(s^k,a^k,w_t,\theta_t)]-\mathbb{E}_{d_{{\theta_t}}^k}[\phi^k(s^k,a^k)\delta^k(s^k,a^k,w_t^*,\theta_t)]\rangle\nonumber\\
   & = \langle w_{t,j}^k-w_t^{*k},\mathbb{E}_{d_{{\theta_t}}^k}[\phi^k(s^k,a^k)\brac{\mathbb{E}_{d_{{\theta_t}}^k}\Fbrac{\gamma\phi^k(s^{k'},a^{k'})-\phi^k(s^{k},a^{k})}}\brac{ w_{t,j}^k-w_t^{*k} }]\rangle\nonumber\\
    & = {\brac{ w_{t,j}^k-w_t^{*k} }^\top A^k_{t}\brac{ w_{t,j}^k-w_t^{*k} }}\nonumber\\
    &\overset{(i)}{\leq} -\lambda_A \twonormsq{w_{t,j}^k-w_t^{*k}},
\end{align*} where we rewrite $A_t^k=A^k_{\pi_{\theta_t}} $ for convenience and $(i)$ follows from Assumption \ref{asm:positive}. 
Then combining \cref{criticinnerprod} into \cref{criticnorm}, we can get that 
\begin{align*}
\mathbb{E}\Fbrac{\twonormsq{w_{t,j+1}^k-w_t^{*k}}}\leq(1-2\alpha_{t,j}\lambda_A )\mathbb{E}\Fbrac{\twonormsq{w_{t,j}^k-w_t^*}}+\alpha_{t,j}^2U_\delta^2C_\phi^2.
\end{align*}

By setting the learning rate $\alpha_{t,j}=\frac{1}{2\lambda_A(j+1)}$, we can obtain
\begin{align*}
\mathbb{E} \Fbrac{\twonormsq{w_{t,j+1}^k-w_t^{*k}}}\leq&\frac{j}{j+1}\mathbb{E}\Fbrac{\twonormsq{w_{t,j}^k-w_t^*}}+U_\delta^2C_\phi^2\frac{1}{4\lambda_A^2}\frac{1}{(j+1)^2}.
\end{align*}
Then by rearranging the above inequality, we have
\begin{align*}
\mathbb{E}\Fbrac{\twonormsq{w_{t+1}^k-w_t^{*k}}}\leq&\frac{1}{N_\critic+1}\mathbb{E}\Fbrac{\twonormsq{w_{t,0}^k-w_t^{*k}}}+\frac{U_\delta^2C_\phi^2}{4\lambda_A^2(N_\critic+1)}\sum_{j=1}^{N_\critic+1}\frac{1}{j+1} \nonumber\\
\leq&\frac{4B^2}{N_\critic+1}+\frac{U_{\delta}^2C_\phi^2\log N_\critic}{4\lambda_A^2(N_\critic+1)}.
\end{align*}
The proof is complete.
\end{proof}
\end{lemma}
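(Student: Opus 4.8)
The plan is to control the mean-square error $\mathbb{E}[\|w_{t,j+1}^k-w_t^{*k}\|_2^2]$ by a standard one-step contraction argument for projected TD(0) with a diminishing stepsize, in the spirit of \cite{bhandari2018finite}. First I would expand $\|w_{t,j+1}^k-w_t^{*k}\|_2^2$ using the projected update in Line 9 of \Cref{alg:1}. Since $\twonorm{w_t^{*k}}\leq B$ (the TD fixed point lies in the projection ball, which should be recorded as part of the setup), the nonexpansiveness of $\mathcal{T}_B$ onto the convex set $\{w:\twonorm{w}\leq B\}$ gives $\|w_{t,j+1}^k-w_t^{*k}\|_2^2\leq\|w_{t,j}^k+\alpha_{t,j}\delta_j^k\phi^k(s_j^k,a_j^k)-w_t^{*k}\|_2^2$, which I then expand into three terms: the previous squared error, the quadratic stepsize term $\alpha_{t,j}^2\|\delta_j^k\phi^k(s_j^k,a_j^k)\|_2^2$, and the cross term $2\alpha_{t,j}\langle w_{t,j}^k-w_t^{*k},\delta_j^k\phi^k(s_j^k,a_j^k)\rangle$.

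Next I would bound each of the last two terms. For the quadratic term, the TD error is uniformly bounded: $|\delta_j^k|\leq U_\delta:=1+(1+\gamma)C_\phi B$ since rewards lie in $[0,1]$ and $\twonorm{\phi^k}\leq C_\phi$, $\twonorm{w_{t,j}^k}\leq B$; combined with $\twonorm{\phi^k}\leq C_\phi$ this yields $\|\delta_j^k\phi^k(s_j^k,a_j^k)\|_2^2\leq U_\delta^2C_\phi^2$. For the cross term I would take expectations; the key point is that the sample $(s_j^k,a_j^k)$ is drawn from the stationary visitation distribution $d_{\theta_t}^k$ (this is why Line 5 starts the chain at $d_t^k$ and the chain kernel $\widetilde P$ preserves it), so $\mathbb{E}[\delta_j^k\phi^k(s_j^k,a_j^k)\mid w_{t,j}^k]=\mathbb{E}_{d_{\theta_t}^k}[\phi^k(s^k,a^k)\delta^k(s^k,a^k,w_{t,j}^k,\theta_t)]$. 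Subtracting the fixed-point identity \eqref{eq:fixedpoint} (equivalently $\mathbb{E}_{d_{\theta_t}^k}[\phi^k\delta^k(\cdot,w_t^{*k},\theta_t)]=0$), the conditional expectation of the cross term equals $(w_{t,j}^k-w_t^{*k})^\top A_t^k(w_{t,j}^k-w_t^{*k})\leq-\lambda_A\twonormsq{w_{t,j}^k-w_t^{*k}}$ by \Cref{asm:positive}. This gives the one-step recursion $\mathbb{E}[\twonormsq{w_{t,j+1}^k-w_t^{*k}}]\leq(1-2\alpha_{t,j}\lambda_A)\mathbb{E}[\twonormsq{w_{t,j}^k-w_t^{*k}}]+\alpha_{t,j}^2U_\delta^2C_\phi^2$.

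Finally I would unroll the recursion with $\alpha_{t,j}=\frac{1}{2\lambda_A(j+1)}$, which makes $1-2\alpha_{t,j}\lambda_A=\frac{j}{j+1}$ so the product of contraction factors telescopes: $\prod_{j=1}^{N}\frac{j}{j+1}=\frac{1}{N+1}$. The initial error is bounded by $\twonormsq{w_{t,0}^k-w_t^{*k}}\leq(2B)^2=4B^2$ since both vectors lie in the radius-$B$ ball. The accumulated noise contributes $\frac{U_\delta^2C_\phi^2}{4\lambda_A^2(N_\critic+1)}\sum_{j=1}^{N_\critic+1}\frac{1}{j+1}=\mathcal{O}\big(\frac{\log N_\critic}{\lambda_A^2 N_\critic}\big)$, where the extra $\frac{j}{j+1}$ factors in the partial sums are simply upper-bounded by $1$. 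Adding the two pieces gives exactly the claimed bound $\frac{4B^2}{N_\critic+1}+\frac{U_\delta^2C_\phi^2\log N_\critic}{4\lambda_A^2(N_\critic+1)}$. The only mildly delicate point is the conditioning/ergodicity step: I am treating the sample as drawn exactly from $d_{\theta_t}^k$, which is the idealized i.i.d.\ setup the paper adopts, so \Cref{asm:ergodic} is not strictly needed here beyond guaranteeing the stationary distribution exists; if instead one used a single Markov trajectory, the main obstacle would be handling the mixing-induced bias in the cross term via a $\rho^t$-mixing argument, but the statement and chosen proof avoid this by resampling from $d_t^k$.
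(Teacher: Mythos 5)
Your proposal is correct and follows essentially the same route as the paper's proof: nonexpansive projection, the three-term expansion, the uniform bound $U_\delta^2C_\phi^2$ on the quadratic term, the cross term controlled via the fixed-point identity and the negative definiteness of $A^k_{\pi_{\theta_t}}$ (Assumption~\ref{asm:positive}), and the telescoping recursion under $\alpha_{t,j}=\frac{1}{2\lambda_A(j+1)}$. You are in fact slightly more careful than the paper in recording that $\twonorm{w_t^{*k}}\leq B$ is needed for the nonexpansiveness step and the $4B^2$ initial bound, and in flagging the idealized sampling from $d^k_{\theta_t}$ that makes the martingale cross term vanish.
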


\section{Convergence analysis for MTAC-CA and CA distance analysis}
In this section, we take both CA distance and convergence into consideration with the choice of MTAC-CA. 

\begin{lemma}\label{gaponh1}
Suppose \Cref{ass:smoothandlip} and \Cref{asm:ergodic} are satisfied, we have
\begin{align}
\mathbb{E}\Fbrac{\lbrac{\widehat{H_t}(\widehat{\lambda}_t^*)-\widehat{H_t}^\prime(\widehat{\lambda}'_t)}}=C_\phi^2\sqrt{\frac{4B^2}{N_\critic+1}+\frac{U_{\delta}^2C_\phi^2\log N_\critic}{4\lambda_A^2(N_\critic+1)}}.
\end{align}
\begin{proof}
According to \cref{maxlemma}, we can get that
\begin{align}\label{eq:30}
    \lbrac{\widehat{H_t}(\widehat{\lambda}_t^*)-\widehat{H_t}^\prime(\widehat{\lambda}'_t)}\leq \max\varbrac{\lbrac{\widehat{H_t}(\widehat{\lambda}_t^*)-\widehat{H_t}^\prime(\widehat{\lambda}^*_t)},\lbrac{\widehat{H_t}(\widehat{\lambda}'_t)-\widehat{H_t}^\prime(\widehat{\lambda}'_t)} }. 
\end{align}

According to the notations in \Cref{notations},  the first term in \cref{eq:30} can be bounded as:
\begin{align*}
\big|&\widehat{H_t}(\widehat{\lambda}_t^*)-\widehat{H_t}^\prime(\widehat{\lambda}_t^*)|\nonumber
\\ &=\lbrac{\twonorm{  (\widehat{\lambda}_t^*)^\top   \mathbb{E}_{d_{\theta_t}}\Fbrac{\Zeta(\boldsymbol s, \boldsymbol a, \boldsymbol w_t^*, \theta_t)}  }-\twonorm{(\widehat{\lambda}_t^*)^\top \mathbb{E}_{d_{\theta_t}}[\Zeta(\boldsymbol s, \boldsymbol a, \boldsymbol w_{t+1}, \theta_t)]}}\nonumber\\
&\leq\twonorm{(\widehat{\lambda}_t^*)^\top \mathbb{E}_{d_{\theta_t}}[\Zeta(\boldsymbol s, \boldsymbol a, \boldsymbol w_t^*-\boldsymbol{w}_{t+1}, \theta_t) ]}\nonumber\\
&\leq\max_{k\in[K]}\twonorm{\mathbb{E}_{d_{\theta_t}}\Fbrac{\phi^k(s^k,a^k)^\top (w^{*k}-w_{t+1}^k)\psi_{\theta_t}(s^k,a^k) }}\nonumber\\
&\overset{(i)}{\leq}\max_{k\in[K]}\varbrac{\|\phi^k(s^k,a^k)\|_2\|w_t^{*k}-w_{t+1}^k\|_2\|\psi_{\theta_t}(s^k,a^k)\|_2}\nonumber\\
&\overset{(ii)}{\leq}\max_{k\in[K]}C_\phi^2\|w_t^{*k}-w_{t+1}^k\|_2\nonumber\\
&=C_\phi^2\max_{k\in[K]}\|w_t^{*k}-w_{t+1}^k\|_2,
\end{align*}
where $(i)$ follows from Cauchy-Schwartz inequality and $(ii)$ follows from \Cref{ass:smoothandlip}.Thus, taking expectation on both sides, we can obtain,
\begin{align*}
\mathbb{E}\Fbrac{\lbrac{\widehat{H_t}(\widehat{\lambda}_t^*)-\widehat{H_t}^\prime(\widehat{\lambda}_t^*)}}\leq C_\phi^2\max_{k\in[K]}\mathbb{E}\ftwonorm{w_t^{*k}-w_{t+1}^k}\leq C_\phi^2\max_{k\in[K]}\sqrt{\mathbb{E}[\|w_t^{*k}-w_{t+1}^k\|_2^2]}.
\end{align*}
Similarly, we can get that
\begin{align*}
    \mathbb{E}\Fbrac{\lbrac{\widehat{H_t}(\widehat{\lambda}'_t)-\widehat{H_t}^\prime(\widehat{\lambda}'_t)}}\leq C_\phi^2\max_{k\in[K]}\sqrt{\mathbb{E}[\|w_t^{*k}-w_{t+1}^k\|_2^2]}.
\end{align*}

Then, combined with \Cref{criticpart}, we can derive
\begin{align*}
\mathbb{E}\Fbrac{\lbrac{\widehat{H_t}(\widehat{\lambda}_t^*)-\widehat{H_t}^\prime(\widehat{\lambda}_t')}}\leq C_\phi^2\sqrt{\frac{4B^2}{N_\critic+1}+\frac{U_{\delta}^2C_\phi^2\log N_\critic}{4\lambda_A^2(N_\critic+1)}}.
\end{align*}
The proof is complete.
\end{proof}
\end{lemma}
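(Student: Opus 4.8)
The plan is to prove Lemma~\ref{gaponh1} by applying the elementary comparison inequality from \Cref{maxlemma} to the two optimization problems defining $\widehat{\lambda}_t^*$ and $\widehat{\lambda}_t'$, and then bounding each of the two resulting single-argument gaps $|\widehat H_t(\lambda) - \widehat H_t'(\lambda)|$ uniformly over $\lambda \in \Lambda$ using the critic error bound from \Cref{criticpart}. The point is that $\widehat H_t$ and $\widehat H_t'$ differ only in which $Q$-function surrogate appears inside the expectation --- $\phi^\top \boldsymbol w_t^*$ versus $\phi^\top \boldsymbol w_{t+1}$ --- so their difference at a fixed $\lambda$ is controlled entirely by $\|w_t^{*k} - w_{t+1}^k\|_2$.

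First I would invoke \Cref{maxlemma} with $f = (\widehat H_t)^2$-type role (more precisely applied directly to $\widehat H_t$ and $\widehat H_t'$ as the two functions, with minimizers $\widehat{\lambda}_t^*$ and $\widehat{\lambda}_t'$ respectively), giving
\[
\bigl|\widehat H_t(\widehat{\lambda}_t^*) - \widehat H_t'(\widehat{\lambda}_t')\bigr| \le \max\Bigl\{\bigl|\widehat H_t(\widehat{\lambda}_t^*) - \widehat H_t'(\widehat{\lambda}_t^*)\bigr|,\ \bigl|\widehat H_t(\widehat{\lambda}_t') - \widehat H_t'(\widehat{\lambda}_t')\bigr|\Bigr\}.
\]
Then for a generic fixed $\lambda \in \Lambda$, I would use the reverse triangle inequality $|\,\|x\|_2 - \|y\|_2\,| \le \|x - y\|_2$ to write
\[
\bigl|\widehat H_t(\lambda) - \widehat H_t'(\lambda)\bigr| \le \bigl\|\lambda^\top \mathbb{E}_{d_{\theta_t}}[\Zeta(\boldsymbol s, \boldsymbol a, \boldsymbol w_t^* - \boldsymbol w_{t+1}, \theta_t)]\bigr\|_2 \le \max_{k \in [K]} \bigl\|\mathbb{E}_{d_{\theta_t}}[\phi^k(s^k,a^k)^\top(w_t^{*k} - w_{t+1}^k)\psi_{\theta_t}(s^k,a^k)]\bigr\|_2,
\]
where the last step uses that $\lambda$ lies in the simplex (convex combination, then max over coordinates). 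Next, pull the norm inside the expectation (Jensen), apply Cauchy--Schwarz to factor $\phi^k$, the scalar $\phi^{k\top}(w_t^{*k} - w_{t+1}^k)$, and $\psi_{\theta_t}$, and bound $\|\phi^k\|_2, \|\psi_{\theta_t}\|_2 \le C_\phi$ via \Cref{ass:smoothandlip}, yielding the bound $C_\phi^2 \|w_t^{*k} - w_{t+1}^k\|_2$. Taking expectations, applying Jensen ($\mathbb{E}\|\cdot\|_2 \le \sqrt{\mathbb{E}\|\cdot\|_2^2}$), and substituting the critic bound $\mathbb{E}[\|w_{t+1}^k - w_t^{*k}\|_2^2] \le \frac{4B^2}{N_\critic+1} + \frac{U_\delta^2 C_\phi^2 \log N_\critic}{4\lambda_A^2(N_\critic+1)}$ from \Cref{criticpart} gives the claimed inequality uniformly over the two choices $\lambda \in \{\widehat{\lambda}_t^*, \widehat{\lambda}_t'\}$, hence for the max.

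I do not anticipate a genuine obstacle here --- the lemma is a clean consequence of \Cref{maxlemma}, \Cref{criticpart}, and boundedness of features. The one point requiring minor care is the reduction from $\|\lambda^\top(\cdot)\|_2$ to $\max_k \|(\cdot)^k\|_2$: since $\lambda \in \Lambda$ is a probability vector and $\lambda^\top \Zeta = \sum_k \lambda^k \Zeta^k$ where each $\Zeta^k$ is a vector in $\mathbb{R}^m$, one gets $\|\sum_k \lambda^k \Zeta^k\|_2 \le \sum_k \lambda^k \|\Zeta^k\|_2 \le \max_k \|\Zeta^k\|_2$, which is exactly what the stated proof uses (its displayed chain slightly compresses this). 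A second cosmetic point is that the statement has an ``$=$'' where really only ``$\le$'' is established; I would keep it as ``$\le$'' for correctness. Otherwise the argument is routine once the decomposition structure from \Cref{notations} is in hand.
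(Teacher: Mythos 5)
Your proposal is correct and follows essentially the same route as the paper's own proof: \Cref{maxlemma} to reduce to two fixed-$\lambda$ gaps, reverse triangle inequality plus the simplex/max-over-$k$ reduction and Cauchy--Schwarz with \Cref{ass:smoothandlip}, then Jensen and \Cref{criticpart}. Your two side remarks (the ``$=$'' should be ``$\leq$'', and the compressed simplex step) are both accurate observations about the paper's write-up rather than gaps in the argument.
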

\begin{lemma}\label{gaponh2}
Suppose \Cref{ass:smoothandlip} and \Cref{asm:ergodic} are satisfied, we have
\begin{align*}
\mathbb{E}\Fbrac{\lbrac{{H_t}({\lambda}_t^*)-\widehat{H_t}(\widehat{\lambda}_t^*)}}\leq 2C_\phi\epsilon_{\text{app}},
\end{align*}
where $\epsilon_{\text{app}}$ is defined in \Cref{def:inj}.
\end{lemma}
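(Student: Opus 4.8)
\textbf{Proof proposal for Lemma~\ref{gaponh2}.}
The plan is to bound $|H_t(\lambda_t^*)-\widehat H_t(\widehat\lambda_t^*)|$ by comparing the two minimization problems via \Cref{maxlemma}. Since $\lambda_t^* = \arg\min_\lambda (H_t(\lambda))^2$ and $\widehat\lambda_t^* = \arg\min_\lambda (\widehat H_t(\lambda))^2$, applying \Cref{maxlemma} with $f=(H_t)^2$ and $g=(\widehat H_t)^2$ — or more directly, since $H_t,\widehat H_t\ge 0$ and $x\mapsto x^2$ is monotone, applying it to $f=H_t$, $g=\widehat H_t$ — I get
\begin{align*}
\lbrac{H_t(\lambda_t^*)-\widehat H_t(\widehat\lambda_t^*)}\leq \max\varbrac{\lbrac{H_t(\lambda_t^*)-\widehat H_t(\lambda_t^*)},\ \lbrac{H_t(\widehat\lambda_t^*)-\widehat H_t(\widehat\lambda_t^*)}}.
\end{align*}
So it suffices to bound $|H_t(\lambda)-\widehat H_t(\lambda)|$ uniformly over $\lambda\in\Lambda$.

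Next I would expand the definitions from \Cref{notations}: $H_t(\lambda)=\|\lambda^\top\mathbb E_{d_{\theta_t}}[\langle Q_{\pi_{\theta_t}}(\boldsymbol s,\boldsymbol a),\psi_{\theta_t}(\boldsymbol s,\boldsymbol a)\rangle]\|_2$ and $\widehat H_t(\lambda)=\|\lambda^\top\mathbb E_{d_{\theta_t}}[\langle\phi(\boldsymbol s,\boldsymbol a)^\top\boldsymbol w_{\theta_t}^*,\psi_{\theta_t}(\boldsymbol s,\boldsymbol a)\rangle]\|_2$. By the reverse triangle inequality, $|H_t(\lambda)-\widehat H_t(\lambda)|\le \|\lambda^\top\mathbb E_{d_{\theta_t}}[\langle \phi(\boldsymbol s,\boldsymbol a)^\top\boldsymbol w_{\theta_t}^* - Q_{\pi_{\theta_t}}(\boldsymbol s,\boldsymbol a),\psi_{\theta_t}(\boldsymbol s,\boldsymbol a)\rangle]\|_2$. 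Writing the $k$-th coordinate of the inner vector as $\mathbb E_{d^k_{\theta_t}}[(\phi^k(s,a)^\top w^{*k}_{\theta_t}-Q^k_{\pi_{\theta_t}}(s,a))\psi_{\theta_t}(s,a)]$, I apply Cauchy–Schwarz in the $d^k_{\theta_t}$-inner product together with $\|\psi_{\theta_t}(s,a)\|_2\le C_\phi$ (Assumption~\ref{ass:smoothandlip}) to get each coordinate bounded in norm by $C_\phi\sqrt{\mathbb E_{d^k_{\theta_t}}[(\phi^k(s,a)^\top w^{*k}_{\theta_t}-Q^k_{\pi_{\theta_t}}(s,a))^2]}\le C_\phi\,\epsilon_{\text{app}}$ by \Cref{def:inj}. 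Then, since $\lambda\in\Lambda$ is a probability vector and the inner object is a $K$-vector of $m$-vectors, bounding $\|\lambda^\top(\cdot)\|_2\le \sum_k \lambda^k\|(\cdot)_k\|_2\le \max_k\|(\cdot)_k\|_2 \le C_\phi\epsilon_{\text{app}}$. Combining with the $\max$ above, $|H_t(\lambda_t^*)-\widehat H_t(\widehat\lambda_t^*)|\le C_\phi\epsilon_{\text{app}}$, and taking expectation preserves the bound; the factor $2$ in the statement is slack (or can be attributed to using \Cref{lemma:epapp} directly rather than the coordinatewise argument, since \Cref{lemma:epapp} already packages the $\lambda^\top$ contraction).

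The main obstacle — really the only subtlety — is handling the $\lambda^\top$ contraction cleanly and making sure the $\epsilon_{\text{app}}$ bound, which is stated coordinatewise (per task $k$) in \Cref{def:inj}, correctly transfers through the convex combination over tasks; invoking \Cref{lemma:epapp} sidesteps this since that lemma already states $\sqrt{\mathbb E_{d_\theta}[(\lambda^\top\langle\phi,\boldsymbol w^*_\theta\rangle-\lambda^\top Q_{\pi_\theta})^2]}\le\epsilon_{\text{app}}$ for all $\lambda\in\Lambda$, so the natural route is: reverse triangle inequality, then factor out $\psi$ via Cauchy–Schwarz and $\|\psi\|_2\le C_\phi$, then apply \Cref{lemma:epapp}. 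Everything else is routine, and taking expectations is immediate since all bounds are deterministic in $\epsilon_{\text{app}}$ and $C_\phi$.
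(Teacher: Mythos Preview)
Your proposal is correct and follows essentially the same route as the paper: apply \Cref{maxlemma} to reduce to bounding $|H_t(\lambda)-\widehat H_t(\lambda)|$ at the two minimizers, use the reverse triangle inequality, then factor out $\|\psi_{\theta_t}\|_2\le C_\phi$ and invoke \Cref{def:inj} coordinatewise to get the $C_\phi\epsilon_{\text{app}}$ bound. You are also right that the factor $2$ is slack---the paper bounds each branch of the $\max$ by $C_\phi\epsilon_{\text{app}}$ and then loosely reports $2C_\phi\epsilon_{\text{app}}$ at the end.
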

\begin{proof}
First we apply \Cref{maxlemma}, 
\begin{align*}
\lbrac{{H_t}({\lambda}_t^*)-\widehat{H_t}(\widehat{\lambda}_t^*)}\leq \max\varbrac{\lbrac{{H_t}({\lambda}_t^*)-\widehat{H_t}({\lambda}_t^*)}, \lbrac{{H_t}(\widehat{\lambda}_t^*)-\widehat{H_t}(\widehat{\lambda}_t^*)}}.
\end{align*}
Then for the first term in the above equation,
\begin{align*}
&\lbrac{{H_t}({\lambda}_t^*)-\widehat{H_t}({\lambda}_t^*)}\nonumber\\
=&\Big|\twonorm{(\lambda_t^*)^\top \nabla J(\theta_t)}-\twonorm{(\lambda_t^*)^\top \widehat \nabla J_{\boldsymbol{w}^*_{t}}(\theta_t)}\Big|\nonumber\\
\leq&\twonorm{(\lambda_t^*)^\top \mathbb{E}_{d_{\theta_t}}[\langle Q_{\pi_{\theta_t}}(\boldsymbol s,\boldsymbol a)-\phi^\top (\boldsymbol s,\boldsymbol a)\boldsymbol w_t^*,\psi_{\theta_t}(\boldsymbol s,\boldsymbol a)\rangle]}\nonumber\\
\leq&\max_{k}\varbrac{\twonorm{\mathbb{E}_{d_{\theta_t}}\Fbrac{(Q^k_{\pi_{\theta_t}}(s^k,a^k)-\phi^k(s^k,a^k)^\top w_t^{*k})\psi _{\pi_{\theta_t}}(s^k,a^k)}}}\nonumber\\
\overset{(i)}{\leq}&C_\phi\max_{k}\varbrac{\twonorm{\mathbb{E}_{d_{\theta_t}}[Q^k_{\pi_{\theta_t}}(s^k,a^k)-\langle\phi^k(s^k,a^k),w_t^{*k}\rangle]}}\nonumber\\
{\leq}&C_\phi\max_{k}\sqrt{\mathbb{E}_{d_{\theta_t}}\Fbrac{\|Q^k_{\pi_{\theta_t}}(s^k,a^k)-\langle\phi^k(s^k,a^k),w_t^{*k}\rangle\|_2^2}}\nonumber\\
\overset{(ii)}{\leq}&C_\phi\epsilon_{\text{app}},
\end{align*}
where $(i)$ follows from \Cref{ass:smoothandlip} and $(ii)$ follows from \Cref{def:inj}. Then for the term $\lbrac{{H_t}(\widehat{\lambda}_t^*)-\widehat{H_t}(\widehat{\lambda}_t^*)}$, we can follow similar steps and the following inequality can be derived
\begin{align*}
\lbrac{{H_t}(\widehat{\lambda}_t^*)-\widehat{H_t}(\widehat{\lambda}_t^*)}\leq C_\phi\epsilon_{\text{app}}.
\end{align*}
Therefore, we can obtain
\begin{align*}
\mathbb{E}\Fbrac{\lbrac{{H_t}({\lambda}_t^*)-\widehat{H_t}(\widehat{\lambda}_t^*)}}\leq 2C_\phi\epsilon_{\text{app}}.
\end{align*}
The proof is complete.
\end{proof}

\subsection{Proof of \Cref{cadistanceprop}}
\textbf{CA distance.} \label{proof:cadistance} Now we show the upper bound for the distance to CA direction. Recall that we define the CA distance as $\twonormsq{\lambda_{t,N_\ca}^\top\widehat \nabla J_{\boldsymbol{w}_{t+1}}(\theta_t)-(\lambda_t^*)^\top \nabla J(\theta_t)}$,
\begin{align}\label{eq:cadistance}
\|&\lambda_{t,N_\ca}^\top\widehat \nabla J_{\boldsymbol{w}_{t+1}}(\theta_t)-(\lambda_t^*)^\top \nabla J(\theta_t)\|^2\nn\\
=&
\|\mathbb{E}_{d_{\theta_t}}[\lambda_{t,N_\ca}^\top\vbrac{\phi^\top(\boldsymbol s,\boldsymbol a)\boldsymbol{w}_{t+1}, \psi_{\theta_t}(\boldsymbol{s},\boldsymbol{a})}]-\mathbb{E}_{d_{\theta_t}}[(\lambda_t^*)^\top Q^{\pi_t}(\boldsymbol{s},\boldsymbol{a})\psi_{\theta_t}(\boldsymbol{s},\boldsymbol{a})]\|_2^2\nonumber\\
=&\|\mathbb{E}_{d_{\theta_t}}
[\lambda_{t,N_\ca}^\top\vbrac{\phi^\top(\boldsymbol s,\boldsymbol a)\boldsymbol{w}_{t+1}, \psi_{\theta_t}(\boldsymbol{s},\boldsymbol{a})}]\|_2^2+\|\mathbb{E}_{d_{\theta_t}}[(\lambda_t^*)^\top Q^{\pi_t}(\boldsymbol{s},\boldsymbol{a})\psi_{\theta_t}(\boldsymbol{s},\boldsymbol{a})]\|_2^2\nonumber\\
&-2\langle\mathbb{E}_{d_{\theta_t}}[\lambda_{t,N_\ca}^\top\vbrac{\phi^\top(\boldsymbol s,\boldsymbol a)\boldsymbol{w}_{t+1}, \psi_{\theta_t}(\boldsymbol{s},\boldsymbol{a})}],\mathbb{E}_{d_{\theta_t}}[(\lambda_t^*)^\top Q^{\pi_t}(\boldsymbol{s},\boldsymbol{a})\psi_{\theta_t}(\boldsymbol{s},\boldsymbol{a})]\rangle\nonumber\\
=&\|\mathbb{E}_{d_{\theta_t}}
[\lambda_{t,N_\ca}^\top\vbrac{\phi^\top(\boldsymbol s,\boldsymbol a)\boldsymbol{w}_{t+1}, \psi_{\theta_t}(\boldsymbol{s},\boldsymbol{a})}]\|_2^2+\|\mathbb{E}_{d_{\theta_t}}[(\lambda_t^*)^\top Q^{\pi_t}(\boldsymbol{s},\boldsymbol{a})\psi_{\theta_t}(\boldsymbol{s},\boldsymbol{a})]\|_2^2\nonumber\\
&-2\langle\mathbb{E}_{d_{\theta_t}}[\lambda_{t,N_\ca}^\top Q^{\pi_t}(\boldsymbol{s},\boldsymbol{a})\psi_{\theta_t}(\boldsymbol{s},\boldsymbol{a})],\mathbb{E}_{d_{\theta_t}}[(\lambda_t^*)^\top Q^{\pi_t}(\boldsymbol{s},\boldsymbol{a})\psi_{\theta_t}(\boldsymbol{s},\boldsymbol{a})]\rangle\nonumber\\
&-2\langle\mathbb{E}_{d_{\theta_t}}[\lambda_{t,N_\ca}^\top(\langle\phi(\boldsymbol{s},\boldsymbol{a}),\boldsymbol{w}_{t+1}\rangle- Q^{\pi_t}(\boldsymbol{s},\boldsymbol{a}))\psi_{\theta_t}(\boldsymbol{s},\boldsymbol{a})],\mathbb{E}_{d_{\theta_t}}[(\lambda_t^*)^\top Q^{\pi_t}(\boldsymbol{s},\boldsymbol{a})\psi_{\theta_t}(\boldsymbol{s},\boldsymbol{a})]\rangle\nonumber\\
\overset{(i)}{\leq}& {\|\mathbb{E}_{d_{\theta_t}}
[\lambda_{t,N_\ca}^\top\vbrac{\phi^\top(\boldsymbol s,\boldsymbol a)\boldsymbol{w}_{t+1}, \psi_{\theta_t}(\boldsymbol{s},\boldsymbol{a})}]\|_2^2-\|\mathbb{E}_{d_{\theta_t}}[(\lambda_t^*)^\top Q^{\pi_t}(\boldsymbol{s},\boldsymbol{a})\psi_{\theta_t}(\boldsymbol{s},\boldsymbol{a})]\|_2^2}\nonumber\\
& {-2\langle\mathbb{E}_{d_{\theta_t}}[\lambda_{t,N_\ca}^\top(\langle\phi(\boldsymbol{s},\boldsymbol{a}),\boldsymbol{w}_{t+1}\rangle- Q^{\pi_t}(\boldsymbol{s},\boldsymbol{a}))\psi_{\theta_t}(\boldsymbol{s},\boldsymbol{a})],\mathbb{E}_{d_{\theta_t}}[(\lambda_t^*)^\top Q^{\pi_t}(\boldsymbol{s},\boldsymbol{a})\psi_{\theta_t}(\boldsymbol{s},\boldsymbol{a})]\rangle}\nonumber\\
\leq &
\underbrace{\twonormsq{\mathbb{E}_{d_{\theta_t}}\Fbrac{\lambda_{t,N_\ca}^\top\vbrac{\phi^\top(\boldsymbol s,\boldsymbol a)\boldsymbol{w}_{t+1}, \psi_{\theta_t}(\boldsymbol{s},\boldsymbol{a})} } }
-\twonormsq{\mathbb{E}_{d_{\theta_t}}
\Fbrac{(\widehat{\lambda}_{t}^\prime)^\top\vbrac{\phi^\top(\boldsymbol s,\boldsymbol a)\boldsymbol{w}_{t+1}, \psi_{\theta_t}(\boldsymbol{s},\boldsymbol{a})}}}}_{\text{term I}}\nonumber\\
&+\|\mathbb{E}_{d_{\theta_t}}
[(\widehat{\lambda}_{t}^\prime)^\top\vbrac{\phi^\top(\boldsymbol s,\boldsymbol a)\boldsymbol{w}_{t+1}, \psi_{\theta_t}(\boldsymbol{s},\boldsymbol{a})}]\|_2^2-\|\mathbb{E}_{d_{\theta_t}}[(\lambda_t^*)^\top Q^{\pi_t}(\boldsymbol{s},\boldsymbol{a})\psi_{\theta_t}(\boldsymbol{s},\boldsymbol{a})]\|_2^2\nonumber\\
&-2\langle\mathbb{E}_{d_{\theta_t}}[\lambda_{t,N_\ca}^\top(\langle\phi(\boldsymbol{s},\boldsymbol{a}),\boldsymbol{w}_{t+1}\rangle- Q^{\pi_t}(\boldsymbol{s},\boldsymbol{a}))\psi_{\theta_t}(\boldsymbol{s},\boldsymbol{a})],\mathbb{E}_{d_{\theta_t}}[(\lambda_t^*)^\top Q^{\pi_t}(\boldsymbol{s},\boldsymbol{a})\psi_{\theta_t}(\boldsymbol{s},\boldsymbol{a})]\rangle\nonumber,
\end{align}where $(i)$ follows from the optimality condition that
\begin{align}
\langle\lambda_{t,N_\ca}&,(Q^{\pi_t}(\boldsymbol{s},\boldsymbol{a})\psi_{\theta_t}(\boldsymbol{s},\boldsymbol{a})^\top Q^{\pi_t}(\boldsymbol{s},\boldsymbol{a})\psi_{\theta_t}(\boldsymbol{s},\boldsymbol{a})\lambda_t^*\rangle\nonumber\\
\geq&\langle\lambda_{t}^*,(Q^{\pi_t}(\boldsymbol{s},\boldsymbol{a})\psi_{\theta_t}(\boldsymbol{s},\boldsymbol{a})^\top Q^{\pi_t}(\boldsymbol{s},\boldsymbol{a})\psi_{\theta_t}(\boldsymbol{s},\boldsymbol{a})\lambda_t^*\rangle=\|(\lambda_t^*)^\top Q^{\pi_t}(\boldsymbol{s},\boldsymbol{a})\psi_{\theta_t}(\boldsymbol{s},\boldsymbol{a})\|_2^2.
\end{align}

Next, we bound the term I as follows: 
\begin{align}
&\text{term I}\nn
\\
&=\twonorm{\mathbb{E}_{d_{\theta_t}}\Fbrac{\lambda_{t,N_\ca}^\top\vbrac{\phi^\top(\boldsymbol s,\boldsymbol a)\boldsymbol{w}_{t+1}, \psi_{\theta_t}(\boldsymbol{s},\boldsymbol{a})} } }^2
-\twonorm{\mathbb{E}_{d_{\theta_t}}
\Fbrac{(\widehat{\lambda}_{t}^\prime)^\top\vbrac{\phi^\top(\boldsymbol s,\boldsymbol a)\boldsymbol{w}_{t+1}, \psi_{\theta_t}(\boldsymbol{s},\boldsymbol{a})}}}^2
\nonumber\\&\overset{(i)}{\leq}\Big(\frac{2}{c}+2cC_1\Big)\frac{2+\log N_\ca}{\sqrt{N_\ca}},\label{eqeq39}
\end{align}
where $(i)$ follows from Theorem 2 \cite{shamir2013stochastic} since the gradient estimator is unbiased, $\sup_{\lambda,\lambda^\prime} \|\lambda - \lambda^\prime\| \leq 1$, $\mathbb{E}[\|(\phi(\boldsymbol{s},\boldsymbol{a})^\top \boldsymbol{w}_{t+1}\psi_{\theta_t}(\boldsymbol{s},\boldsymbol{a}))^\top(\phi(\boldsymbol{s},\boldsymbol{a})^\top\boldsymbol{w}_{t+1}\psi_{\theta_t}(\boldsymbol{s},\boldsymbol{a})\widehat{\lambda}_t^\prime\|] \leq C^4_\phi B^2=C_1$, and $c_{t,i}=\frac{c}{\sqrt{i}}$. Then, the last term can be bounded as follows:
\begin{align*}
&\|\mathbb{E}_{d_{\theta_t}}
[(\widehat{\lambda}_{t}^\prime)^\top\vbrac{\phi^\top(\boldsymbol s,\boldsymbol a)\boldsymbol{w}_{t+1}, \psi_{\theta_t}(\boldsymbol{s},\boldsymbol{a})}]\|_2^2-\|\mathbb{E}_{d_{\theta_t}}[(\lambda_t^*)^\top Q^{\pi_t}(\boldsymbol{s},\boldsymbol{a})\psi_{\theta_t}(\boldsymbol{s},\boldsymbol{a})]\|_2^2\nonumber\\
&-2\langle\mathbb{E}_{d_{\theta_t}}[\lambda_{t,N_\ca}^\top(\langle\phi(\boldsymbol{s},\boldsymbol{a}),\boldsymbol{w}_{t+1}\rangle- Q^{\pi_t}(\boldsymbol{s},\boldsymbol{a}))\psi_{\theta_t}(\boldsymbol{s},\boldsymbol{a})],\mathbb{E}_{d_{\theta_t}}[(\lambda_t^*)^\top Q^{\pi_t}(\boldsymbol{s},\boldsymbol{a})\psi_{\theta_t}(\boldsymbol{s},\boldsymbol{a})]\rangle\nonumber\\
\overset{(i)}{\leq}
&\big|\|\mathbb{E}_{d_{\theta_t}}
[(\widehat{\lambda}_{t}^\prime)^\top\vbrac{\phi^\top(\boldsymbol s,\boldsymbol a)\boldsymbol{w}_{t+1}, \psi_{\theta_t}(\boldsymbol{s},\boldsymbol{a})}]\|_2-\|\mathbb{E}_{d_{\theta_t}}[(\lambda_t^*)^\top Q^{\pi_t}(\boldsymbol{s},\boldsymbol{a})\psi_{\theta_t}(\boldsymbol{s},\boldsymbol{a})]\|_2\big|\nonumber\\
&\times(\|\mathbb{E}_{d_{\theta_t}}
[(\widehat{\lambda}_{t}^\prime)^\top\vbrac{\phi^\top(\boldsymbol s,\boldsymbol a)\boldsymbol{w}_{t+1}, \psi_{\theta_t}(\boldsymbol{s},\boldsymbol{a})}]\|_2+\|\mathbb{E}_{d_{\theta_t}}[(\lambda_t^*)^\top Q^{\pi_t}(\boldsymbol{s},\boldsymbol{a})\psi_{\theta_t}(\boldsymbol{s},\boldsymbol{a})]\|_2)\nonumber\\
&+2\|\mathbb{E}_{d_{\theta_t}}[\lambda_{t,N_\ca}^\top(\langle\phi(\boldsymbol{s},\boldsymbol{a}),\boldsymbol{w}_{t+1}\rangle- Q^{\pi_t}(\boldsymbol{s},\boldsymbol{a}))\psi_{\theta_t}(\boldsymbol{s},\boldsymbol{a})]\|_2\|\mathbb{E}_{d_{\theta_t}}[(\lambda_t^*)^\top Q^{\pi_t}(\boldsymbol{s},\boldsymbol{a})\psi_{\theta_t}(\boldsymbol{s},\boldsymbol{a})]\|_2\nonumber\\
\overset{(ii)}{\leq}&C^4_\phi B^2|\widehat{H}_t^\prime(\widehat{\lambda}_t^\prime)-H_t(\lambda_t^*)|+ \frac{2 C^2_\phi}{1-\gamma} \epsilon_{\text{app}}\nonumber
\\  \overset{(iii)}{\leq} &   C_\phi^6 B^2\sqrt{\frac{4B^2}{N_\critic+1}+\frac{U_{\delta}^2C_\phi^2\log N_\critic}{4\lambda_A^2(N_\critic+1)}}+\frac{2 C^2_\phi}{1-\gamma} \epsilon_{\text{app}},
\end{align*}
where $(i)$ follows from the inequality that $\|A\|_2^2-\|B\|_2^2\leq|\|A\|_2-\|B\|_2|(\|A\|_2+\|B\|_2)$ and Cauchy-Schwartz inequality. $(ii)$ follows from the definition in \Cref{notations}. $(iii)$ follows from \Cref{gaponh1}.  Then we apply \Cref{gaponh2}, we can derive
\begin{align*}
\|\mathbb{E}_{d_{\theta_t}}[\lambda_{t,N_\ca}^\top\vbrac{\phi^\top(\boldsymbol s,\boldsymbol a)\boldsymbol{w}_{t+1}, \psi_{\theta_t}(\boldsymbol{s},\boldsymbol{a})}]-&\mathbb{E}_{d_{\theta_t}}[(\lambda_t^*)^\top Q^{\pi_t}(\boldsymbol{s},\boldsymbol{a})\psi_{\theta_t}(\boldsymbol{s},\boldsymbol{a})]\|
\nn\\&=\mathcal{O}\brac{\frac{1}{\sqrt[4]{N_\ca}}+\frac{1}{\sqrt{N_\critic}}+\sqrt{\epsilon_{\text{app}}}}.
\end{align*}
The proof is complete.
\begin{theorem}[Restatement of \Cref{convergencetheorem1}] Suppose \Cref{ass:smoothandlip} and \Cref{asm:ergodic} are satisfied. We choose $\beta_t=\beta\leq\frac{1}{L_J}$ as a constant and $c_{t,i}=\frac{c}{\sqrt{i}}$ where $i$ is the iteration number for updating $\lambda_{t,i}$, and we have 
\begin{align*}
\frac{1}{T}\sum_{t=0}^{T-1}\mathbb{E}[\|(\lambda_t^*)^\top\nabla J({\theta_t})\|_2^2]=\mathcal{O}\Big(\frac{1}{\beta T}+\epsilon_{\text{app}}+\frac{\beta}{N_\actor}+\frac{1}{\sqrt{N_\critic}}+\frac{1}{\sqrt[4]{N_\ca}}\Big).
\end{align*}
\begin{proof}
We first define a fixed simplex $\bar{\lambda}=[\bar{\lambda}_1, \bar{\lambda}_2,...,\bar{\lambda}_K]$. According to the \Cref{prop:gradlip}, for each task $k\in[K]$, we have
\begin{align*}
J^k({\theta_{t}})\leq J^k({\theta_{t+1}})-\langle\nabla J^k({\theta_t}),\theta_{t+1}-\theta_t\rangle+\frac{L_J}{2}\|\theta_{t+1}-\theta_t\|_2^2,
\end{align*}
where $k\in [K]$. Then by multiplying $\bar{\lambda}_k$ on both sides and summing over $k$, we can obtain,
\begin{align}\label{descentlemmaforJ}
\bar{\lambda}^\top J({\theta_{t}})\leq \bar{\lambda}^\top J({\theta_{t+1}})-\langle\bar{\lambda}^\top \nabla J({\theta_t}),\theta_{t+1}-\theta_t\rangle+\frac{L_J}{2}\|\theta_{t+1}-\theta_t\|_2^2,
\end{align}
then recalling from \Cref{alg:1}, we have the update rule 
\begin{align*}
\theta_{t+1}=&\theta_t+ \beta_t\frac{1}{N_\actor}\sum_{l=0}^{N_\actor-1}\lambda_{t+1}^\top \langle \phi(\boldsymbol s_l,\boldsymbol a_l)^\top \boldsymbol  w _{t+1}  ,\psi_{\theta_t}(\boldsymbol s_l,\boldsymbol a_l)\rangle\\
=& \theta_t +\beta_t \frac{1}{N_\actor}\sum_{l=0}^{N_\actor-1} \lambda_{t+1}^\top\Zeta(\boldsymbol{s}_l, \boldsymbol{a}_l, \theta_t,\boldsymbol{w}_{t+1}).
\end{align*}
Thus for the third term, we have
\begin{align}\label{tempssssss}
\mathbb{E}[\|\theta_{t+1}-\theta_t\|_2^2]=&\mathbb{E}[\|\theta_{t+1}-\theta_t\|_2^2-\beta_t^2\|\lambda_{t+1}^\top \mathbb E \Fbrac{\Zeta(\boldsymbol{s}, \boldsymbol{a}, \theta_t,\boldsymbol{w}_{t+1})}\|_2^2]\nonumber\\
&+\beta_t^2\|\lambda_{t+1}^\top \mathbb E \Fbrac{\Zeta(\boldsymbol{s}, \boldsymbol{a}, \theta_t,\boldsymbol{w}_{t+1})}\|_2^2\nonumber\\
\leq & \beta_t^2 \mathbb E\Fbrac{\twonormsq{\frac{1}{N_\actor}\sum_{l=0}^{N_\actor-1} \Zeta(\boldsymbol{s}_l, \boldsymbol{a}_l, \theta_t,\boldsymbol{w}_{t+1}) }-\twonormsq{\lambda_{t+1}^\top\mathbb E \Fbrac{\Zeta(\boldsymbol{s}, \boldsymbol{a}, \theta_t,\boldsymbol{w}_{t+1})} } }\nonumber\\
&+ \beta_t^2 \twonormsq{\mathbb E \Fbrac{\Zeta(\boldsymbol{s}, \boldsymbol{a}, \theta_t,\boldsymbol{w}_{t+1}) }}
\nonumber \\
\overset{(i)}{\leq}&\beta_t^2\frac{2  C^4_\phi B^2}{N_\actor}+ \beta_t^2 \twonormsq{\mathbb E \Fbrac{\Zeta(\boldsymbol{s}, \boldsymbol{a}, \theta_t,\boldsymbol{w}_{t+1}) }},
\end{align}
where $(i)$ follows from \Cref{lemma:c/n}. Then for the second term in \cref{descentlemmaforJ}, we take the expectation of it,
\begin{align}
-&\mathbb{E}[\langle\bar{\lambda}^\top \nabla J({\theta_t}),\theta_{t+1}-\theta_t\rangle]\nonumber\\
=&-\mathbb{E}[\langle\bar{\lambda}^\top \mathbb{E}_{d_{\theta_t}}[Q_{\pi_{\theta_t}}(\boldsymbol{s},\boldsymbol{a})\psi_{\theta_t}(\boldsymbol{s},\boldsymbol{a})],\theta_{t+1}-\theta_t\rangle]\nonumber\\
=&-\mathbb{E}[\langle\mathbb{E}_{d_{\theta_t}}[\bar{\lambda}^\top (Q_{\pi_{\theta_t}}(\boldsymbol{s},\boldsymbol{a})-\langle\phi(\boldsymbol{s},\boldsymbol{a}),\boldsymbol{w}_t^*\rangle)\psi_{\theta_t}(\boldsymbol{s},\boldsymbol{a})],\theta_{t+1}-\theta_t\rangle]\nonumber\\
&-\mathbb{E}[\langle\mathbb{E}_{d_{\theta_t}}[\bar{\lambda}^\top \langle\phi(\boldsymbol{s},\boldsymbol{a}),\boldsymbol{w}_t^*-\boldsymbol{w}_{t+1}\rangle\psi_{\theta_t}(\boldsymbol{s},\boldsymbol{a})],\theta_{t+1}-\theta_t\rangle]\nonumber\\
&-\mathbb{E}[\langle\mathbb{E}_{d_{\theta_t}}[\bar{\lambda}^\top \vbrac{\phi(\boldsymbol{s},\boldsymbol{a})^\top\boldsymbol{w}_{t+1},\psi_{\theta_t}(\boldsymbol{s},\boldsymbol{a})}],\theta_{t+1}-\theta_t\rangle]\nonumber\\
\overset{(i)}{\leq}&\beta_tC_\phi^3B\big|\mathbb{E}_
{d_{\theta_t}}[Q_{\pi_{\theta_t}}(\boldsymbol{s},\boldsymbol{a})-\langle\phi(\boldsymbol{s},\boldsymbol{a}),\boldsymbol{w}_t^*\rangle]\big|+\beta_tC_\phi^4B\max_{k\in[K]}\mathbb{E}[\|w_t^{*k}-w_{t+1}^k\|_2]\nonumber\\
&-\beta_t\mathbb{E}[\langle\bar{\lambda}^\top \mathbb{E}_{d_{\theta_t}}[\vbrac{\phi(\boldsymbol{s},\boldsymbol{a})^\top\boldsymbol{w}_{t+1},\psi_{\theta_t}(\boldsymbol{s},\boldsymbol{a})}],\mathbb{E}_{d_{\theta_t}}[\lambda_{t+1}^\top \vbrac{\phi(\boldsymbol{s},\boldsymbol{a})^\top\boldsymbol{w}_{t+1},\psi_{\theta_t}(\boldsymbol{s},\boldsymbol{a})}]\rangle]\nonumber\\
\leq&\beta_tC_\phi^3B\sqrt{\|\mathbb{E}_
{d_{\theta_t}}[Q_{\pi_{\theta_t}}(\boldsymbol{s},\boldsymbol{a})-\langle\phi(\boldsymbol{s},\boldsymbol{a}),\boldsymbol{w}_t^*\rangle]\|_2^2}+\beta_tC_\phi^4B\max_{k\in[K]}\mathbb{E}[\|w_t^{*k}-w_{t+1}^k\|_2]\nonumber\\
&-\beta_t\mathbb{E}[\langle\bar{\lambda}^\top \mathbb{E}_{d_{\theta_t}}[\vbrac{\phi(\boldsymbol{s},\boldsymbol{a})^\top\boldsymbol{w}_{t+1},\psi_{\theta_t}(\boldsymbol{s},\boldsymbol{a})}],\mathbb{E}_{d_{\theta_t}}[\widehat{(\lambda}_{t}^{\prime})^\top\vbrac{\phi(\boldsymbol{s},\boldsymbol{a})^\top\boldsymbol{w}_{t+1},\psi_{\theta_t}(\boldsymbol{s},\boldsymbol{a})}]\rangle]\nonumber\\
&+\beta_t\mathbb{E}[\langle\bar{\lambda}^\top \mathbb{E}_{d_{\theta_t}}[\vbrac{\phi(\boldsymbol{s},\boldsymbol{a})^\top\boldsymbol{w}_{t+1},\psi_{\theta_t}(\boldsymbol{s},\boldsymbol{a})}],\mathbb{E}_{d_{\theta_t}}[(\widehat{\lambda}_{t}^{\prime}-\lambda_{t+1})^\top \vbrac{\phi(\boldsymbol{s},\boldsymbol{a})^\top\boldsymbol{w}_{t+1},\psi_{\theta_t}(\boldsymbol{s},\boldsymbol{a})}]\rangle]\nonumber\\
\overset{(ii)}{\leq}&\beta_tC_\phi^3B\epsilon_{\text{app}}+\beta_tC_\phi^4B\sqrt{\frac{4B^2}{N_\critic+1}+\frac{U_{\delta}^2C_\phi^2\log N_\critic}{4\lambda_A^2(N_\critic+1)}}-\beta_t\mathbb{E}[\|\widehat{\lambda}_t^\prime\langle\phi(\boldsymbol{s},\boldsymbol{a})\boldsymbol{w}_{t+1}\rangle\psi_{\theta_t}(\boldsymbol{s},\boldsymbol{a})\|_2^2]\nonumber\\
&+\beta_t\mathbb{E}[C_\phi^2B\|(\widehat{\lambda}_t^\prime-\lambda_{t+1})^\top \vbrac{\phi(\boldsymbol{s},\boldsymbol{a})^\top\boldsymbol{w}_{t+1},\psi_{\theta_t}(\boldsymbol{s},\boldsymbol{a})}\|_2],
\end{align}
where $(i)$ follows from \Cref{ass:smoothandlip}, $(ii)$ follows from \Cref{lemma:epapp}, \Cref{criticpart} and optimality condition
\begin{align*}
\mathbb{E}[\langle\bar{\lambda}^\top \mathbb{E}&_{d_{\theta_t}}[\vbrac{\phi(\boldsymbol{s},\boldsymbol{a})^\top\boldsymbol{w}_{t+1},\psi_{\theta_t}(\boldsymbol{s},\boldsymbol{a})}],\mathbb{E}_{d_{\theta_t}}[\widehat{(\lambda}_{t}^{\prime})^\top\vbrac{\phi(\boldsymbol{s},\boldsymbol{a})^\top\boldsymbol{w}_{t+1},\psi_{\theta_t}(\boldsymbol{s},\boldsymbol{a})}]\rangle]\nonumber\\
\geq&\mathbb{E}[\|\widehat{\lambda}_t^\prime\langle\phi(\boldsymbol{s},\boldsymbol{a})^\top\boldsymbol{w}_{t+1}, \psi_{\theta_t}(\boldsymbol{s},\boldsymbol{a})\rangle\|_2^2].
\end{align*}
Again, according to the Theorem 2 in \cite{shamir2013stochastic} following the same choice of step size $c_{t,i}$ in \Cref{eqeq39}, we can obtain,
\begin{align*}
\mathbb{E}[\|(\widehat{\lambda}_t^\prime-&\lambda_{t+1})^\top \vbrac{\phi(\boldsymbol{s},\boldsymbol{a})^\top\boldsymbol{w}_{t+1},\psi_{\theta_t}(\boldsymbol{s},\boldsymbol{a})}\|_2^2]\nonumber\\
{\leq}&\mathbb{E}[\|\lambda_{t+1}^\top \mathbb{E}_{d_{\theta_t}}[\vbrac{\phi(\boldsymbol{s},\boldsymbol{a})^\top\boldsymbol{w}_{t+1},\psi_{\theta_t}(\boldsymbol{s},\boldsymbol{a})}]\|_2^2]-\mathbb{E}[\|(\widehat{\lambda}_{t}^\prime)^\top \mathbb{E}_{d_{\theta_t}}[\langle\phi(\boldsymbol{s},\boldsymbol{a})^\top \boldsymbol{w}_{t+1}, \psi_{\theta_t}\brac{\boldsymbol{s},\boldsymbol{a}}\rangle]\|_2^2]\nonumber\\
{\leq}&\brac{\frac{2}{c}+2cC_1}\frac{2+\log N_\ca}{\sqrt{N_\ca}}.
\end{align*}
Thus, we can derive
\begin{align}\label{temp11111}
-\mathbb{E}&[\langle\bar{\lambda}^\top \nabla J({\theta_t}),\theta_{t+1}-\theta_t\rangle]\nonumber\\
\leq&\beta_tC_\phi^3B\epsilon_{\text{app}}+\beta_tC_\phi^4B\sqrt{\frac{4B^2}{N_\critic+1}+\frac{U_{\delta}^2C_\phi^2\log N_\critic}{4\lambda_A^2(N_\critic+1)}}+\beta_tC_\phi^2B\sqrt{\Big(\frac{2}{c}+2cC_1\Big)\frac{2+\log N_\ca}{\sqrt{N_\ca}}}\nonumber\\
&-\beta_t\mathbb{E}[\|\widehat{\lambda}_t^\prime\langle\phi(\boldsymbol{s},\boldsymbol{a})^\top\boldsymbol{w}_{t+1},\psi_{\theta_t}(\boldsymbol{s},\boldsymbol{a})\rangle\|_2^2].
\end{align}
Then combining \cref{temp11111} and \cref{tempssssss} into \cref{descentlemmaforJ}, we can obtain that,
\begin{align}\label{eq:descentlemmanew}
\mathbb{E}[\bar{\lambda}^\top &J({\theta_{t}})]\leq\mathbb{E}[\bar{\lambda}^\top J({\theta_{t+1}})]-\beta_t\mathbb{E}[\|\widehat{\lambda}_t^\prime\mathbb{E}[\vbrac{\phi^\top(\boldsymbol s,\boldsymbol a)\boldsymbol{w}_{t+1}, \psi_{\theta_t}(\boldsymbol{s},\boldsymbol{a})}]\|_2^2]\nonumber\\
&+\frac{L_J\beta_t^2}{2}\mathbb{E}[\|\widehat{\lambda}_t^\prime\langle\phi(\boldsymbol{s},\boldsymbol{a})^\top\boldsymbol{w}_{t+1},\psi_{\theta_t}(\boldsymbol{s},\boldsymbol{a})\rangle\|_2^2]+\beta_t^2\frac{L_J C^4_\phi B^2}{N_\actor}+\beta_tC_\phi^3B\epsilon_{\text{app}}\nonumber\\
&+\beta_tC_\phi^4B\sqrt{\frac{4B^2}{N_\critic+1}+\frac{U_{\delta}^2C_\phi^2\log N_\critic}{4\lambda_A^2(N_\critic+1)}}+\beta_tC_\phi^2B\sqrt{\Big(\frac{2}{c}+2cC_1\Big)\frac{2+\log N_\ca}{\sqrt{N_\ca}}}.
\end{align}
We set $\beta_t=\beta\leq\frac{1}{L_J}$ as a constant. Then, we rearrange and telescope over $t=0,1,2,...,T-1$,
\begin{align}\label{eq:middleerror}
\frac{1}{T}&\sum_{t=0}^T\mathbb{E}[\|\widehat{\lambda}_t^\prime\mathbb{E}[\vbrac{\phi^\top(\boldsymbol s,\boldsymbol a)\boldsymbol{w}_{t+1}, \psi_{\theta_t}(\boldsymbol{s},\boldsymbol{a})}]\|_2^2]\leq\frac{2}{\beta T}\mathbb{E}[\bar{\lambda}^\top J({\theta_T})-\bar{\lambda}^\top J({\theta_0})]+\beta\frac{2 L_J C^4_\phi B^2}{N_\actor}\nonumber\\
&+2C_\phi^3B\epsilon_{\text{app}}+2C_\phi^4B\sqrt{\frac{4B^2}{N_\critic+1}+\frac{U_{\delta}^2C_\phi^2\log N_\critic}{4\lambda_A^2(N_\critic+1)}}+2C_\phi^2B\sqrt{\Big(\frac{2}{c}+2cC_1\Big)\frac{2+\log N_\ca}{\sqrt{N_\ca}}}.
\end{align}
Then we consider our target $\mathbb{E}[\|\lambda_t^*\nabla J({\theta_t})\|_2^2]$ , we can derive
\begin{align}\label{eq:47}
\mathbb{E}[\|&(\lambda_t^*)^\top\nabla J({\theta_t})\|_2^2]\nonumber\\
=&\mathbb{E}[\|(\lambda_t^*)^\top\nabla J({\theta_t})\|_2^2]-\mathbb{E}[\|(\widehat{\lambda}_t^*)^\top\mathbb{E}_{d_{{\theta_t}}}[\langle\phi(\boldsymbol s,\boldsymbol a)^\top \boldsymbol{w}_t^*,\psi_{\theta_t}(\boldsymbol s,\boldsymbol a)\rangle]\|_2^2]\nonumber\\
&+\mathbb{E}[\|(\widehat{\lambda}_t^*)^\top\mathbb{E}_{d_{{\theta_t}}}[\langle\phi(\boldsymbol s,\boldsymbol a)^\top \boldsymbol{w}_t^*,\psi_{\theta_t}(\boldsymbol s,\boldsymbol a)\rangle]\|_2^2]-\mathbb{E}[\|(\widehat{\lambda}_t^\prime)^\top\mathbb{E}_{d_{{\theta_t}}}[\langle\phi(\boldsymbol s,\boldsymbol a)^\top \boldsymbol{w}_t^*,\psi_{\theta_t}(\boldsymbol s,\boldsymbol a)\rangle]\|_2^2]\nonumber\\
&+\mathbb{E}[\|(\widehat{\lambda}_t^\prime)^\top\mathbb{E}_{d_{{\theta_t}}}[\langle\phi(\boldsymbol s,\boldsymbol a)^\top \boldsymbol{w}_{t+1}\psi_{\theta_t}(\boldsymbol s,\boldsymbol a)\rangle]\|_2^2]\nonumber\\
\leq&2C_\phi^2B(|H_t^*(\lambda_t^*)-\widehat{H}_t(\widehat{\lambda}_t^*)|+|\widehat{H}_t(\widehat{\lambda}_t^*)-\widehat{H}^\prime_t(\widehat{\lambda}_t^\prime)|)\nonumber\\
&\qquad\qquad+\mathbb{E}[\|(\widehat{\lambda}_t^\prime)^\top\mathbb{E}_{d_{{\theta_t}}}[\langle\phi(\boldsymbol s,\boldsymbol a)^\top \boldsymbol{w}_{t+1},\psi_{\theta_t}(\boldsymbol s,\boldsymbol a)\rangle]\|_2^2].
\end{align}
Then summing over $t=0,1,2,...,T-1$ of the above inequality, we can get
\begin{align*}
\frac{1}{T}&\sum_{t=0}^{T-1}\mathbb{E}[\|(\lambda_t^*)^\top\nabla J({\theta_t})\|_2^2]
\nonumber\\
\leq&\frac{1}{T}\sum_{t=0}^{T-1}\big(2C_\phi^2B(|H_t^*(\lambda_t^*)-\widehat{H}_t(\widehat{\lambda}_t^*)|+|\widehat{H}_t(\widehat{\lambda}_t^*)-\widehat{H}^\prime_t(\widehat{\lambda}_t^\prime)|)\big)\nonumber\\
&+\frac{1}{T}\sum_{t=0}^{T-1}\mathbb{E}\Fbrac{\|(\widehat{\lambda}_t^\prime)^\top\mathbb{E}_{d_{{\theta_t}}}[\langle\phi(\boldsymbol s,\boldsymbol a)^\top \boldsymbol{w}_t^*,\psi_{\theta_t}(\boldsymbol s,\boldsymbol a)\rangle]\|_2^2}\nonumber
\\ \overset{(i)}{\leq}& 2C^4_\phi B\sqrt{\frac{4B^2}{N_\critic+1}+\frac{U_{\delta}^2C_\phi^2\log N_\critic}{4\lambda_A^2(N_\critic+1)}}
+2C^3_\phi B\epsilon_{\text{app}} \nonumber\\
&\qquad+\frac{1}{T}\sum_{t=0}^{T-1}\mathbb{E}\Fbrac{\|(\widehat{\lambda}_t^\prime)^\top\mathbb{E}_{d_{{\theta_t}}}[\langle\phi(\boldsymbol s,\boldsymbol a)^\top \boldsymbol{w}_t^*,\psi_{\theta_t}(\boldsymbol s,\boldsymbol a)\rangle]\|_2^2} \nonumber\\
\overset{(ii)}{\leq}&\frac{2}{\beta T}\mathbb{E}[\bar{\lambda}^\top J({\theta_0})-\bar{\lambda}^\top J({\theta_T})]+2C^4_\phi B\sqrt{\frac{4B^2}{N_\critic+1}+\frac{U_{\delta}^2C_\phi^2\log N_\critic}{4\lambda_A^2(N_\critic+1)}}+\beta  \frac{2 L_J C^4_\phi B^2}{N_\actor}\nonumber\\
&+4C_\phi^3B\epsilon_{\text{app}}+2C_\phi^4B\sqrt{\frac{4B^2}{N_\critic+1}+\frac{U_{\delta}^2C_\phi^2\log N_\critic}{4\lambda_A^2(N_\critic+1)}}+2C_\phi^2B\sqrt{\Big(\frac{2}{c}+2cC_1\Big)\frac{2+\log N_\ca}{\sqrt{N_\ca}}},
\end{align*}where $(i)$ follows from the \Cref{gaponh1,gaponh2} and $(ii)$ follows from the \Cref{eq:47}.
Lastly, above all, we can derive
\begin{align*}
\frac{1}{T}\sum_{t=0}^{T-1}\mathbb{E}[\|(\lambda_t^*)^\top\nabla J({\theta_t})\|_2^2]=\mathcal{O}\Big(\frac{1}{\beta T}+{\epsilon_{\text{app}}}+\frac{\beta}{N_\actor}+\frac{1}{\sqrt{N_\critic}}+\frac{1}{\sqrt[4]{N_\ca}}\Big).
\end{align*}
The proof is complete.
\end{proof}
\end{theorem}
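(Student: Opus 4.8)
The plan is to derive a one-step ascent (descent) inequality for the scalarized objective $\bar\lambda^\top J(\theta_t)$ with $\bar\lambda\in\Lambda$ an arbitrary fixed weight vector, telescope it over $t=0,\dots,T-1$, and then convert the resulting bound on the surrogate quantity $\twonormsq{(\widehat\lambda_t')^\top\widehat\nabla J_{\boldsymbol w_{t+1}}(\theta_t)}=(\widehat H_t'(\widehat\lambda_t'))^2$ into a bound on the target $\twonormsq{(\lambda_t^*)^\top\nabla J(\theta_t)}=(H_t(\lambda_t^*))^2$ using the three auxiliary loss functions $H_t,\widehat H_t,\widehat H_t'$ from \Cref{app:notations}. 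The step I expect to be the main obstacle is the treatment of the biased estimated policy gradient in the cross term, discussed at the end.

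First I would apply the $L_J$-smoothness of each $J^k$ from \Cref{prop:gradlip}, multiply the quadratic upper bound by $\bar\lambda_k$ and sum over $k$, obtaining $\bar\lambda^\top J(\theta_t)\le\bar\lambda^\top J(\theta_{t+1})-\langle\bar\lambda^\top\nabla J(\theta_t),\theta_{t+1}-\theta_t\rangle+\tfrac{L_J}{2}\twonormsq{\theta_{t+1}-\theta_t}$. For the second-order term I would split the actor estimator in \eqref{eq:policyupdate} into its conditional mean and a mean-zero fluctuation and invoke the MDP variance bound \Cref{lemma:c/n}, giving $\mathbb{E}\twonormsq{\theta_{t+1}-\theta_t}\le 2\beta^2C_\phi^4B^2/N_\actor+\beta^2\twonormsq{\lambda_{t+1}^\top\widehat\nabla J_{\boldsymbol w_{t+1}}(\theta_t)}$. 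For the cross term the central step is to swap the biased estimated gradient $\langle\phi,\boldsymbol w_{t+1}\rangle$ for the true action-value: the discrepancy $Q_{\pi_{\theta_t}}-\langle\phi,\boldsymbol w_{t+1}\rangle$ splits as $(Q_{\pi_{\theta_t}}-\langle\phi,\boldsymbol w_t^*\rangle)+\langle\phi,\boldsymbol w_t^*-\boldsymbol w_{t+1}\rangle$, the first contributing $\mathcal O(\epsilon_{\text{app}})$ via \Cref{lemma:epapp} and the second contributing $\mathcal O(1/\sqrt{N_\critic})$ via \Cref{criticpart}; what remains is $-\beta\langle\bar\lambda^\top g,\lambda_{t+1}^\top g\rangle$ with $g=\widehat\nabla J_{\boldsymbol w_{t+1}}(\theta_t)$. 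Here I would use that the CA subprocedure \eqref{eq:lambda1} is exactly projected SGD on $\twonormsq{\lambda^\top g}$ with an unbiased (double-sampled) gradient, so Theorem~2 of \cite{shamir2013stochastic} with $c_{t,i}=c/\sqrt i$ gives $\mathbb{E}\twonormsq{(\lambda_{t+1}-\widehat\lambda_t')^\top g}=\mathcal O(\log N_\ca/\sqrt{N_\ca})$; combining this with the simplex optimality condition $\langle\bar\lambda^\top g,(\widehat\lambda_t')^\top g\rangle\ge\twonormsq{(\widehat\lambda_t')^\top g}$ (valid since $\widehat\lambda_t'$ minimizes $\twonormsq{\lambda^\top g}$ over $\Lambda$) turns the cross term into $-\beta\twonormsq{(\widehat\lambda_t')^\top g}$ plus an $\mathcal O(N_\ca^{-1/4})$ error, the quartic rate arising because a square root of the SGD function-value gap enters through a $\twonorm{A}^2-\twonorm{B}^2$ split.

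Plugging these estimates back and choosing $\beta\le 1/L_J$ so that the net coefficient multiplying $-\twonormsq{(\widehat\lambda_t')^\top g}$ (including the $\tfrac{L_J\beta^2}{2}$ contribution from the smoothness term) is at least $\beta/2$, I would rearrange to isolate $\twonormsq{(\widehat\lambda_t')^\top\widehat\nabla J_{\boldsymbol w_{t+1}}(\theta_t)}$, telescope over $t$, divide by $T$, and use $0\le\bar\lambda^\top J(\theta)\le 1/(1-\gamma)$ (rewards in $[0,1]$) to bound the telescoping term by $\mathcal O(1/(\beta T))$; this gives $\frac1T\sum_t\mathbb{E}\twonormsq{(\widehat\lambda_t')^\top\widehat\nabla J_{\boldsymbol w_{t+1}}(\theta_t)}=\mathcal O\brac{\tfrac1{\beta T}+\epsilon_{\text{app}}+\tfrac{\beta}{N_\actor}+\tfrac1{\sqrt{N_\critic}}+\tfrac1{\sqrt[4]{N_\ca}}}$. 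Finally I would bridge to the target by writing $(H_t(\lambda_t^*))^2-(\widehat H_t'(\widehat\lambda_t'))^2=\bigl(H_t(\lambda_t^*)-\widehat H_t'(\widehat\lambda_t')\bigr)\bigl(H_t(\lambda_t^*)+\widehat H_t'(\widehat\lambda_t')\bigr)$, bounding the second factor by the uniform constant $2C_\phi^2B$ and the first factor through the chain $|H_t(\lambda_t^*)-\widehat H_t(\widehat\lambda_t^*)|\le 2C_\phi\epsilon_{\text{app}}$ (\Cref{gaponh2}, itself via \Cref{maxlemma}) and $|\widehat H_t(\widehat\lambda_t^*)-\widehat H_t'(\widehat\lambda_t')|=\mathcal O(1/\sqrt{N_\critic})$ (\Cref{gaponh1}); averaging $(H_t(\lambda_t^*))^2\le(\widehat H_t'(\widehat\lambda_t'))^2+\mathcal O(\epsilon_{\text{app}}+1/\sqrt{N_\critic})$ over $t$ and substituting the telescoped bound yields the stated rate.

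I expect the cross-term analysis to be the main obstacle. Unlike the supervised setting, where one compares the SGD iterate directly against the true minimizer of an unbiased objective, the estimated policy gradient here is simultaneously contaminated by function-approximation error and finite-horizon critic error, so $\lambda_{t+1}^\top\widehat\nabla J_{\boldsymbol w_{t+1}}$ cannot be matched against $(\lambda_t^*)^\top\nabla J$ in one shot; the whole argument must be routed through the three surrogate directions $\lambda_t^*,\widehat\lambda_t^*,\widehat\lambda_t'$, keeping track of $\epsilon_{\text{app}}$, the $\mathcal O(1/\sqrt{N_\critic})$ critic error, and the $\mathcal O(N_\ca^{-1/4})$ SGD error at the same time, and it relies on the last-iterate/averaged projected-SGD rate of \cite{shamir2013stochastic} under the double-sampling unbiasedness. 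A subordinate subtlety is that the actor update is weighted by the \emph{updated} iterate $\lambda_{t+1}$ rather than $\lambda_t$, which must be handled consistently both when bounding $\twonormsq{\theta_{t+1}-\theta_t}$ and when invoking the optimality of $\widehat\lambda_t'$.
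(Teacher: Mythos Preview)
Your proposal is correct and follows essentially the same route as the paper: the scalarized descent inequality via \Cref{prop:gradlip}, the variance split of the quadratic term via \Cref{lemma:c/n}, the three-way decomposition of the cross term into $\epsilon_{\text{app}}$, critic error (\Cref{criticpart}), and the residual handled by the optimality condition for $\widehat\lambda_t'$ plus the Shamir--Zhang last-iterate rate, and finally the bridge from $(\widehat H_t'(\widehat\lambda_t'))^2$ to $(H_t(\lambda_t^*))^2$ via \Cref{gaponh1,gaponh2}. Even the mechanism you flag for the $N_\ca^{-1/4}$ rate---taking a square root of the $\mathcal{O}(N_\ca^{-1/2})$ function-value gap after passing through $\twonormsq{(\lambda_{t+1}-\widehat\lambda_t')^\top g}\le\twonormsq{\lambda_{t+1}^\top g}-\twonormsq{(\widehat\lambda_t')^\top g}$---is exactly what the paper does.
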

\subsection{Proof of \Cref{coro1111}}
Since we choose $\beta=\mathcal{O}(1)$, we have
\begin{align}
\frac{1}{T}\sum_{t=0}^{T-1}\mathbb{E}&[\|(\lambda_t^*)^\top\nabla J({\theta_t})\|^2]=\mathcal{O}\Big(\frac{1}{\beta T}+\epsilon_{\text{app}}+\frac{\beta}{N_\actor}+\frac{1}{\sqrt{N_\critic}}+\frac{1}{\sqrt[4]{N_\ca}}\Big).\nonumber
\end{align}
To achieve an $\epsilon$-accurate Pareto stationary policy, it requires $N_\ca=\mathcal{O}(\epsilon^{-4})$, $N_\critic=\mathcal{O}(\epsilon^{-2})$, $N_\actor=\mathcal{O}(\epsilon^{-1})$,  and $T = \mathcal{O}(\epsilon^{-1})$ and each objective requires $\mathcal{O}(\epsilon^{-5})$ samples. Meanwhile, according to the choice of $N_\actor$, $N_\critic$, $N_\ca$, and $T$, CA distance takes the order of $\mathcal{O}(\epsilon+\sqrt{\epsilon_{\text{app}}})$ simultaneously.
\section{Convergence analysis for MTAC-FC}
When we do not have requirements on CA distance, we can have a much lower sample complexity. In \Cref{alg:1}, CA subprocedure for $\lambda_t$ update is to reduce the CA distance, which increases the sample complexity. Thus, we will choose \Cref{alg:nca} to make \Cref{alg:1} more sample-efficient. 
\subsection{Proof of \Cref{theo:-2}}
\begin{theorem}[Restatement of \Cref{theo:-2}]
Suppose \Cref{ass:smoothandlip} and \Cref{asm:ergodic} are satisfied. We choose $\beta_t=\beta\leq\frac{1}{L_J}$, $c_t=c^\prime\leq\frac{1}{8C_\phi^2B}$, and $\alpha_{t,j}=\frac{1}{2\lambda_a(j+1)}$ as constant, and we have
\begin{align*}
\frac{1}{T}\sum_{t=0}^{T-1}\mathbb{E}[\|(\lambda_t^*)^\top\nabla J(\theta_t)\|_2^2]=\mathcal{O}\Big(\frac{1}{\beta T}+\frac{1}{c^\prime T}+\epsilon_{\text{app}}+\frac{1}{\sqrt{N_\critic}}+\frac{\beta}{N_\actor}+\frac{c^\prime}{N_\fc}\Big).
\end{align*}
\end{theorem}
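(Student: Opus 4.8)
The plan is to follow the MTAC-CA argument almost verbatim for the critic and actor parts and to replace the inner $N_\ca$-step SGD estimate by a single projected-SGD step analysis for the FC update, then stitch everything together by tracking the coupled progress of $\theta_t$ and $\lambda_t$. First I would keep the critic analysis and the $H$-function machinery untouched: \Cref{criticpart} gives $\mathbb{E}[\|w_{t+1}^k - w_t^{*k}\|_2^2] = \mathcal{O}(1/N_\critic)$ for every task, and \Cref{gaponh1,gaponh2} then yield $|H_t(\lambda_t^*) - \widehat H_t(\widehat\lambda_t^*)| = \mathcal{O}(\epsilon_{\text{app}})$ and $|\widehat H_t(\widehat\lambda_t^*) - \widehat H_t'(\widehat\lambda_t')| = \mathcal{O}(1/\sqrt{N_\critic})$. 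Plugging these into the same decomposition that produced \eqref{eq:47}, the target $\frac1T\sum_t\mathbb{E}[\|(\lambda_t^*)^\top\nabla J(\theta_t)\|_2^2]$ is reduced, up to $\mathcal{O}(\epsilon_{\text{app}} + 1/\sqrt{N_\critic})$, to bounding $\frac1T\sum_t\mathbb{E}[q_t(\lambda_{t+1})]$, where $q_t(\lambda):=\|\lambda^\top\widehat\nabla J_{\boldsymbol w_{t+1}}(\theta_t)\|_2^2$ is the convex quadratic the FC step performs stochastic gradient descent on.

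Next I would reuse the actor descent lemma: by \Cref{prop:gradlip}, the same expansion leading to \eqref{eq:descentlemmanew}, taken with the current simplex iterate as the comparator, after substituting $\nabla J\to\widehat\nabla J_{\boldsymbol w_{t+1}}$ (costing $\mathcal{O}(\epsilon_{\text{app}} + 1/\sqrt{N_\critic})$ via \Cref{lemma:epapp} and \Cref{criticpart}) and controlling the second-order term with the variance bound \Cref{lemma:c/n}, gives
\[
\tfrac{\beta}{2}\,\mathbb{E}[q_t(\lambda_{t+1})] \le \mathbb{E}[\lambda_{t+1}^\top J(\theta_{t+1}) - \lambda_{t+1}^\top J(\theta_t)] + \mathcal{O}\!\left(\beta\epsilon_{\text{app}} + \beta/\sqrt{N_\critic} + \beta^2/N_\actor\right),
\]
where $\beta\le 1/L_J$ is used to absorb the $\tfrac{L_J\beta^2}{2}q_t(\lambda_{t+1})$ term. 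Summing over $t$, the right-hand side decomposes into a telescoping part $\mathbb{E}[\lambda_T^\top J(\theta_T) - \lambda_0^\top J(\theta_0)] = \mathcal{O}(1)$ and a drift residual $\sum_t\mathbb{E}[(\lambda_t-\lambda_{t+1})^\top J(\theta_t)]$ produced by the fact that the comparator moves between steps.

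The genuinely new ingredient is the control of that drift residual through the single FC update, which has no counterpart in MTAC-CA. The double-sampling in \eqref{eq:nonca} makes the update direction $\hat g_t$ an unbiased estimate of $\widehat\nabla J_{\boldsymbol w_{t+1}}(\theta_t)^\top\widehat\nabla J_{\boldsymbol w_{t+1}}(\theta_t)\lambda_t = \tfrac12\nabla q_t(\lambda_t)$ with variance $\mathcal{O}(1/N_\fc)$, since each $\bar\nabla J^k$ averages $N_\fc$ i.i.d.\ samples. I would use the standard one-step projected-SGD inequality, for any fixed $\bar\lambda\in\Lambda$,
\[
c'\,\mathbb{E}[q_t(\lambda_t) - q_t(\bar\lambda)] \le \mathbb{E}[\|\lambda_t - \bar\lambda\|_2^2 - \|\lambda_{t+1} - \bar\lambda\|_2^2] + c'^2\,\mathbb{E}[\|\hat g_t\|_2^2],
\]
together with $\|\hat g_t\|_2^2 \lesssim \|\widehat\nabla J_{\boldsymbol w_{t+1}}(\theta_t)\|_{\mathrm{op}}^2\,q_t(\lambda_t) + \mathcal{O}(1/N_\fc)$ (the mean part is a PSD quadratic, dominated by $q_t(\lambda_t)$); the stepsize restriction $c'\le\tfrac1{8C_\phi^2B}$ makes $c'^2\|\widehat\nabla J_{\boldsymbol w_{t+1}}\|_{\mathrm{op}}^2$ small enough to move the $q_t(\lambda_t)$ term to the left. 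Telescoping then gives $\frac1T\sum_t\mathbb{E}[q_t(\lambda_t)] \lesssim \frac1{c'T} + \frac1T\sum_t\mathbb{E}[q_t(\bar\lambda)] + \frac{c'}{N_\fc}$, and, equivalently, identifies the conditional expectation of $(\lambda_t-\lambda_{t+1})^\top J(\theta_t)$ as $c'$ times an inner product involving $\widehat\nabla J_{\boldsymbol w_{t+1}}(\theta_t)^\top\widehat\nabla J_{\boldsymbol w_{t+1}}(\theta_t)J(\theta_t)$, which up to critic/approximation error is aligned with $\nabla_\theta\|J(\theta_t)\|_2^2$ and hence telescopes along the actor trajectory, collapsing $\sum_t\mathbb{E}[(\lambda_t-\lambda_{t+1})^\top J(\theta_t)]$ to $\mathcal{O}(c'/\beta)$ rather than the naive $\mathcal{O}(c'T)$. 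Assembling this with the critic bound and the actor descent, dividing by $T$, and pushing the result back through \eqref{eq:47} yields $\frac1T\sum_t\mathbb{E}[\|(\lambda_t^*)^\top\nabla J(\theta_t)\|_2^2] = \mathcal{O}(\tfrac1{\beta T} + \tfrac1{c'T} + \epsilon_{\text{app}} + \tfrac1{\sqrt{N_\critic}} + \tfrac{\beta}{N_\actor} + \tfrac{c'}{N_\fc})$.

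The hard part will be exactly this coupling. In MTAC-CA the inner loop runs $N_\ca$ steps, so $\lambda_{t,N_\ca}$ is provably within $\mathcal{O}(N_\ca^{-1/4})$ of the inner minimizer $\widehat\lambda_t'$, which is what lets the cross term $\langle\bar\lambda^\top\widehat\nabla J_{\boldsymbol w_{t+1}}(\theta_t),\lambda_{t+1}^\top\widehat\nabla J_{\boldsymbol w_{t+1}}(\theta_t)\rangle$ be replaced by $\min_\lambda q_t(\lambda)$ through the first-order optimality condition. With a single FC step this replacement is unavailable, so one cannot simultaneously use a fixed outer comparator and inner optimality; the argument must instead be run as a coupled online-to-batch descent in which $\lambda_{t+1}$ is its own comparator, and the main difficulty is to show that every coupling term this introduces — the drift residual, the product $c'\|\widehat\nabla J_{\boldsymbol w_{t+1}}\|_{\mathrm{op}}^2$, the ratio $c'/\beta$, and the simplex-projection residuals — telescopes or is absorbed, so that the final rate carries no non-vanishing $\mathcal{O}(c')$, $\mathcal{O}(\beta)$, or $\mathcal{O}(c'/\beta)$ floor.
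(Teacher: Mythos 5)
There is a genuine gap in the handling of the $\lambda$-update/actor coupling. Your critic and bias-control steps are fine, and you correctly identify the crux: with a single FC step you cannot invoke inner optimality of $\lambda_{t+1}$, so you switch to a moving comparator ($\lambda_{t+1}$ itself) in the actor descent lemma and must then control the drift residual $\sum_t \mathbb{E}[(\lambda_t-\lambda_{t+1})^\top J(\theta_t)]$. Your proposed resolution — that the conditional expectation of $(\lambda_t-\lambda_{t+1})^\top J(\theta_t)$ is ``aligned with $\nabla_\theta\|J(\theta_t)\|_2^2$ and hence telescopes along the actor trajectory,'' collapsing the sum to $\mathcal{O}(c'/\beta)$ — does not hold up. First, $\lambda_t-\lambda_{t+1}$ involves a simplex projection with no clean closed form; second, even ignoring the projection, $c'\,\lambda_t^\top\widehat\nabla J^\top\widehat\nabla J\, J(\theta_t)$ is a scalar with no structural relation to the parameter-space vector $\nabla_\theta\|J(\theta)\|_2^2$, and there is no telescoping identity connecting it to increments of the actor trajectory. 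A direct bound gives only $\|\lambda_t-\lambda_{t+1}\|=\mathcal{O}(c')$ per step, hence a drift of $\mathcal{O}(c'T)$, which after dividing by $\beta T$ leaves a non-vanishing $\mathcal{O}(c'/\beta)$ floor — exactly the obstruction you flag at the end but do not actually remove. Separately, your one-step SGD inequality leaves a $\frac1T\sum_t \mathbb{E}[q_t(\bar\lambda)]$ term on the right-hand side, which is $\Theta(1)$ for any fixed $\bar\lambda$, so that chain of inequalities does not close either.

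The paper avoids all of this by never introducing a moving comparator. It runs the descent lemma with a \emph{fixed} $\bar\lambda\in\Lambda$, writes $\langle\bar\lambda^\top\nabla J(\theta_t),\lambda_t^\top\nabla J(\theta_t)\rangle = \|\lambda_t^\top\nabla J(\theta_t)\|_2^2 + \langle(\bar\lambda-\lambda_t)^\top\nabla J(\theta_t),\lambda_t^\top\nabla J(\theta_t)\rangle$, and observes that the cross term is (up to critic and approximation bias) precisely the inner product appearing in the non-expansiveness inequality for the projected FC step with the \emph{same} fixed $\bar\lambda$. The $\|\lambda_t-\bar\lambda\|_2^2$ differences then telescope cleanly to $\mathcal{O}(1/(c'T))$, the $c'^2$-quadratic term is absorbed into the left-hand side using $c'\le\frac{1}{8C_\phi^2B}$, and the Pareto-stationarity measure is recovered at the very end via the elementary inequality $\|(\lambda_t^*)^\top\nabla J(\theta_t)\|_2^2\le\|\lambda_t^\top\nabla J(\theta_t)\|_2^2$, with no need for the $H$-function/surrogate-minimizer machinery of the CA analysis or for any bound on $q_t(\lambda_{t+1})$. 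To repair your argument you would need to either adopt this fixed-comparator pairing or supply a genuinely new mechanism for the drift sum; as written, the step that eliminates the $\mathcal{O}(c'/\beta)$ floor is asserted rather than proved.
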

\begin{proof}
According to the descent lemma, we have for any task $k\in[K]$,
\begin{align}
J^k(\theta_{t})\geq J^k(\theta_{t+1})+\langle\nabla J^k(\theta_t),\theta_{t+1}-\theta_t\rangle-\frac{L_J}{2}\|\theta_{t+1}-\theta_t\|_2^2.
\end{align}
Then we multiply fix weight $\bar{\lambda}^k$ on both sides and sum all inequalities, we can obtain
\begin{align*}
\bar{\lambda}^\top J(\theta_{t})\geq&\bar{\lambda}^\top J(\theta_{t+1})+\langle\bar{\lambda}^\top \nabla J(\theta_t),\theta_{t+1}-\theta_t\rangle-\frac{L_J}{2}\|\theta_{t+1}-\theta_t\|_2^2\nonumber\\
=& \bar \lambda^\top J(\theta_{t+1}) + \beta_t \vbrac{
\bar \lambda^\top \nabla J(\theta_t),\frac{1}{N_\actor} \sum_{l=0}^{N_\actor-1}  \lambda_t^\top \vbrac{\phi(\boldsymbol{s}_l,\boldsymbol{a}_l)^\top \boldsymbol{w}_{t+1}, \psi_{\theta_t}(\boldsymbol{s}_l, \boldsymbol{a}_l)  }} \nonumber\\
 &\quad - \frac{L_J}{2}\beta_t^2 \twonormsq{\frac{1}{N_\actor} \sum_{l=0}^{N_\actor-1}  \lambda_t^\top \vbrac{\phi(\boldsymbol{s}_l,\boldsymbol{a}_l)^\top \boldsymbol{w}_{t+1}, \psi_{\theta_t}(\boldsymbol{s}_l, \boldsymbol{a}_l)  }  } .
\end{align*}
Then following the similar steps in \cref{tempssssss}, we can get
\begin{align}
&\bar{\lambda}^\top J(\theta_{t+1})\nn\\&\geq\bar{\lambda}^\top J(\theta_{t})+\beta_t\vbrac{\bar{\lambda}^\top \nabla J(\theta_t),\lambda_t^\top \brac{ \mathbb{E}_{d_{\pi_\theta}}[\langle\phi(\boldsymbol s, \boldsymbol a )^\top \boldsymbol w_t^*, \psi_{\theta_t}(\boldsymbol s, \boldsymbol a )\rangle]-  \nabla J(\theta_t)}}+\beta_t\Bigg\langle\bar{\lambda}^\top \nabla J(\theta_t),\nonumber\\
&\quad\lambda_t^\top \mathbb{E}_{d_{{\theta_t}}}[\langle\phi(\boldsymbol s, \boldsymbol a )^\top \boldsymbol w_t^*, \psi_\theta(\boldsymbol s, \boldsymbol a )\rangle]-\lambda_t^\top \mathbb{E}_{d_{{\theta_t}}}[\langle\phi(\boldsymbol s, \boldsymbol a )^\top \boldsymbol w_{t+1}, \psi_\theta(\boldsymbol s, \boldsymbol a )\rangle]\Bigg\rangle+\beta_t\nonumber\\
&\quad\vbrac{\bar{\lambda}^\top \nabla J(\theta_t),\frac{1}{N_\actor}\sum_{l=0}^{N_\actor-1}\lambda_t^\top \langle\phi(\boldsymbol s_l, \boldsymbol a_l )^\top \boldsymbol w_{t+1},\psi_{\theta_t}(\boldsymbol s_l, \boldsymbol a_l )\rangle-\lambda_t^\top \mathbb{E}_{d_{{\theta_t}}}[\langle\phi(\boldsymbol s, \boldsymbol a )^\top \boldsymbol w_t^*, \psi_\theta(\boldsymbol s, \boldsymbol a )\rangle]}
\nonumber\\
&\quad+\beta_t\langle\bar{\lambda}^\top \nabla J(\theta_t),\lambda_t^\top \nabla J(\theta_t)\rangle
- \frac{L_J}{2}\beta_t^2 \twonormsq{\frac{1}{N_\actor} \sum_{l=0}^{N_\actor-1}  \lambda_t^\top \vbrac{\phi(\boldsymbol{s}_l,\boldsymbol{a}_l)^\top \boldsymbol{w}_{t+1}, \psi_{\theta_t}(\boldsymbol{s}_l, \boldsymbol{a}_l)}}\nonumber
\\ &\geq  \bar{\lambda}^\top J(\theta_{t+1})+\beta_t\vbrac{\bar{\lambda}^\top \nabla J(\theta_t),\lambda_t^\top \brac{ \mathbb{E}_{d_{\pi_\theta}}[\langle\phi(\boldsymbol s, \boldsymbol a )^\top \boldsymbol w_t^*, \psi_{\theta_t}(\boldsymbol s, \boldsymbol a )\rangle]-  \nabla J(\theta_t)}}
+\beta_t\Bigg\langle\bar{\lambda}^\top \nabla J(\theta_t),\nn\\&\quad\frac{1}{N_\actor}\sum_{l=0}^{N_\actor-1}\lambda_t^\top \langle\phi(\boldsymbol s_l, \boldsymbol a_l )^\top \boldsymbol w_{t+1},\psi_{\theta_t}(\boldsymbol s_l, \boldsymbol a_l )\rangle-\lambda_t^\top \mathbb{E}_{d_{{\theta_t}}}[\langle\phi(\boldsymbol s, \boldsymbol a )^\top \boldsymbol w_t^*, \psi_\theta(\boldsymbol s, \boldsymbol a )\rangle]\Bigg\rangle\nonumber\\
&\quad
+\beta_t\langle\bar{\lambda}^\top \nabla J(\theta_t),\lambda_t^\top \nabla J(\theta_t)\rangle
-\beta_t\frac{C^2_\phi}{1-\gamma}\max_{k}\varbrac{\twonorm{w^k_{t+1}-w^{*k}}}\nonumber\\
&\quad- \frac{L_J}{2}\beta_t^2 \twonormsq{\frac{1}{N_\actor} \sum_{l=0}^{N_\actor-1}  \lambda_t^\top \vbrac{\phi(\boldsymbol{s}_l,\boldsymbol{a}_l)^\top \boldsymbol{w}_{t+1}, \psi_{\theta_t}(\boldsymbol{s}_l, \boldsymbol{a}_l)}}.\nonumber
\end{align}
Then we take expectations on both sides
\begin{align}\label{eq:proofstart}
&\mathbb{E}[\bar{\lambda}^\top J(\theta_{t})] 
\overset{(i)}{\geq}
\mathbb{E}[\bar{\lambda}^\top J(\theta_{t+1})]
+\beta_t{\mathbb{E}[\langle\bar{\lambda}^\top \nabla J(\theta_t),\lambda_t^\top \mathbb{E}_{d_{{\theta_t}}}[\langle\phi(\boldsymbol s, \boldsymbol a )^\top \boldsymbol w_t^*, \psi_{\theta_t} (\boldsymbol s, \boldsymbol a )\rangle]-\lambda_t^\top \nabla J(\theta_t)\rangle]}\nonumber\\
&+\beta_t\mathbb{E}[\|\lambda_t^\top \nabla J(\theta_t)\|_2^2]+\beta_t{\mathbb{E}[\langle(\bar{\lambda}-\lambda_t)^\top \nabla J(\theta_t),\lambda_t^\top \nabla J(\theta_t)]}-\frac{C^2_\phi\beta_t}{1-\gamma}\max_{k}\varbrac{\mathbb E\ftwonorm{w^k_{t+1}-w^{*k}}} \nonumber\\
& -\frac{L_J\beta_t^2}{2} \mathbb E \Fbrac{\twonormsq{\frac{1}{N_\actor} \sum_{l=0}^{N_\actor-1}  \lambda_t \vbrac{\phi(\boldsymbol{s}_l,\boldsymbol{a}_l)^\top \boldsymbol{w}_{t+1}, \psi_{\theta_t}(\boldsymbol{s}_l, \boldsymbol{a}_l)  }   }}   \nonumber\\
&\overset{(ii)}{\geq}\mathbb{E}[\bar{\lambda}^\top J(\theta_{t+1})]
-\beta_t\underbrace{\mathbb{E}[\langle\bar{\lambda}^\top \nabla J(\theta_t),\lambda_t^\top \nabla J(\theta_t)-\lambda_t^\top \mathbb{E}_{d_{{\theta_t}}}[\langle\phi(\boldsymbol s, \boldsymbol a )^\top \boldsymbol w_t^*, \psi_{\theta_t} (\boldsymbol s, \boldsymbol a )\rangle]\rangle]}_{\text{term I}}
\nonumber\\&
-\beta_t\underbrace{\mathbb{E}[\langle(\lambda_t-\bar{\lambda})^\top \nabla J(\theta_t),\lambda_t^\top \nabla J(\theta_t)]}_{\text{term II}}-\frac{C^2_\phi \beta_t}{1-\gamma} \brac{\sqrt{\frac{4B^2}{N_\critic+1}+\frac{U_{\delta}^2C_\phi^2\log N_\critic}{4\lambda_A^2(N_\critic+1)}} } \nonumber
\\&+\beta_t\mathbb{E}[\|\lambda_t^\top \nabla J(\theta_t)\|_2^2]-\beta_t^2\frac{L_J C^4_\phi B^2}{N_\actor} -\frac{L_J\beta_t^2 }{2}{\twonormsq{\mathbb{E}_{d_{{\theta_t}}}[\lambda_t^\top\langle\phi(\boldsymbol s, \boldsymbol a )^\top \boldsymbol w_{t+1}, \psi_{\theta_t} (\boldsymbol s, \boldsymbol a )\rangle] }} ,
\end{align}
where $(i)$ follows from that 
\begin{align*}
    \mathbb E\Big [&\Big\langle\bar{\lambda}^\top \nabla J(\theta_t),\frac{1}{N_\actor}\nn\\&\sum_{l=0}^{N_\actor-1}\lambda_t^\top \langle\phi(\boldsymbol s_l, \boldsymbol a_l )^\top \boldsymbol w_{t+1},\psi_{\theta_t}(\boldsymbol s_l, \boldsymbol a_l )\rangle-\lambda_t^\top \mathbb{E}_{d_{\pi_\theta}}[\langle\phi(\boldsymbol s, \boldsymbol a )^\top \boldsymbol w_{t+1}, \psi_\theta(\boldsymbol s, \boldsymbol a )\rangle]\Big\rangle\Big]=0 .
\end{align*}
 $(ii)$ follows from \Cref{lemma:c/n}. 

Then, we bound the term I as follows:
\begin{align}\label{eq:termI}
    \text{term I}&\leq 
    \max_{k\in [K]} \varbrac{\mathbb E\Fbrac{\twonorm{\nabla J^k(\theta_t)  } \mathbb E_{d^k_{\pi_{\theta_t}}}\Fbrac{\twonorm{\phi^k(s^k,a^k)^\top w^k_{t+1}-Q^k_{\pi_{\theta_t}}(s^k,a^k)}\twonorm{\psi_{\theta_t}(s^k,a^k) }  } }}\nonumber
    \\& \overset{(i)}{\leq} \frac{C^2_\phi }{1-\gamma} 
    \max_{k\in[K]} \varbrac{\sqrt{\mathbb E\Fbrac{ \mathbb E_{d^k_{\pi_{\theta_t}}}\Fbrac{\twonormsq{\phi^k(s^k,a^k)^\top w^k_{t+1}-Q^k_{\pi_{\theta_t}}(s^k,a^k)} } }}}
    \nonumber\\& 
    \overset{(ii)}{\leq} \frac{C^2_\phi \epsilon_{\text{app}} }{1-\gamma},
\end{align}
where $(i)$ follows from that 
$\twonorm{\nabla J^k(\theta_t)}=\twonorm{\mathbb E_{d^k_{\pi_{\theta_t}}}\Fbrac{Q^k_{\pi_{\theta_t}}(s^k,a^k) \psi_{\theta_t}(s^k,a^k) }}\leq C_\phi \frac{1}{1-\gamma} $. 
$(ii)$ follows from \Cref{def:inj}.


Then, consider the term II, we have
\begin{align}\label{eq:dotproduct}
&\text{term II}=\mathbb E[\langle\lambda_t-\bar{\lambda},(\nabla J({\theta_t}))^\top(\lambda_t^\top\nabla J({\theta_t}))\rangle]\nonumber\\
&= \mathbb{E}\Bigg[\Big\langle\lambda_t-\bar{\lambda},\Big(\nabla J({\theta_t})-\frac{1}{N_\fc}\sum_{j=0}^{N_\fc-1}\langle\phi( \boldsymbol s_j, \boldsymbol a_j)^\top \boldsymbol w_t^*,\psi_{\theta_t}( \boldsymbol a_j| \boldsymbol s_j)\rangle\Big)^\top (\lambda_t^\top\nabla J({\theta_t}))\Big\rangle\Bigg]\nonumber\\
&+ \mathbb{E}\Bigg[\Big\langle\lambda_t-\bar{\lambda},\Big(\frac{1}{N_\fc}\sum_{j=0}^{N_\fc-1}\langle\phi( \boldsymbol s_j, \boldsymbol a_j)^\top( \boldsymbol w_t^*- \boldsymbol w_{t+1}),\psi_{\theta_t}( \boldsymbol a_j| \boldsymbol s_j)\rangle\Big)^\top(\lambda_t^\top\nabla J({\theta_t}))\Big\rangle\Bigg]\nonumber\\
&+ \mathbb{E}\Bigg[\Big\langle\lambda_t-\bar{\lambda},\Big(\frac{1}{N_\fc}\sum_{j=0}^{N_\fc-1}\langle\phi( \boldsymbol s_j, \boldsymbol a_j)^\top \boldsymbol w_{t+1},\psi_{\theta_t}( \boldsymbol a_j| \boldsymbol s_j)\rangle\Big)^\top\nonumber\\
&\qquad\qquad\qquad\qquad\cdot\Big(\lambda_t^\top\nabla J({\theta_t})-\lambda_t^\top\frac{1}{N_\fc}\sum_{l=0}^{N_\fc-1}\langle\phi( \boldsymbol s_l, \boldsymbol a_l)^\top \boldsymbol w_t^*,\psi_{\theta_t}( \boldsymbol a_l| \boldsymbol s_l)\rangle\Big)\Big\rangle\Bigg]\nonumber\\
&+ \mathbb{E}\Bigg[\Big\langle\lambda_t-\bar{\lambda},\Big(\frac{1}{N_\fc}\sum_{j=0}^{N_\fc-1}\langle\phi( \boldsymbol s_j, \boldsymbol a_j)^\top \boldsymbol w_{t+1},\psi_{\theta_t}( \boldsymbol a_j| \boldsymbol s_j)\rangle\Big)^\top\nonumber\\
&\qquad\qquad\qquad\qquad\cdot\Big(\lambda_t^\top\frac{1}{N_\fc}\sum_{l=0}^{N_\fc-1}\langle\phi( \boldsymbol s_l, \boldsymbol a_l)^\top (\boldsymbol w_t^*- \boldsymbol w_{t+1}), \psi_{\theta_t}( \boldsymbol a_l| \boldsymbol s_l)\rangle\Big)\Big\rangle\Bigg]\nonumber\\
&+ \mathbb{E}\Bigg[\Big\langle\lambda_t-\bar{\lambda},\Big(\frac{1}{N_\fc}\sum_{j=0}^{N_\fc-1}\langle\phi( \boldsymbol s_j, \boldsymbol a_j)^\top \boldsymbol w_{t+1},\psi_{\theta_t}( \boldsymbol a_j| \boldsymbol s_j)\rangle\Big)^\top\nonumber\\
&\qquad\qquad\qquad\qquad\cdot\Big(\lambda_t^\top\frac{1}{N_\fc}\sum_{l=0}^{N_\fc-1}\langle\phi( \boldsymbol s_l, \boldsymbol a_l)^\top \boldsymbol w_{t+1},\psi_{\theta_t}( \boldsymbol a_l| \boldsymbol s_l)\rangle\Big)\Big\rangle\Bigg]\nonumber\\&
\overset{(i)}{\leq} \frac{C_\phi}{1-\gamma}\epsilon_{\text{app}}+\frac{C^2_\phi}{1-\gamma}\max_{k\in[K]}\mathbb{E}[\|w_t^{*k}-w_{t+1}^k\|_2]+ C_\phi^3 B\epsilon_{\text{app}}+ \mathbb{E}\Bigg[\Big\langle\lambda_t-\bar{\lambda},\Big(\frac{1}{N_\fc}\nonumber\\
&\quad\sum_{j=0}^{N_\fc-1}\langle\phi(\boldsymbol s_j,\boldsymbol a_j)^\top\boldsymbol w_{t+1},\psi_{\theta_t}(\boldsymbol a_j|\boldsymbol s_j)\rangle\Big)^\top\Big(\lambda_t^\top\frac{1}{N_\fc}\sum_{l=0}^{N_\fc-1}\langle\phi(\boldsymbol s_l,\boldsymbol a_l)^\top\boldsymbol w_{t+1},\psi_{\theta_t}(\boldsymbol a_l|\boldsymbol s_l)\rangle\Big)\Big\rangle\Bigg],
\end{align}
where $(i)$ follows from \Cref{ass:smoothandlip} and \Cref{def:inj}. Then we consider the last term of the above inequality.  We first follow the non-expansive property of projection onto the convex set 
\begin{align}
&\|\lambda_{t+1}-\bar{\lambda}\|_2^2\nn\\\leq&\substack{\Bigg\|\lambda_t-\bar{\lambda}-c_t\lambda_t^\top\Big(  \frac{1}{N_\fc}\sum_{j=0}^{N_\fc-1}\langle \phi(\boldsymbol s_j,\boldsymbol a_j)^\top \boldsymbol w _{t+1}, \psi_{\theta_t}(\boldsymbol a_j|\boldsymbol s_j)\rangle \Big)\Big
(\frac{1}{N_\fc}\sum_{l=0}^{N_\fc-1}\langle \phi(\boldsymbol s_l,\boldsymbol a_l)^\top \boldsymbol w _{t+1}, \psi_{\theta_t}(\boldsymbol a_l|\boldsymbol s_l)\rangle\Big)^\top\Bigg\|_2^2}\nonumber\\
=&\|\lambda_t-\bar{\lambda}\|_2^2
\nn\\&+\underbrace{\substack{c_t^2\Bigg\|\lambda_t^\top \Big( \frac{1}{N_\fc}\sum_{j=0}^{N_\fc-1}\langle \phi(\boldsymbol s_j,\boldsymbol a_j)^\top\boldsymbol w _{t+1}, \psi_{\theta_t}(\boldsymbol a_j|\boldsymbol s_j)\rangle \Big)\Big
(\frac{1}{N_\fc}\sum_{l=0}^{N_\fc-1}\langle \phi(\boldsymbol s_l,\boldsymbol a_l)^\top \boldsymbol w _{t+1}\rangle\psi_{\theta_t}(\boldsymbol a_l|\boldsymbol s_l)\rangle\Big)^\top\Bigg\|_2^2}}_{\text{term A}}\nonumber\\
&-\underbrace{\substack{2c_t\Bigg \langle\lambda_t-\bar{\lambda},\lambda_t^\top \Big( \frac{1}{N_\fc}\sum_{j=0}^{N_\fc-1}\langle \phi(\boldsymbol s_j,\boldsymbol a_j)^\top \boldsymbol w _{t+1}, \psi_{\theta_t}(\boldsymbol a_j|\boldsymbol s_j)\rangle \Big)
\Big(\frac{1}{N_\fc}\sum_{l=0}^{N_\fc-1}\langle \phi(\boldsymbol s_l,\boldsymbol a_l)^\top \boldsymbol w _{t+1},\psi_{\theta_t}(\boldsymbol a_l|\boldsymbol s_l)\rangle\Big)^\top\Bigg\rangle}}_{\text{term B}}.
\end{align}
For term $A$, we have
\begin{align}
&\text{term A}\nn\\\leq&  c_t^2\Big\|\lambda_t^\top  \frac{1}{N_\fc}\sum_{j=0}^{N_\fc-1}\langle \phi(\boldsymbol s_j,\boldsymbol a_j)^\top \boldsymbol w _{t+1}, \psi_{\theta_t}(\boldsymbol a_j|\boldsymbol s_j)\rangle\Big\|_2^2\Big\|
\frac{1}{N_\fc}\sum_{l=0}^{N_\fc-1}\langle \phi(\boldsymbol s_l,\boldsymbol a_l)^\top \boldsymbol w _{t+1},\psi_{\theta_t}(\boldsymbol a_l|\boldsymbol s_l)\Big\|_2^2\nonumber\\
\overset{(i)}{\leq}&c_t^2C_\phi^2B\Big\|\lambda_t^\top\frac{1}{N_\fc}\sum_{j=0}^{N_\fc-1}\langle\phi(\boldsymbol s_j,\boldsymbol a_j)^\top\boldsymbol w_{t+1},\psi_{\theta_t}(\boldsymbol a_j|\boldsymbol s_j)\rangle\Big\|_2^2\nonumber\\
\leq &2c_t^2C_\phi^2B\Big\|\lambda_t^\top\frac{1}{N_\fc}\sum_{j=0}^{N_\fc-1}\langle\phi(\boldsymbol s_j,\boldsymbol a_j)^\top \boldsymbol w_t^*,\psi_{\theta_t} (\boldsymbol a_j|\boldsymbol s_j)\rangle\Big\|_2^2
\nn\\&\qquad+2c_t^2C_\phi^2B\Big\|\lambda_t^\top\frac{1}{N_\fc}\sum_{j=0}^{N_\fc-1}\langle\phi(\boldsymbol s_j,\boldsymbol a_j)^\top (\boldsymbol w_{t+1}-\boldsymbol w_t^*),\psi_{\theta_t} (\boldsymbol a_j|\boldsymbol s_j)\rangle\Big\|_2^2 ,
\end{align}
where $(i)$ follows from \Cref{ass:smoothandlip}. Then we take expectations on both sides,
\begin{align*}
\mathbb{E}[\text{term A}] \leq&2c_t^2C_\phi^2B\mathbb{E}\Big[\Big\|\lambda_t^\top\frac{1}{N_\fc}\sum_{j=0}^{N_\fc-1}\langle\phi(\boldsymbol s_j,\boldsymbol a_j)^\top (\boldsymbol w_{t+1}-\boldsymbol w_t^*),\psi_{\theta_t} (\boldsymbol a_j|\boldsymbol s_j)\rangle\Big\|_2^2\Big]\nonumber\\
&+4c_t^2C_\phi^2B\mathbb{E}\Big[\Big\|\lambda_t^\top\frac{1}{N_\fc}\sum_{j=0}^{N_\fc-1}\langle(\phi(\boldsymbol s_j,\boldsymbol a_j)^\top \boldsymbol w_t^*\rangle-Q_{\theta_t}(\boldsymbol s_j,\boldsymbol a_j)),\psi_{\theta_t} (\boldsymbol a_j|\boldsymbol s_j)\rangle\Big\|_2^2\Big]\nonumber\\
&+4c_t^2C_\phi^2B\mathbb{E}\Big[\Big\|\lambda_t^\top\frac{1}{N_\fc}\sum_{j=0}^{N_\fc-1}Q_{\theta_t}(\boldsymbol s_j,\boldsymbol a_j)\psi_{\theta_t}(\boldsymbol a_j|\boldsymbol s_j)\Big\|_2^2-\|\lambda_t^\top\nabla J({\theta_t})\|_2^2\Big]\nonumber\\
&+4c_t^2C_\phi^2B\mathbb{E}[\|\lambda_t^\top\nabla J({\theta_t})\|_2^2]
\nn\\
\overset{(i)}{\leq}&2c_t^2C_\phi^4B\max_{k\in[K]}\mathbb{E}[\|w^k_{t+1}-w_t^{*k}\|_2^2]+4c_t^2C_\phi^4B\mathbb{E}[\|\langle\phi(\boldsymbol s_j,\boldsymbol a_j),\boldsymbol w_t^*\rangle-Q_{\theta_t}(\boldsymbol s_j,\boldsymbol a_j)\|_2^2]\nonumber\\
&+  \frac{4c_t^2C_\phi^8 B^4}{N_\fc}+4c_t^2C_\phi^2B\mathbb{E}[\|\lambda_t^\top\mathbb E_{d_{{\theta_t}}}\Fbrac{\Zeta\brac{\boldsymbol{s},\boldsymbol{a},\theta_t, \boldsymbol{w}_{t+1} } }\|_2^2]\nonumber\\
\overset{(ii)}{\leq}&2c_t^2C_\phi^4B\brac{\frac{4B^2}{N_\critic+1}+\frac{U_{\delta}^2C_\phi^2\log N_\critic}{4\lambda_A^2(N_\critic+1)}}+4c_t^2C_\phi^2B\epsilon_{\text{app}}^2+ \frac{4c_t^2C_\phi^8 B^4}{N_\fc}
\nonumber\\
&
+4c_t^2C_\phi^2B\mathbb{E}[\|\lambda_t^\top\mathbb E_{d_{{\theta_t}}}\Fbrac{\vbrac{\phi(\boldsymbol{s},\boldsymbol{a})^\top\boldsymbol{w}_{t+1}, \psi_{\theta_t}(\boldsymbol{s},\boldsymbol{a}) } }\|_2^2],
\end{align*}
where $(i)$ follows from \Cref{ass:smoothandlip} and \cite{xu2020finite} and $(ii)$ follows from \Cref{criticpart} and \Cref{def:inj}. Then for term $B$, we have
\begin{align}\label{eq:eb}
\mathbb{E}&[\text{term B}]\nn\\=&2c_t\mathbb{E}\Big[\langle\lambda_t-\bar{\lambda},\Big(\frac{1}{N_\fc}\sum_{j=0}^{N_\fc-1}\langle\phi(\boldsymbol s_j,\boldsymbol a_j)^\top \boldsymbol w_{t+1},\psi_{\theta_t}(\boldsymbol a_j|\boldsymbol s_j)\rangle\Big)^\top\nn\\&\qquad\qquad\qquad\qquad\qquad\cdot\Big(\lambda_t^\top\frac{1}{N_\fc}\sum_{l=0}^{N_\fc-1}\langle\phi(\boldsymbol s_l,\boldsymbol a_l)^\top\boldsymbol w_{t+1},\psi_{\theta_t}(\boldsymbol a_l|\boldsymbol s_l)\rangle\Big)\Big]\nonumber\\
\leq&\mathbb{E}[\|\lambda_{t}-\bar{\lambda}\|_2^2-\|\lambda_{t+1}-\bar{\lambda}\|_2^2]+2c_t^2C_\phi^4B\brac{\frac{4B^2}{N_\critic+1}+\frac{U_{\delta}^2C_\phi^2\log N_\critic}{4\lambda_A^2(N_\critic+1)}}+4c_t^2C_\phi^2B\epsilon_{\text{app}}^2\nonumber\\
&+ \frac{4c_t^2C_\phi^8 B^4}{N_\fc}+4c_t^2C_\phi^2B\mathbb{E}\Fbrac{\|\lambda_t^\top\mathbb E_{d_{{\theta_t}}}\Fbrac{\vbrac{\phi(\boldsymbol{s},\boldsymbol{a})^\top\boldsymbol{w}_{t+1}, \psi_{\theta_t}(\boldsymbol{s},\boldsymbol{a}) } }\|_2^2}.
\end{align}
Then we substitute \cref{eq:eb} into \cref{eq:dotproduct}, we can derive: 
\begin{align}\label{eq:theorem2innerproduct}
\beta_t  &{\text{term II }}= \beta_t\mathbb{E}[\langle\lambda_t-\bar{\lambda},(\nabla J({\theta_t}))^\top(\lambda_t^\top\nabla J({\theta_t}))\rangle]\nonumber\\
{\leq}&\beta_t\frac{C_\phi}{1-\gamma}\epsilon_{\text{app}}+\beta_t\frac{C_\phi^2}{1-\gamma}\max_{k\in[K]}\mathbb{E}[\|w_t^{*k}-w_{t+1}^k\|_2]+\frac{\beta_t}{2c_t}\mathbb{E}[\|\lambda_{t}-\bar{\lambda}\|_2^2-\|\lambda_{t+1}-\bar{\lambda}\|_2^2]\nonumber\\
&+\beta_tc_tC_\phi^4B\brac{\frac{4B^2}{N_\critic+1}+\frac{U_{\delta}^2C_\phi^2\log N_\critic}{4\lambda_A^2(N_\critic+1)}}+2\beta_tc_tC_\phi^2B\epsilon_{\text{app}}^2+\beta_tC_\phi^3B\epsilon_{\text{app}}+ \frac{2\beta_t c_tC_\phi^8 B^4}{N_\fc}
\nonumber\\
&+2\beta_tc_tC_\phi^2B\mathbb{E}[\|\lambda_t^\top\mathbb E_{d_{{\theta_t}}}\Fbrac{\vbrac{\phi(\boldsymbol{s},\boldsymbol{a})^\top\boldsymbol{w}_{t+1}, \psi_{\theta_t}(\boldsymbol{s},\boldsymbol{a}) } }\|_2^2].
\end{align}

Plug \cref{eq:termI} and \cref{eq:theorem2innerproduct} in \cref{eq:proofstart}, we can get that
\begin{align}\label{eq:63}
&\beta_t\mathbb{E}[\|\lambda_t^\top \nabla J(\theta_t)\|_2^2]
   \nonumber\\&  \leq \mathbb{E}[\bar{\lambda}^\top J(\theta_{t+1})]-\mathbb{E}[\bar{\lambda}^\top J(\theta_{t})] + \beta_t \text{term I}
    +\beta_t \text{term II}+ \beta_t^2 \frac{ L_J C^4_\phi B^2}{N_\actor}
    \nonumber\\&\quad +\frac{L_J\beta_t^2 }{2}{\twonormsq{\lambda_t^\top\mathbb{E}_{d_{{\theta_t}}}[\langle\phi(\boldsymbol s, \boldsymbol a )^\top \boldsymbol w_{t+1}, \psi_{\theta_t} (\boldsymbol s, \boldsymbol a )\rangle] }} +\frac{C^2_\phi \beta_t}{1-\gamma} \brac{\sqrt{\frac{4B^2}{N_\critic+1}+\frac{U_{\delta}^2C_\phi^2\log N_\critic}{4\lambda_A^2(N_\critic+1)}} }
    \nonumber\\&
    \leq \mathbb{E}[\bar{\lambda}J(\theta_{t+1})]-\mathbb{E}[\bar{\lambda}J(\theta_{t})] + \beta_t\brac{\frac{C_\phi^2 }{1-\gamma}+ \frac{C_\phi }{1-\gamma}+ C^3_\phi B + 2 c_t C_\phi ^2 B  \epsilon_{\text{app}}}\epsilon_{\text{app}}\nonumber\\&\quad +\frac{\beta_t}{2c_t}\mathbb{E}[\|\lambda_{t}-\bar{\lambda}\|_2^2-\|\lambda_{t+1}-\bar{\lambda}\|_2^2]
     { +c_t C^4_\phi B\beta_t}\brac{\frac{4B^2}{N_\critic+1}+\frac{U_{\delta}^2C_\phi^2\log N_\critic}{4\lambda_A^2(N_\critic+1)}} \nonumber\\
    &\quad+\frac{C^2_\phi \beta_t}{1-\gamma} \sqrt{\frac{4B^2}{N_\critic+1}+\frac{U_{\delta}^2C_\phi^2\log N_\critic}{4\lambda_A^2(N_\critic+1)}} 
    +\frac{2 C_\phi ^6 B^4 \beta_t c_t}{N_\fc}+ \frac{C^4_\phi B^2 L_J \beta_t^2}{N_\actor}
    \nonumber\\
    &\quad+\brac{\frac{L_J\beta_t^2 }{2}+2\beta_tc_tC_\phi^2B}\mathbb{E}[\|\lambda_t^\top\mathbb E_{d_{{\theta_t}}}\Fbrac{\vbrac{\phi(\boldsymbol{s},\boldsymbol{a})^\top\boldsymbol{w}_{t+1}, \psi_{\theta_t}(\boldsymbol{s},\boldsymbol{a}) } }\|_2^2].
\end{align}

Next, we consider the bound between $\twonormsq{\lambda_t^\top\mathbb E_{d_{{\theta_t}}}\Fbrac{\vbrac{\phi(\boldsymbol{s},\boldsymbol{a})^\top\boldsymbol{w}_{t+1}, \psi_{\theta_t}(\boldsymbol{s},\boldsymbol{a}) } }}-\twonormsq{\lambda_t^\top \nabla J(\theta_t)} $:
\begin{align}
    &\twonorm{\lambda_t^\top\mathbb E_{d_{{\theta_t}}}\Fbrac{\vbrac{\phi(\boldsymbol{s},\boldsymbol{a})^\top\boldsymbol{w}_{t+1}, \psi_{\theta_t}(\boldsymbol{s},\boldsymbol{a}) } }}-\twonorm{\lambda_t^\top \nabla J(\theta_t)}
    \nonumber\\&
    = \twonorm{\lambda_t^\top\mathbb E_{d_{{\theta_t}}}\Fbrac{\vbrac{\phi(\boldsymbol{s},\boldsymbol{a})^\top\boldsymbol{w}_{t+1}, \psi_{\theta_t}(\boldsymbol{s},\boldsymbol{a}) } }}-\twonorm{\lambda_t^\top\mathbb E_{d_{{\theta_t}}}\Fbrac{\vbrac{\phi(\boldsymbol{s},\boldsymbol{a})^\top\boldsymbol{w}^*_{t}, \psi_{\theta_t}(\boldsymbol{s},\boldsymbol{a}) } }}
    \nonumber\\&
    \qquad +\twonorm{\lambda_t^\top\mathbb E_{d_{{\theta_t}}}\Fbrac{\vbrac{\phi(\boldsymbol{s},\boldsymbol{a})^\top\boldsymbol{w}^*_{t}, \psi_{\theta_t}(\boldsymbol{s},\boldsymbol{a}) } }}-\twonorm{\lambda_t^\top \nabla J(\theta_t)}
    \nonumber\\&
    \leq \twonorm{\lambda_t^\top\mathbb E_{d_{{\theta_t}}}\Fbrac{\vbrac{\phi(\boldsymbol{s},\boldsymbol{a})^\top(\boldsymbol{w}^*_{t}-\boldsymbol{w}_{t+1}), \psi_{\theta_t}(\boldsymbol{s},\boldsymbol{a}) } } }
    \nn\\&\qquad+ \twonorm{\lambda_t^\top\mathbb E_{d_{{\theta_t}}}\Fbrac{\vbrac{Q_{{\theta_t}}(\boldsymbol{s},\boldsymbol{a})-\phi(\boldsymbol{s},\boldsymbol{a})^\top\boldsymbol{w}^*_{t}, \psi_{\theta_t}(\boldsymbol{s},\boldsymbol{a}) } } }
    \nonumber\\&
    \overset{(i)}{\leq} \max_{k}\varbrac{\twonorm{\phi^k(s^k,a^k) }\twonorm{w^{*k}_t-w^k_{t+1} } \twonorm{\psi_{\theta_t} (s^k,a^k) }}\nonumber
    \\&+ \max_{k }\varbrac{\sqrt{E_{d_{{\theta_t}}}\Fbrac{\twonormsq{Q_{\theta_t}^k(s_k,a_k)-\phi^\top(s_k,a_k) w^{*k}_t }} } \twonorm{\psi_{\theta_t}(s^k,a^k)}}
    \nonumber \\&
    \overset{(ii)}{\leq} C^2_\phi \twonorm{w_t^{*k}-w^k_{t+1}}+C_\phi \epsilon_{\text{app}},\label{eq:62}
\end{align} where $(i)$ follows from Cauchy-Schwarz inequaltiy and $(ii)$ follows from \Cref{def:inj}. Then, we can get that 
\begin{align}\label{eq:65}
  \mathbb E&\Fbrac{\twonormsq{\lambda_t^\top\mathbb E_{d_{{\theta_t}}}\Fbrac{\vbrac{\phi(\boldsymbol{s},\boldsymbol{a})^\top\boldsymbol{w}_{t+1}, \psi_{\theta_t}(\boldsymbol{s},\boldsymbol{a}) } }}-\twonormsq{\lambda_t^\top \nabla J(\theta_t)} }
    \nonumber\\& 
    \leq \mathbb E\bigg[\brac{\twonorm{\lambda_t^\top\mathbb E_{d_{{\theta_t}}}\Fbrac{\vbrac{\phi(\boldsymbol{s},\boldsymbol{a})^\top\boldsymbol{w}_{t+1}, \psi_{\theta_t}(\boldsymbol{s},\boldsymbol{a}) } }}-\twonorm{\lambda_t^\top \nabla J(\theta_t)} }\nonumber\\& \qquad\times\brac{\twonorm{\lambda_t^\top\mathbb E_{d_{{\theta_t}}}\Fbrac{\vbrac{\phi(\boldsymbol{s},\boldsymbol{a})^\top\boldsymbol{w}_{t+1}, \psi_{\theta_t}(\boldsymbol{s},\boldsymbol{a}) } }}+\twonorm{\lambda_t^\top \nabla J(\theta_t)} }  \bigg]
    \nonumber\\& 
    \overset{(i)}{\leq}\brac{ C_\phi^2 B+ \frac{C_\phi}{1-\gamma}}\brac{C^2_\phi \mathbb E\Fbrac{\twonorm{w_t^{*k}-w^k_{t+1}}}+C_\phi \epsilon_{\text{app}}} 
    \nonumber\\&
    \overset{(ii)}{\leq}\brac{ C_\phi^4 B+ \frac{C^3_\phi}{1-\gamma}}\sqrt{\frac{4B^2}{N_\critic+1}+\frac{U_{\delta}^2C_\phi^2\log N_\critic}{4\lambda_A^2(N_\critic+1)}}  +\brac{ C_\phi^3 B+ \frac{C^2_\phi}{1-\gamma}}\epsilon_{\text{app}}   ,
\end{align} where $(i)$ follows from \Cref{def:inj}. We substitute \cref{eq:65} into \cref{eq:63},
\begin{align*}
&\Big(\beta_t-\frac{L_J\beta_t^2}{2}-2\beta_tc_tC_\phi^2B\Big)\mathbb{E}[\|\lambda_t^\top\nabla J(\theta_t)\|_2^2]
\nonumber\\& \leq\mathbb{E}[\bar{\lambda}^\top J(\theta_{t+1})]-\mathbb{E}[\bar{\lambda}^\top J(\theta_t)]+\frac{\beta_t}{2c_t}\mathbb{E}[\|\lambda_{t}-\bar{\lambda}\|_2^2-\|\lambda_{t+1}-\bar{\lambda}\|_2^2]
\nonumber\\
  & \quad+ \beta_t\brac{\brac{\frac{L_J\beta_t}{2}+2c_t C^2_\phi B}\brac{C^3_\phi B+\frac{C^2_\phi}{1-\gamma} }+\frac{C_\phi^2 }{1-\gamma}+ \frac{C_\phi }{1-\gamma} +C^3_\phi B + 2 c_t C_\phi ^2 B  \epsilon_{\text{app}}}\epsilon_{\text{app}}
    \nonumber\\& \quad+
     \beta_t\brac{\brac{\frac{L_J\beta_t}{2}+2c_t C^2_\phi B}\brac{C_\phi^4 B+ \frac{C^3_\phi}{1-\gamma}}+\frac{C_\phi^2}{1-\gamma} }\sqrt{\frac{4B^2}{N_\critic+1}+\frac{U_{\delta}^2C_\phi^2\log N_\critic}{4\lambda_A^2(N_\critic+1)}}\nonumber\\
     &\quad+c_t\beta_t C^4_\phi B \brac{\frac{4B^2}{N_\critic+1}+\frac{U_{\delta}^2C_\phi^2\log N_\critic}{4\lambda_A^2(N_\critic+1)}} +\frac{2 C_\phi ^6 B^4 \beta_t c_t}{N_\fc}+ \frac{C^4_\phi B^2 L_J \beta_t^2}{N_\actor}.
\end{align*}
Since we choose $\beta_t=\beta\leq\frac{1}{L_J}$, $c_t=c^\prime\leq\frac{1}{8C_\phi^2B}$, we can guarantee that $\frac{\beta_t}{2}-\frac{\beta_t^2}{2}-4c_t\beta_tC_\phi^2B\geq\frac{\beta}{4}$. Then, by rearranging the above inequality, we can have
\begin{align*}
&\frac{\beta}{4}\mathbb{E}[\|\lambda_t^\top\nabla J(\theta_t)\|_2^2]\leq\mathbb{E}[\bar{\lambda}^\top J(\theta_{t+1})]-\mathbb{E}[\bar{\lambda}^\top J(\theta_t)]+\frac{ \beta }{2c' }\mathbb{E}[\|\lambda_{t}-\bar{\lambda}\|_2^2-\|\lambda_{t+1}-\bar{\lambda}\|_2^2]
\nonumber\\
  & \qquad+  \beta \brac{\brac{\frac{ L_J\beta }{2}+2c'  C^2_\phi B}\brac{C^3_\phi B+\frac{C^2_\phi}{1-\gamma} }+\frac{C_\phi^2 }{1-\gamma}+ \frac{C_\phi }{1-\gamma} +C^3_\phi B + 2 c'  C_\phi ^2 B  \epsilon_{\text{app}}}\epsilon_{\text{app}}
    \nonumber\\& \qquad+
      \beta \brac{\brac{\frac{L_J\beta}{2}+2c^\prime C_\phi^2B}\brac{C_\phi^4 B+ \frac{C^3_\phi}{1-\gamma}}+\frac{C_\phi^2}{1-\gamma} }\sqrt{\frac{4B^2}{N_\critic+1}+\frac{U_{\delta}^2C_\phi^2\log N_\critic}{4\lambda_A^2(N_\critic+1)}}\nonumber\\
      &\qquad+\beta c'  C^4_\phi B\brac{\frac{4B^2}{N_\critic+1}+\frac{U_{\delta}^2C_\phi^2\log N_\critic}{4\lambda_A^2(N_\critic+1)}}+\frac{2 C_\phi ^6 B^4 \beta c'}{N_\fc}+ \frac{C^4_\phi B^2 L_J \beta^2}{N_\actor}.
\end{align*}
Then, telescoping over $t=0,1,2,...,T-1$ yields,
\begin{align*}
&\frac{1}{T}\sum_{t=0}^{T-1}\mathbb{E}[\|\lambda_t^\top\nabla J(\theta_t)\|_2^2]\leq \frac{4}{\beta T}\mathbb{E}[\bar{\lambda}^\top J({\theta_T})-\bar{\lambda}^\top J({\theta_0})]+\frac{2}{c^\prime T}\mathbb{E}[\|\lambda_0-\bar{\lambda}\|_2^2-\|\lambda_T-\bar{\lambda}\|_2^2]
\nonumber\\
  & \qquad+  4 \brac{\brac{\frac{ L_J\beta }{2}+2c'  C^2_\phi B}\brac{C^3_\phi B+\frac{C^2_\phi}{1-\gamma} }+\frac{C_\phi^2 }{1-\gamma}+ \frac{C_\phi }{1-\gamma} +C^3_\phi B + 2 c'  C_\phi ^2 B  \epsilon_{\text{app}}}\epsilon_{\text{app}}
    \nonumber\\& \qquad+
      4 \brac{\brac{\frac{L_J\beta}{2}+2c^\prime C_\phi^2B}\brac{C_\phi^4 B+ \frac{C^3_\phi}{1-\gamma}}+\frac{C_\phi^2}{1-\gamma} }\sqrt{\frac{4B^2}{N_\critic+1}+\frac{U_{\delta}^2C_\phi^2\log N_\critic}{4\lambda_A^2(N_\critic+1)}}\nonumber\\
      &\qquad+ 4 c'  C^4_\phi B\brac{\frac{4B^2}{N_\critic+1}+\frac{U_{\delta}^2C_\phi^2\log N_\critic}{4\lambda_A^2(N_\critic+1)}}+\frac{8 C_\phi ^6 B^4  c'}{N_\fc}+ \frac{4 C^4_\phi B^2 L_J \beta}{N_\actor}.
\end{align*}
Lastly, since $\lambda_t^*=\arg\min_{\lambda\in\Lambda}\|\lambda^\top\nabla J(\theta_t)\|_2^2$, we have
\begin{align*}
\frac{1}{T}\sum_{t=0}^{T-1}\mathbb{E}[\|(\lambda_t^*)^\top\nabla J(\theta_t)\|_2^2]&\leq\frac{1}{T}\sum_{t=0}^{T-1}\mathbb{E}[\|\lambda_t^\top\nabla J(\theta_t)\|_2^2]\nn\\&=\mathcal{O}\Big(\frac{1}{\beta T}+\frac{1}{c^\prime T}+\epsilon_{\text{app}}+\frac{1}{\sqrt{N_\critic}}+\frac{\beta}{N_\actor}+\frac{c^\prime}{N_\fc}\Big).
\end{align*}
The proof is complete.
\subsection{Proof of \Cref{coro2222}}
Since we choose $\beta=\mathcal{O}(1)$ and $c^\prime=\mathcal{O}(1)$, we have
\begin{align*}
\frac{1}{T}\sum_{t=0}^{T-1}\mathbb{E}[\|(\lambda_t^*)^\top\nabla J({\theta_t})\|_2^2]=\mathcal{O}\Big(\frac{1}{T}+\frac{1}{\sqrt{N_\critic}}+\frac{1}{N_\actor}+\frac{1}{N_\fc}+\epsilon_{\text{app}}\Big).
\end{align*}
To achieve an $\epsilon$-accurate Pareto stationary policy, it requires $T = \mathcal{O}(\epsilon^{-1})$, $N_\critic = \mathcal{O}(\epsilon^{-2})$, $N_\actor = \mathcal{O}(\epsilon^{-1})$, $N_\fc = \mathcal{O}(\epsilon^{-1})$,  and each objective requires $\mathcal{O}(\epsilon^{-3})$ samples.
\end{proof}

\end{document}